\def\given{\,|\,}
\def\tr{\mathop{\text{tr}}\kern.2ex}
\long\def\comment#1{}
\def\tr{\mathop{\text{Tr}}}
\def\cS{{\mathcal{S}}}
\def\cX{{\mathcal{X}}}
\def\cP{{\mathcal{P}}}
\def\cT{{\mathcal{T}}}
\def\tr{{\text{Tr}}}
\newcommand{\bel}{\begin{eqnarray}\label}
\newcommand{\eel}{\end{eqnarray}}
\newcommand{\bes}{\begin{eqnarray*}}
\newcommand{\ees}{\end{eqnarray*}}
\newcommand{\rd}{{\mathrm{d}}}
\newcommand{\iid}{{\text{i.i.d.}}}
\newcommand{\unif}{{\text{Unif}}}
\newcommand{\kl}{{\mathrm{KL}}}
\newcommand{\rE}{{\text{{\tiny E}}}}
\newcommand{\TD}{{\mathrm{TD} } }
\newcommand{\init}{{\text{init}}}
\newcommand{\proj}{\text{Proj}}
\newcommand{\err}{{\epsilon}}
\newcommand{\phic}{\iota}
\newcommand{\DD}{\mathbb{D}}
\def\td{{\mathrm{TD}}}
\DeclarePairedDelimiterX{\inp}[2]{\langle}{\rangle}{#1, #2}
\DeclarePairedDelimiterX{\norm}[1]{\|}{\|}{#1}
\def\##1\#{\begin{align}#1\end{align}}
\def\$#1\${\begin{align*}#1\end{align*}}
\title{Generative Adversarial Imitation Learning with Neural Networks: Global Optimality and Convergence Rate}
\author
{
	\normalsize Yufeng Zhang\thanks{Northwestern University; \texttt{yufengzhang2023@u.northwestern.edu}}
	\qquad
	\normalsize Qi Cai~\thanks{Northwestern University; \texttt{qicai2022@u.northwestern.edu}}
	\qquad
	\normalsize Zhuoran Yang\thanks{Princeton University; \texttt{zy6@princeton.edu}}
	\qquad
	\normalsize Zhaoran Wang\thanks{Northwestern University; \texttt{zhaoranwang@gmail.com}}
}
\date{}
\begin{document}	
\maketitle


\begin{abstract}
	Generative adversarial imitation learning (GAIL) demonstrates tremendous success in practice, especially when combined with neural networks. Different from reinforcement learning, GAIL learns both policy and reward function from expert (human) demonstration. Despite its empirical success, it remains unclear whether GAIL with neural networks converges to the globally optimal solution. The major difficulty comes from the nonconvex-nonconcave minimax optimization structure. To bridge the gap between practice and theory, we analyze a gradient-based algorithm with alternating updates and establish its sublinear convergence to the globally optimal solution. To the best of our knowledge, our analysis establishes the global optimality and convergence rate of GAIL with neural networks for the first time.
\end{abstract}

\section{Introduction}
The goal of imitation learning (IL) is to learn to perform a task based on expert demonstration \citep{ho2016generative}. In contrast to reinforcement learning (RL), the agent only has access to the expert trajectories but not the rewards. The most straightforward approach of IL is behavioral cloning (BC) \citep{pomerleau1991efficient}. BC treats IL as the supervised learning problem of predicting the actions based on the states. Despite its simplicity, BC suffers from the compounding errors caused by covariate shift \citep{ross2011reduction, ross2010efficient}. Another approach of IL is inverse reinforcement learning (IRL) \citep{russell1998learning, ng2000algorithms, levine2012continuous, finn2016guided}, which jointly learns the reward function and the corresponding optimal policy. IRL formulates IL as a bilevel optimization problem. Specifically, IRL solves an RL subproblem given a reward function at the inner level and searches for the reward function which makes the expert policy optimal at the outer level. However, IRL is computationally inefficient as it requires fully solving an RL subproblem at each iteration of the outer level. Moreover, the desired reward function may be nonunique. To address such issues of IRL, \cite{ho2016generative} propose generative adversarial imitation learning (GAIL), which searches for the optimal policy without fully solving an RL subproblem given a reward function at each iteration. GAIL solves IL via minimax optimization with alternating updates. In particular, GAIL alternates between (i) minimizing the discrepancy in expected cumulative reward between the expert policy and the learned policy and (ii) maximizing such a discrepancy over the reward function class. Such an alternating update scheme mirrors the training of generative adversarial networks (GANs) \citep{goodfellow2014generative, arjovsky2017wasserstein}, where the learned policy acts as the generator while the reward function acts as the discriminator.

Incorporated with neural networks, which parameterize the learned policy and the reward function, GAIL achieves significant empirical success in challenging applications, such as natural language processing \citep{yu2016seqgan}, autonomous driving \citep{kuefler2017imitating}, human behavior modeling \citep{merel2017learning}, and robotics \citep{tai2018socially}. 
Despite its empirical success, GAIL with neural networks remains less understood in theory. The major difficulty arises from the following aspects:
(i) GAIL involves minimax optimization, while the existing analysis of policy optimization with neural networks \citep{anthony2009neural,  liu2019neural, bh2019global, wang2019neural} only focuses on a minimization or maximization problem.
(ii) GAIL with neural networks is nonconvex-nonconcave, and therefore, the existing analysis of convex-concave optimization with alternating updates is not applicable \citep{nesterov2013introductory}. There is an emerging body of literature \citep{rafi2018noncon, zhang2019policy} that casts nonconvex-nonconcave optimization as bilevel optimization, where the inner level is solved to a high precision as in IRL. However, such analysis is not applicable to GAIL as it involves alternating updates.

In this paper, we bridge the gap between practice and theory by establishing the global optimality and convergence of GAIL with neural networks.  Specifically, we parameterize the learned policy and the reward function with two-layer neural networks and consider solving GAIL by alternatively updating the learned policy via a step of natural policy gradient \citep{kakade2002natural, peters2008natural} and the reward function via a step of gradient ascent.
In particular, we parameterize the state-action value function (also known as the Q-function) with a two-layer neural network and apply a variant of the temporal difference algorithm \citep{sutton2018reinforcement} to solve the policy evaluation subproblem in natural policy gradient.   
We prove that the learned policy $\bar\pi$ converges to the expert policy $\pi_\rE$ at a $1/\sqrt{T}$ rate in the $\cR$-distance \citep{chen2020computation}, which is defined as
$
\DD_{\cR}(\pi_\rE, \bar \pi) = \max_{r\in\cR} J(\pi_\rE; r) - J(\bar \pi; r). 
$
Here $J(\pi; r)$ is the expected cumulative reward of a policy $\pi$ given a reward function $r(s, a)$ and $\cR$ is the reward function class.
The core of our analysis is constructing a potential function that tracks the $\cR$-distance. Such a rate of convergence implies that the learned policy $\bar \pi$ (approximately)  outperforms the expert policy $\pi_\rE$ given any reward function $r\in \cR$ within a finite number of iterations $T$. In other words, the learned policy $\bar \pi$ is globally optimal.
To the best of our knowledge, our analysis establishes the global optimality and convergence of GAIL with neural networks for the first time. It is worth mentioning that  our analysis is straightforwardly applicable to linear and tabular settings, which, however, are not our focus.

\vskip4pt

\noindent{\bf Related works.} Our work extends an emerging body of literature on RL with neural networks \citep{xu2019improved, zhang2019global, bh2019global, liu2019neural, wang2019neural,  agarwal2019optimality} to IL. This line of research analyzes the global optimality and convergence of policy gradient for solving RL, which is a minimization or maximization problem. In contrast, we analyze GAIL, which is a minimax optimization problem.

Our work is also related to the analysis of apprenticeship learning \citep{syed2008apprenticeship} and GAIL \citep{cai2019global, chen2020computation}. \cite{syed2008apprenticeship} analyze the convergence and generalization of apprenticeship learning. They assume the state space to be finite, and thus, do not require function approximation for the policy and the reward function. In contrast, we assume the state space to be infinite and employ function approximation based on neural networks.
\cite{cai2019global} study the global optimality and convergence of GAIL in the setting of linear-quadratic regulators. In contrast, our analysis handles general MDPs without restrictive assumptions on the transition kernel and the reward function.
\cite{chen2020computation} study the convergence and generalization of GAIL in the setting of general MDPs. However, they only establish the convergence to a stationary point. In contrast, we establish the global optimality of GAIL.

%
%
\vskip4pt

\noindent{\bf Notations.} Let $[n] = \{1, \ldots, n\}$ for $n\in\NN_{+}$ and $[m:n] = \{m, m+1, \ldots, n\}$ for $m < n$. Also, let $N(\mu, \Sigma)$ be the Gaussian distribution with mean $\mu$ and covariance $\Sigma$. We denote by $\cP(\cX)$ the set of all probability measures over the space $\cX$. For a function $f:\cX\rightarrow \RR$, a constant $p \ge 1$, and a probability measure $\mu\in \cP(\cX)$, we denote by $\norm{f}_{p, \mu} = (\int_\cX |f(x)|^p \rd \mu(x))^{1/p}$ the $L_p(\mu)$ norm of the function $f$. For any two functions $f,g:\cX\rightarrow \RR$, we denote by $\inp{f}{g}_\cX = \int_\cX f(x)\cdot g(x) \rd x$ the inner product on the space $\cX$. 


\section{Background} \label{sec:bg}
\vskip4pt
In this section, we introduce reinforcement learning (RL) and generative adversarial imitation learning (GAIL). 

\subsection{Reinforcement Learning}
\label{sec:mdp}
We consider a Markov decision process (MDP) $(\cS, \cA, r, P, \rho, \gamma)$. Here $\cS \subseteq \RR^{d_1}$ is the state space, $\cA \subseteq \RR^{d_2} $ is the action space, which is assumed to be finite throughout this paper, $r:\cS \times \cA \rightarrow \RR$ is the reward function, $P:\cS\times\cA\rightarrow \cP(\cS)$ is the transition kernel, $\rho\in\cP(\cS)$ is the initial state distribution, and $\gamma \in (0,1)$ is the discount factor. Without loss of generality, we assume that $\cS\times \cA$ is compact and that $\norm{(s, a)}_2 \le 1$ for any $(s, a)\in\cS\times \cA \subseteq \RR^d$, where $d = d_1+d_2$.
An agent following a policy $\pi : \cS \rightarrow \cP(\cA) $ interacts with the environment in the following manner. At the state $s_t \in \cS$, the agent takes the action $a_t \in \cA$ with probability $\pi(a_t\given s_t)$ and receives the reward $r_t = r(s_t, a_t)$. The environment then transits into the next state $s_{t+1}$ with probability $P(s_{t+1} \given s_t, a_t)$. Given a policy $\pi$ and a reward function $r(s, a)$, we define the state-action value function $Q^\pi_r: \cS\times \cA \rightarrow \RR$ as follows,
\begin{align}
	Q^\pi_r(s, a) = \EE_\pi \biggl[(1-\gamma)\cdot  \sum_{t=0}^\infty \gamma^t \cdot r(s_t, a_t) \,\bigg| \, s_0 = s, a_0 = a \biggr]. \label{eq:def_q}
\end{align}
Here the expectation $\EE_\pi$ is taken with respect to $a_t\sim \pi(\cdot \given s_t)$ and $s_{t+1} \sim P(\cdot \given s_t, a_t)$.
Correspondingly, we define the state value function $V^\pi_r:\cS\rightarrow \RR$ and the advantage function $A^\pi_r:\cS\times \cA \rightarrow \RR$ as follows,
\begin{align*}
	V^\pi_r(s) = \EE_{a \sim \pi(\cdot \given s)} \bigl[ Q_r^\pi (s, a)  \bigr],  \quad
	A^\pi_r (s, a)  = Q_r^\pi(s, a) - V^\pi_r (s). 
\end{align*}
The goal of RL is to maximize the following expected cumulative reward,
\begin{align}
\label{eq:def_rl}
J(\pi; r) = \EE_{s\sim \rho}\bigl[V_{r}^{\pi} (s)\bigr].
\end{align}
The policy $\pi$ induces a state visitation measure $d_\pi\in\cP(\cS)$ and a state-action visitation measure $\nu_\pi\in \cP(\cS\times \cA)$, which take the forms of
\begin{align}
\label{eq:def_visitation}
d_\pi(s) = (1- \gamma)\cdot \sum_{t=0}^\infty \gamma^t \cdot \PP\bigl(s_t = s \,\big|\, s_0 \sim \rho, a_t \sim \pi(\cdot \,|\, s_t)\bigr), \quad \nu_\pi(s, a) = d_\pi(s) \cdot \pi(a\given s).
\end{align}
It then holds that $J(\pi; r) = \EE_{(s, a) \sim \nu_\pi}[r(s, a)]$.
Meanwhile, we assume that the policy $\pi$ induces a state stationary distribution $\varrho_\pi$ over $\cS$, which satisfies that
\begin{align*}
\varrho_\pi(s) = \PP\bigl(s_{t+1} = s \,\big|\, s_t \sim \rho_\pi, a_t \sim \pi(\cdot \given s_t)\bigr).
\end{align*}
We denote by $\rho_\pi(s, a) = \varrho(s) \cdot \pi(a\given s)$ the state-action stationary distribution over $\cS\times \cA$.

\subsection{Generative Adversarial Imitation Learning} \label{sec:il}
The goal of imitation learning (IL) is to learn a policy that outperforms the expert policy $\pi_\rE$ based on the trajectory $\{(s_i^\rE,a_i^\rE)\}_{i\in[T_\rE]}$ of $\pi_\rE$.  We denote by $\nu_\rE = \nu_{\pi_\rE}$ and $d_\rE = d_{\pi_\rE}$ the state-action and state visitation measures induced by the expert policy, respectively, and assume that the expert trajectory $\{(s_i,a_i)\}_{i\in[T_\rE]}$ is drawn from $\nu_{\rE}$. 
To this end, we parameterize the policy and the reward function by $\pi_\theta$ for $\theta\in\cX_\Pi$ and $r_\beta(s, a)$ for $\beta \in \cX_R$, respectively, and solve the following minimax optimization problem known as GAIL \citep{ho2016generative},
\begin{align}
\label{eq:def_il}
\min_{\theta\in \cX_\Pi} \max_{\beta \in \cX_R}  L(\theta, \beta), \quad \text{where  } L(\theta, \beta) = J(\pi_\rE; r_\beta) - J(\pi_\theta; r_\beta) - \lambda \cdot \psi(\beta).
\end{align}
Here $J(\pi; r)$ is the expected cumulative reward defined in \eqref{eq:def_rl}, $\psi:\cX_R\rightarrow \RR_+$ is the regularizer, and $\lambda \ge 0$ is the regularization parameter. Given a reward function class $\cR$, we define the $\cR$-distance between two policies $\pi_1$ and $\pi_2$ as follows,
\begin{align}
\label{eq:def_dis}
\DD_{\cR}(\pi_1, \pi_2) = \max_{r\in\cR} J(\pi_1;r) - J(\pi_2; r) = \max_{r \in \cR} \EE_{\nu_{\pi_1}}\bigl[ r(s, a)\bigr] - \EE_{\nu_{\pi_2}} \bigl[ r(s, a) \bigr].
\end{align}
When $\cR$ is the class of $1$-Lipschitz functions, $\DD_{\cR}(\pi_1, \pi_2)$ is the Wasserstein-$1$ metric between the state-action visitation measures induced by $\pi_1$ and $\pi_2$. However, $\DD_{\cR}(\pi_1, \pi_2)$ is not a metric in general.
When $\DD_{\cR}(\pi_1, \pi_2) \le 0$, the policy $\pi_2$ outperforms the policy $\pi_1$ for any reward function $r \in \cR$. Such a notion of $\cR$-distance is used in \cite{chen2020computation}. We denote by $\cR_\beta = \{r_\beta(s, a) \given \beta \in \cX_R\}$ the reward function class parameterized with $\beta$. Hence, the optimization problem in \eqref{eq:def_il} minimizes the $\cR_\beta$-distance $\DD_{\cR_\beta}(\pi_\rE, \pi_\theta)$ (up to the  regularizer $\lambda\cdot \psi(\beta)$), which searches for a policy $\bar \pi$ that (approximately) outperforms the expert policy given any reward function $r_{\beta} \in \cR_\beta$.



\section{Algorithm} \label{sec:alg}
In this section, we introduce an algorithm with alternating updates for GAIL with neural networks, which employs natural policy gradient (NPG) to update the policy $\pi_\theta$ and gradient ascent to update the reward function $r_\beta(s, a)$.

\subsection{Parameterization with Neural Networks} \label{sec:nn}
We define a two-layer neural network with rectified linear units (ReLU) as follows,
\begin{align}
\label{eq:def_nn}
u_{W, b}(s, a) = \frac{1}{\sqrt{m}} \sum_{l=1}^{m} b_{l} \cdot \ind{\bigl\{(s, a)^\top [W]_l>0 \bigr\}}\cdot (s, a)^\top [W]_l 
= \sum_{l=1}^{m} \bigl[\phi_{W, b}(s, a)\bigr]_l^\top [W]_l.
\end{align}
Here  $m\in \NN_+$ is the width of the neural network, $b = (b_1, \ldots, b_m)^\top \in \RR^m$ and $W = ([W]_1^\top, \ldots , [W]_m^\top)^\top \in \RR^{md}$ are the parameters, and $\phi_{W, b}(s, a) = ([\phi_{W, b}(s, a)]^\top_1, \ldots, [\phi_{W, b}(s, a)]^\top_m)^\top \in \RR^{md}$ is called the feature vector in the sequel, where
\begin{align}
\label{eq:def_feature}
\bigl[\phi_{W, b}(s, a)\bigr]_l = m^{-1/2}\cdot b_l \cdot \ind\bigl\{ (s, a)^\top [W]_l > 0\bigr\}\cdot (s, a).
\end{align}
It then holds that $u_{W, b}(s, a) = W^\top \phi_{W, b}(s, a)$. Note that the feature vector $\phi_{W, b}(s, a)$ depends on the parameters $W$ and $b$. We consider the following random initialization,
\begin{align}
\label{eq:init}
b_l \overset{\iid}{\sim} \unif\bigl(\{-1, 1\}\bigr),\quad 
[W_0]_l \overset{\iid}{\sim} N(0, I_d / d),\quad \forall l\in [m]. 
\end{align}
Throughout the training process, we keep the parameter $b$ fixed while updating $W$. For notational simplicity, we write $u_{W, b}(s, a)$ as $u_{W}(s, a)$ and $\phi_{W, b}(s, a)$ as $\phi_{W}(s, a)$ in the sequel. We denote by $\EE_\init$ the expectation taken with respect to the random initialization in \eqref{eq:init}. For an absolute constant $B>0$, we define the parameter domain as
\begin{align}
\label{eq:def_s_b}
S_B = \bigl\{W\in\RR^{md}\,\big|\, \norm{W-W_0}_2 \le B \bigr\},
\end{align}
which is the ball centered at $W_0$ with the domain radius $B$. 

In the sequel, we consider the following energy-based policy,
\begin{align}
\label{eq:para_pi}
\pi_\theta(a\given s) = \frac{\exp\bigl(\tau \cdot u_\theta(s, a)\bigr)}{\sum_{a'\in \cA}\exp\bigl(\tau \cdot u_\theta(s, a')\bigr)},
\end{align}
where $\tau \ge 0$ is the inverse temperature parameter and $u_\theta(s, a)$ is the energy function parameterized by the neural network defined in \eqref{eq:def_nn} with $W=\theta$. In the sequel, we call $\theta$ the policy parameter. Meanwhile, we parameterize the reward function $r_\beta(s, a)$ as follows,
\begin{align}
\label{eq:para_r}
r_\beta(s, a) = (1-\gamma)^{-1}\cdot u_\beta(s, a),
\end{align}
where $u_\beta(s, a)$ is the neural network defined in \eqref{eq:def_nn} with $W= \beta$ and $\gamma$ is the discount factor. Here we use the scaling parameter $(1-\gamma)^{-1}$ to ensure that for any policy $\pi$ the state-action value function $Q^\pi_{r_\beta}(s, a)$ defined in \eqref{eq:def_q} is well approximated by the neural network defined in \eqref{eq:def_nn}.
In the sequel, we call $\beta$ the reward parameter and define the reward function class as 
\$\cR_{\beta} = \{r_{\beta}(s, a) \given \beta \in S_{B_{\beta}}\}, \$
where $S_{B_{\beta}}$ is the parameter domain of $\beta$ defined in \eqref{eq:def_s_b} with domain radius $B_{\beta}$. 
To facilitate algorithm design, we establish the following proposition, which calculates the explicit expressions of the gradients $\nabla L(\theta, \beta)$ and the Fisher information $\cI(\theta)$. Recall that the Fisher information is defined as
\begin{align}
\label{eq:fisher}
\cI(\theta) = \EE_{(s, a)\sim \nu_{\pi_\theta}}\bigl[\nabla_\theta \log \pi_\theta(s, a)\nabla_\theta \log \pi_\theta (s, a)^\top\bigr].
\end{align}
\begin{proposition}[Gradients and Fisher Information]
	\label{prop:npg_form}
	We call $\phic_\theta(s, a) = \tau^{-1}\cdot \nabla_\theta \log \pi_\theta(a\given s)$ the temperature-adjusted score function. It holds that
	\begin{align}
	\label{eq:def_phic}
	\phic_\theta(s, a) = \phi_{\theta}(s, a) - \EE_{a' \sim \pi_\theta(\cdot \given s)}\bigl[\phi_{\theta}(s, a')\bigr].
	\end{align}	
It then holds that
	\begin{align}
	\label{eq:fisher_form}
	\cI(\theta) &= \tau^2 \cdot \EE_{(s,a)\sim \nu_{\pi_\theta} }\bigl[\phic_\theta(s, a) \, \phic_\theta (s, a)^\top \bigr], \\
	\label{eq:pg_form}
	\nabla_\theta L(\theta, \beta) &=  -\tau \cdot \EE_{(s,a) \sim \nu_{\pi_\theta}} \bigl[Q^{\pi_\theta}_{r_\beta} (s, a) \cdot \phic_\theta(s, a)\bigr], \\
	\label{eq:rg_form}
	\nabla_\beta L(\theta, \beta) &= (1-\gamma)^{-1} \cdot \EE_{(s, a)\sim \nu_{\rE}}\bigl[\phi_\beta(s, a)\bigr]  - (1-\gamma)^{-1} \cdot \EE_{(s, a)\sim \nu_{\pi_\theta}}\bigl[\phi_\beta(s, a)\bigr]- \lambda \cdot \nabla_\beta \psi(\beta),
	\end{align}
	where $Q^{\pi_\theta}_{r_\beta}(s, a)$ is the state-action value function defined in \eqref{eq:def_q} with $\pi = \pi_\theta$ and $r = r_\beta$, $ \nu_{\pi_\theta}$ is the state-action visitation measure defined in \eqref{eq:def_visitation} with $\pi = \pi_\theta$, and $\cI(\theta)$ is the Fisher information defined in \eqref{eq:fisher}.
\end{proposition}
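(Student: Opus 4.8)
The plan is to establish the four identities by combining one differentiation fact about the ReLU network with standard policy-gradient machinery. The key fact is that, for almost every $(s,a)$, the gradient of the network \eqref{eq:def_nn} with respect to its weights equals the feature vector, i.e. $\nabla_W u_W(s,a) = \phi_W(s,a)$ with $\phi_W$ as in \eqref{eq:def_feature}. This holds because each summand $m^{-1/2} b_l \cdot \ind\{(s,a)^\top[W]_l>0\}\cdot(s,a)^\top[W]_l$ is, off the measure-zero set $\{(s,a)^\top[W]_l = 0\}$, a smooth function of $[W]_l$ whose gradient is $m^{-1/2}b_l\cdot\ind\{(s,a)^\top[W]_l>0\}\cdot(s,a) = [\phi_W(s,a)]_l$, the locally constant indicator contributing nothing. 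I would record this as a preliminary a.e. identity, since it underlies all four parts.

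For the score function \eqref{eq:def_phic}, I would differentiate $\log\pi_\theta(a\given s) = \tau\cdot u_\theta(s,a) - \log\sum_{a'\in\cA}\exp(\tau\cdot u_\theta(s,a'))$, read off from \eqref{eq:para_pi}, directly in $\theta$. The first term gives $\tau\nabla_\theta u_\theta(s,a) = \tau\phi_\theta(s,a)$, while the log-normalizer gives $\tau\sum_{a'\in\cA}\pi_\theta(a'\given s)\phi_\theta(s,a') = \tau\EE_{a'\sim\pi_\theta(\cdot\given s)}[\phi_\theta(s,a')]$; subtracting and dividing by $\tau$ yields \eqref{eq:def_phic}. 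Substituting $\nabla_\theta\log\pi_\theta = \tau\phic_\theta$ into the definition \eqref{eq:fisher} of $\cI(\theta)$ then immediately produces the factor $\tau^2$ and the outer-product form \eqref{eq:fisher_form}.

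The policy gradient \eqref{eq:pg_form} is where the real work lies. Since in $L(\theta,\beta)$ only the term $-J(\pi_\theta;r_\beta)$ depends on $\theta$, it suffices to compute $\nabla_\theta J(\pi_\theta;r_\beta)$. I would prove a policy-gradient theorem adapted to the normalization used here: starting from $J(\pi_\theta;r_\beta)=\EE_{s\sim\rho}[V^{\pi_\theta}_{r_\beta}(s)]$ and the Bellman relation $Q^\pi_{r}(s,a) = (1-\gamma)\cdot r(s,a)+\gamma\cdot\EE_{s'\sim P(\cdot\given s,a)}[V^\pi_r(s')]$ implied by \eqref{eq:def_q}, differentiate $V$, note that $\nabla_\theta Q^\pi_r(s,a) = \gamma\cdot\EE_{s'}[\nabla_\theta V^\pi_r(s')]$ because $r$ and $P$ do not depend on $\theta$, and unroll the resulting recursion. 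Collecting the discounted occupancy weights $\sum_{t\ge 0}\gamma^t\PP(s_t=s)$ and matching them to $d_{\pi_\theta}$ and $\nu_{\pi_\theta}$ from \eqref{eq:def_visitation} yields $\nabla_\theta J(\pi_\theta;r_\beta) = \EE_{(s,a)\sim\nu_{\pi_\theta}}[Q^{\pi_\theta}_{r_\beta}(s,a)\cdot\nabla_\theta\log\pi_\theta(a\given s)]$; substituting $\nabla_\theta\log\pi_\theta = \tau\phic_\theta$ and the sign from $-J$ gives \eqref{eq:pg_form}. The main obstacle is the careful bookkeeping of the $(1-\gamma)$ normalizations carried by both $Q^\pi_r$ in \eqref{eq:def_q} and $\nu_\pi$ in \eqref{eq:def_visitation}, so that the discount factors accumulated during the unrolling combine exactly into the stated constant rather than leaving a spurious $(1-\gamma)^{\pm1}$.

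Finally, the reward gradient \eqref{eq:rg_form} is the most direct: the measures $\nu_\rE$ and $\nu_{\pi_\theta}$ do not depend on $\beta$, so writing $J(\pi;r_\beta)=\EE_{(s,a)\sim\nu_\pi}[r_\beta(s,a)]$ and using $r_\beta = (1-\gamma)^{-1}u_\beta$ from \eqref{eq:para_r} together with the a.e. identity $\nabla_\beta u_\beta = \phi_\beta$ gives $\nabla_\beta J(\pi;r_\beta) = (1-\gamma)^{-1}\EE_{(s,a)\sim\nu_\pi}[\phi_\beta(s,a)]$. Applying this to the two value terms in \eqref{eq:def_il} and differentiating the regularizer $-\lambda\cdot\psi(\beta)$ produces \eqref{eq:rg_form}. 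Throughout, I would work with the almost-everywhere statement of the ReLU gradient, which together with the discount-factor bookkeeping in the policy-gradient step constitutes the only genuine technical delicacy.
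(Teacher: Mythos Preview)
Your proposal is correct and follows essentially the same approach as the paper: both start from the almost-everywhere identity $\nabla_W u_W(s,a)=\phi_W(s,a)$, differentiate the log-softmax in \eqref{eq:para_pi} to obtain $\nabla_\theta\log\pi_\theta=\tau\,\phic_\theta$, substitute this into the definition of $\cI(\theta)$ and into the policy-gradient theorem, and differentiate $J(\pi;r_\beta)=\EE_{\nu_\pi}[r_\beta]$ directly for \eqref{eq:rg_form}. The only difference is cosmetic: the paper simply cites the policy-gradient theorem \citep{sutton2018reinforcement} to obtain $\nabla_\theta J(\pi_\theta;r_\beta)=\EE_{\nu_{\pi_\theta}}[Q^{\pi_\theta}_{r_\beta}\cdot\nabla_\theta\log\pi_\theta]$, whereas you propose to re-derive it via the Bellman recursion and unrolling---a more self-contained choice, but not a different route.
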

\begin{proof}
	See \S\ref{sec:pf_prop_npg_form} for a detailed proof.
\end{proof} 
Note that the expression of the policy gradient $\nabla_\theta L(\theta, \beta)$ in \eqref{eq:pg_form} of Proposition \ref{prop:npg_form} involves the state-action value function $Q^{\pi_\theta}_{r_\beta}(s, a)$. To this end, we estimate the state-action value function $Q_r^\pi(s, a)$ by $\hat Q_{\omega}(s, a)$, which is parameterized as follows,
\begin{align}
\label{eq:para_q}
\hat Q_\omega(s, a) = u_\omega(s, a).
\end{align}
Here $u_\omega(s, a)$ is the neural network defined in \eqref{eq:def_nn} with $W=\omega$. In the sequel, we call $\omega$ the value parameter.

\subsection{GAIL with Alternating Updates} \label{sec:ac}
 We employ an actor-critic scheme with alternating updates of the policy and the reward function, which is presented in Algorithm \ref{alg:ac_gail}.
 Specifically, we update the policy parameter $\theta$ via natural policy gradient and update the reward parameter $\beta$ via gradient ascent in the actor step, while we estimate the state-action value function $Q^{\pi}_{r}(s, a)$ via neural temporal difference (TD) \citep{cai2019neural} in the critic step.

\vskip4pt

\noindent {\bf Actor Step.} In the $k$-th actor step, we update the policy parameter $\theta$ and the reward parameter $\beta$ as follows,
\begin{align}
\theta_{k+1} &= \tau_{k+1}^{-1} \cdot ( \tau_k \cdot \theta_k - \eta \cdot \delta_k  ), \label{eq:update_theta}\\
\beta_{k+1} &= \proj_{S_{B_\beta}}\bigl\{\beta_k + \eta \cdot \hat \nabla_\beta L(\theta_k, \beta_k)\bigr\},
\label{eq:update_beta}
\end{align}
where
\begin{align}
\tau_{k+1} &= \eta + \tau_k, \quad \delta_k \in \argmin_{\delta \in S_{B_\theta}} \norm[\big]{ \hat \cI(\theta_k) \delta - \tau_k \cdot \hat \nabla_\theta L(\theta_k, \beta_k) }_2. \label{eq:def_delta} 
\end{align}
Here $\eta>0$ is the stepsize, $S_{B_\theta}$ and $S_{B_\beta}$ are the parameter domains of $\theta$ and $\beta$ defined in \eqref{eq:def_s_b} with domain radii $B_\theta$ and $B_\beta$, respectively, $\proj_{S_{B_\beta}}\!: \RR^{md} \rightarrow S_{B_\beta}$ is the projection operator, $\tau_k$ is the inverse temperature parameter of $\pi_{\theta_k}$, and $\hat \cI(\theta_k), \hat \nabla_\theta L(\theta_k, \beta_k), \hat \nabla_\beta L(\theta_k, \beta_k)$ are the estimators of $\cI(\theta_k), \nabla_\theta L(\theta_k, \beta_k), \nabla_\beta L(\theta_k, \beta_k)$, respectively, which are defined in the sequel. In \eqref{eq:update_theta}, we update the policy parameter $\theta_k$ along the direction $\delta_k$, which approximates the natural policy gradient $\cI(\theta)^{-1}\cdot \nabla_\theta L(\theta, \beta)$, and in \eqref{eq:def_delta} we update the inverse temperature parameter $\tau_k$ to ensure that $\theta_{k+1} \in S_{B_\theta}$. Meanwhile, in \eqref{eq:update_beta}, we update the reward parameter $\beta$ via (projected) gradient ascent.
Following from $\eqref{eq:fisher_form}$ and \eqref{eq:pg_form} of Proposition \ref{prop:npg_form}, we construct the estimators of $\cI(\theta_k)$ and $\nabla_\theta L(\theta_k, \beta_k)$ as follows,
\begin{align}
\label{eq:est_fisher}
\hat \cI(\theta_k) &= \frac{\tau_k^2}{N} \sum_{i=1}^{N} \phic_{\theta_k}(s_i, a_i) \, \phic_{\theta_k} (s_i, a_i)^\top,\\
\label{eq:est_pg}
\hat \nabla_\theta L(\theta_k, \beta_k) &= - \frac{\tau_k}{N} \sum_{i=1}^N \hat Q_{\omega_k} (s_i, a_i) \cdot \phic_{\theta_k}(s_i, a_i), 
\end{align}
where $\{(s_i, a_i)\}_{i\in[N]}$ is sampled from the state-action visitation measure $\nu_{ \pi_{\theta_k}}$ given $\theta_k$ with the batch size $N$, and $\hat Q_{\omega_k}(s, a)$ is the estimator of $Q^{\pi_{\theta_k}}_{r_{\beta_k}}(s, a)$ computed in the critic step. Meanwhile, following from \eqref{eq:rg_form} of Proposition \ref{prop:npg_form}, we construct the estimator of $\nabla_\beta L(\theta_k, \beta_k)$ as follows,
\begin{align}
\label{eq:est_rg}
\hat \nabla_\beta L(\theta, \beta) = \frac{1}{N\cdot (1-\gamma)} \sum_{i=1}^{N} \bigl[ \phi_{\beta_k}(s_i^\rE, a_i^\rE)  - \phi_{\beta_k}(s_i, a_i)  \bigr] - \lambda\cdot \nabla_\beta \psi(\beta_k),
\end{align}
where $\{(s_i^\rE, a_i^\rE)\}_{i\in[N]}$ is the expert trajectory. For notational simplicity, we write $\pi_k = \pi_{\theta_k}$, $r_k(s, a) = r_{\beta_k}(s, a)$, $d_k= d_{\pi_k}$ and $ \nu_k = \nu_{\pi_k}$ for the $k$-th step hereafter, where $\pi_\theta$ is the policy, $r_\beta(s, a)$ is the reward function, and $d_\pi, \nu_\pi$ are the visitation measures defined in \eqref{eq:def_visitation}.  

%

\vskip4pt
\noindent {\bf Critic Step. }Note that the estimator $\hat \nabla_\theta L(\theta, \beta)$ in \eqref{eq:est_pg} involves the estimator $\hat Q_{\omega_k}(s, a)$ of $Q^{\pi_k}_{r_k}(s, a)$. To this end, we parameterize $\hat Q_\omega(s, a) $ as in \eqref{eq:para_q} and adapt neural TD \citep{cai2019neural}, which solves the following minimization problem,
\begin{align}
\label{eq:def_tdlearn}
\omega_k = \argmin_{\omega\in S_{B_\omega}} \EE_{(s, a)\sim \rho_k} \bigl[ \hat Q_\omega(s, a) - \cT^{\pi_k}_{r_k} \hat Q_\omega (s, a) \bigr]^2.
\end{align}
Here $S_{B_\omega}$ is the parameter domain with domain radius $B_\omega$, $\rho_k$ is the state-action stationary distribution induced by $\pi _k$, and $\cT^{\pi_k}_{r_k}$ is the Bellman operator. Note that the Bellman operator $\cT^{\pi}_{r}$ is defined as follows,
\begin{align*}
\cT^\pi_r Q(s, a) = (1-\gamma)\cdot r(s, a) + \gamma \cdot \EE_\pi \bigl[ Q(s', a') \,\big|\, s, a \bigr],
\end{align*}
where the expectation is taken with respect to $s' \sim P(\cdot \given s, a)$ and $a'\sim \pi(\cdot \given s')$.
In neural TD, we iteratively update the value parameter $\omega$ via
\begin{align}
\label{eq:td_update}
\delta(j) &=  Q_{\omega(j)}(s, a) - r(s, a) - \gamma\cdot  Q_{\omega(j)}(s', a') , \nonumber \\
\omega(j+1)
&= \proj_{S_{B_\omega}} \bigl\{ \omega(j) - \alpha \cdot \delta(j) \cdot \nabla_\omega Q_{\omega(j)} (s, a) \bigr\},
\end{align}
where $\delta(j)$ is the Bellman residual, $\alpha>0$ is the stepsize,  $(s, a)$ is sampled from the state-action stationary distribution $\rho_k$, and $s' \sim P(\cdot\given s, a), a' \sim \pi_k(\cdot \given s')$ are the subsequent state and action. We defer the detailed discussion of neural TD to \S\ref{sec:td_learning}.

To approximately obtain the compatible function approximation \citep{sutton2000policy, wang2019neural}, we share the random initialization among the policy $\pi_\theta$, the reward function $r_\beta(s, a)$, and the state-action value function $\hat Q_{\omega}(s, a)$. In other words, we set $\theta_0 = \beta_0 = \omega(0) = W_0$ in our algorithm, where $W_0$ is the random initialization in \eqref{eq:init}. 
The output of GAIL is the mixed policy $\bar \pi$ \citep{altman1999constrained}. Specifically, the mixed policy $\bar \pi$ of $\pi_0, \ldots, \pi_{T-1}$ is executed by randomly selecting a policy $\pi_k$ for $k\in [0: T-1]$ with equal probability before time $t = 0$ and exclusively following $\pi_k$ thereafter. It then holds for any reward function $r(s, a)$ that 
\begin{align}
J(\bar \pi; r) = \frac{1}{T} \sum_{k=0}^{T-1} J(\pi_k; r). \label{eq:mixed_policy}
\end{align}

\begin{algorithm}[h]
	\caption{GAIL}
	\label{alg:ac_gail}
	\begin{algorithmic}[1]
		\REQUIRE Expert trajectory $\{(s_i^\rE, a_i^\rE)\}_{i \in [T_\rE]}$, number of iterations $T$, number of iterations $T_{\text{TD}}$ of neural TD, stepsize $\eta$, stepsize $\alpha$ of neural TD, batch size $N$, and domain radii $B_\theta, B_\omega, B_\beta$.
		\STATE {\bf Initialization.} Initialize $b_l\sim \unif(\{-1, 1\})$ and $[W_0 ]_l\sim N(0, I_d/d)$ for any $l\in[m]$ and set $\tau_0 \leftarrow 0$, $\theta_0 \leftarrow W_0$, and $\beta_0 \leftarrow W_0$.
		\FOR{$k = 0,1, \ldots, T-1$}
		\STATE Update value parameter $\omega_{k}$ via Algorithm \ref{alg:td_learn} with $\pi_{k}$, $r_{k}$, $W_0$, $b$, $T_{\td}$, and $\alpha$ as the input.
		\STATE Sample $\{(s_i, a_i)\}_{i=1}^N$ from the state-action visitation measure $\nu_k$, and estimate $\hat \cI(\theta_k)$, $\hat \nabla_\theta L(\theta_k, \beta_k)$, and $\hat \nabla_\beta L(\theta_k, \beta_k)$ via \eqref{eq:est_fisher}, \eqref{eq:est_pg},  and \eqref{eq:est_rg}, respectively.
		\STATE Solve $\delta_k \leftarrow \argmin_{\delta \in S_{\theta}} \bigl\| \hat \cI(\theta_k) \cdot \delta - \tau_k \cdot \hat \nabla_\theta L(\theta_k, \beta_k) \bigr\|_2$ and set $\tau_{k+1} \leftarrow \tau_k + \eta$.
		\STATE Update policy parameter $\theta$ via $\theta_{k+1} \leftarrow \tau_{k+1}^{-1}\cdot ( \tau_k\cdot \theta_k - \eta \cdot \delta_k)$. 
	    \STATE Update reward parameter $\beta$ via $ \beta_{k+1} \leftarrow \proj_{S_{B_\beta}} \{ \beta_k + \eta\cdot\hat \nabla_\beta L(\theta_k, \beta_k) \}$.
		\ENDFOR
		\ENSURE Mixed policy $\bar \pi$ of $\pi_{0}, \ldots, \pi_{T-1}$.
	\end{algorithmic}
\end{algorithm}


\section{Main Results} \label{sec:result}
In this section, we first present the assumptions for our analysis. Then, we establish the global optimality and convergence of Algorithm \ref{alg:ac_gail}.

\subsection{Assumptions} \label{sec:asp}
We impose the following assumptions on the stationary distributions $\varrho_k \in \cP(\cS), \rho_k\in \cP(\cS\times \cA)$ and the visitation measures $d_k\in\cP(\cS) , \nu_k \in \cP(\cS\times \cA)$.
\begin{assumption}
	\label{asp:measures}
	We assume that the following properties hold.
	\begin{itemize}
		\item[(a)] \label{asp:linear_nn} Let $\mu$ be either $\rho_k$ or $\nu_k$. We assume for an absolute constant $c > 0$ that
		\begin{align*}
		\EE_{(s, a) \sim \mu }\Bigl[ \ind\bigl\{ |W^\top (s, a) | \le y \bigr\} \Bigr] \le \frac{c \cdot y}{\norm{W}_2}, \quad \forall y>0, W \neq 0.
		\end{align*}
		
		\item[(b)] \label{asp:visitation} We assume for an absolute constant $C_h > 0$ that
		\begin{align*}
		&\max_{k\in \NN}\Biggl\{\norm[\bigg]{\frac{\rd d_\rE}{\rd d_k}}_{2, d_k} +  \norm[\bigg]{\frac{\rd \nu_\rE}{\rd \nu_k}}_{2, \nu_k} \Biggr\} \le C_h, \\
		&\max_{ k\in\NN} \Biggl\{\norm[\bigg]{\frac{\rd d_\rE}{\rd \varrho_k}}_{2, \varrho_k} + \norm[\bigg]{\frac{\rd \nu_\rE}{\rd \rho_k}}_{2, \rho_k} \Biggr\} \le C_h.
		\end{align*}
		Here $\rd d_\rE/\rd d_k$, $\rd \nu_\rE/\rd \nu_k$, $\rd d_\rE/\rd \varrho_k $, and $\rd \nu_\rE/\rd \rho_k$ are the Radon-Nikodym derivatives.
	\end{itemize}
\end{assumption}
Assumption \ref{asp:linear_nn}\,(a) holds when the probability density functions of $\rho_{k}$ and $\nu_{k}$ are uniformly upper bounded across $k$.
Assumption \ref{asp:visitation}\,(b) states that the concentrability coefficients are uniformly upper bounded across $k$, which is commonly used in the analysis of RL \citep{szepesvari2005finite, munos2008finite,antos2008learning,farahmand2010error,scherrer2015approximate,farahmand2016regularized,lazaric2010analysis}.

For notational simplicity, we write $u_0(s, a) = u_{W_0}(s, a)$ and $\phi_0(s, a) = \phi_{W_0}(s,a)$, where $u_{W_0}(s, a)$ is the neural network defined in \eqref{eq:def_nn} with $W=W_0$, $\phi_{W_0}(s, a)$ is the feature vector defined in \eqref{eq:def_feature} with $W = W_0$, and $W_0$ is the random initialization in \eqref{eq:init}. We impose the following assumptions on the neural network $u_0(s, a)$ and the transition kernel $P$.
\begin{assumption}
	\label{asp:nn}
	We assume that the following properties hold.
	\begin{itemize}
		\item[(a)]\label{asp:bound_init}  Let $\bar U = \sup_ {(s, a) \in \cS\times \cA}| u_0(s, a)|$. We assume for absolute constants $M_0>0$ and $v>0$ that 
		\begin{align}
		\label{eq:bound_init}
			\EE_\init[\bar U^2] \le M_0^2, \quad
			\PP(\bar U > t ) \le \exp(-v\cdot t^2), \quad \forall t > 2M_0.
		\end{align}
		\item[(b)]\label{asp:trans_class} We assume that the transition kernel $P$ belongs to the following class,
		\begin{align*}
		\tilde \cM_{\infty, B_P} = \biggl\{ P(s' \given s, a) = \int \vartheta(s, a;w) ^\top \varphi(s'; w) \,\rd q(w) \,\bigg|\,  \sup_{ w} \biggl\| \int\varphi(s; w) \rd s \biggr\|_2 \le B_P  \biggr\}.
		\end{align*}
		Here $B_P>0$ is an absolute constant, $q$ is the probability density function of $N(0, I_d/ d)$, and $\vartheta(s, a; w)$ is defined as
		$
		\vartheta(s, a; w) = \ind \{ w^\top (s, a) >0 \} \cdot (s, a).
		$
	\end{itemize}
\end{assumption}
Assumption \ref{asp:nn}\,(b) states that the MDP belongs to (a variant of) the class of linear MDPs \citep{yang2019sample, yang2019reinforcement, jin2019provably, cai2019provably}. However, our class of transition kernels is infinite-dimensional, and thus, captures a rich class of MDPs. To understand Assumption \ref{asp:nn}\,(a), recall that we initialize the neural network with $[W_0]_l \sim N(0, I_d / d)$ and $b_l \sim \unif(\{-1, 1\})$ for any $l \in [m]$. Thus, the neural network $u_{0}(s, a)$ defined in \eqref{eq:def_nn} with $W=W_0$ converges to a Gaussian process indexed by $(s, a) \in \cS\times \cA$ as $m$ goes to infinity. Following from the facts that the maximum of a Gaussian process over a compact index set is sub-Gaussian \citep{vandegeer2014onhigher} and that $\cS\times \cA$ is compact, it is reasonable to assume that $\sup_{(s, a) \in \cS\times \cA} | u_0(s, a)|$ is sub-Gaussian, which further implies the existence of the absolute constants $M_0$ and $v$ in \eqref{eq:bound_init} of Assumption \ref{asp:nn}\,(a). Moreover, if we assume that $m$ is even and initialize the parameters $W_0, b$ as follows,
\begin{align}
\label{eq:init-sym}
\begin{cases}
[W_0]_l \overset{\iid}{\sim} N(0, I_d / d), \quad b_l \sim \unif\bigl(\{-1, 1\}\bigr), \quad &\forall l =1, \ldots, m/2, \\
[W_0]_l = [W_0]_{l - m/2}, \quad  b_l = - b_{l - m/2}, \quad &\forall l = m/2 +1, \ldots, m,
\end{cases}
\end{align}
we have that $u_0(s, a) = 0$ for any $(s, a) \in \cS\times \cA$, which allows us to set $M_0 = 0$ and $v = + \infty$ in Assumption \ref{asp:bound_init}\,(a). Also, it holds that $0 = u_0(s, a) \in \cR_\beta$, which implies that $\DD_{\cR_\beta}(\pi_1, \pi_2) \ge 0$ for any $\pi_1$ and $\pi_2$.
The proof of our results with the random initialization in \eqref{eq:init-sym} is identical.

Finally, we impose the following assumption on the regularizer $\psi(\beta)$ and the variances of the estimators $ \hat \cI(\theta)$, $\hat \nabla_\theta L(\theta, \beta)$, and $\hat \nabla_\beta L(\theta, \beta)$ defined in \eqref{eq:est_fisher}, \eqref{eq:est_pg}, and \eqref{eq:est_rg}, respectively.
\begin{assumption}
	\label{asp:more}
	We assume that the following properties hold.
	\begin{itemize}
		\item[(a)]\label{asp:variance} We assume for an absolute constant $\sigma > 0$ that
		\begin{align}
		&\EE_k\bigg[ \norm[\Big]{ \hat \cI(\theta_k) W - \EE_k\bigl[ \hat \cI(\theta_k) W \bigr] }_2^2 \bigg] \le \tau_k^4\cdot \sigma^2 / N, \quad \forall W \in S_{B_\theta}, \label{eq:var1} \\
		&\EE_k \biggl[ \norm[\Big]{ \hat\nabla_\theta L(\theta_k, \beta_k) - \EE_k\bigl[\hat \nabla_\theta L(\theta_k, \beta_k)\bigr] }_2^2 \biggr] \le \tau_k^2 \cdot \sigma^2 / N, \label{eq:var2} \\
		&\EE_k \biggl[ \norm[\Big]{\hat \nabla_\beta L(\theta_k, \beta_k) - \EE_k\bigl[ \hat \nabla_\beta L(\theta_k, \beta_k) \bigr] }_2^2 \biggr] \le \sigma^2 / N, \label{eq:var3}
		\end{align}
		where $\tau_k$ is the inverse temperature parameter in \eqref{eq:para_pi}, $N\in\NN_+$ is the batch size, and $S_{B_\theta}$ is the parameter domain of $\theta$ defined in \eqref{eq:def_s_b} with the domain radius $B_\theta$. Here the expectation $\EE_k$ is taken with respect to the $k$-th batch, which is drawn from $\nu_k$ given $\theta_k$.
		
		\item[(b)]\label{asp:regularizer} We assume that the regularizer $\psi(\beta)$ in \eqref{eq:def_il} is convex and $L_\psi$-Lipschitz continuous over the compact parameter domain $S_{B_\beta}$. 
	\end{itemize}
\end{assumption}
Assumption \ref{asp:more}\,(a) holds when $\hat Q_{\omega_k}(s_i, a_i) \cdot \phic_{\theta_k}(s_i, a_i)$, $\phic_{\theta_k}(s_i, a_i) \phic_{\theta_k}(s_i, a_i)^\top$, and $\phi_{\beta_k}(s_i, a_i)$ have uniformly upper bounded variances across $i \in [m]$ and $k$, and the Markov chain that generates $\{(s_i, a_i)\}_{i\in [N]}$ mixes sufficiently fast \citep{zhang2019global}. Similar assumptions are also used in the analysis of policy optimization \citep{xu2019improved, xu2019sample}. Also, Assumption \ref{asp:more}\,(b) holds for most commonly used regularizers \citep{ho2016generative}.


\subsection{Global Optimality and Convergence} \label{sec:conv}
In this section, we establish the global optimality and convergence of Algorithm \ref{alg:ac_gail}. 
The following proposition adapted from \cite{cai2019neural} characterizes the global optimality and convergence of neural TD, which is presented in Algorithm \ref{alg:td_learn}.
\begin{proposition}[Global Optimality and Convergence of Neural TD]
	\label{prop:conv_td}
	 In Algorithm \ref{alg:td_learn}, we set $T_{\TD} = \Omega(m)$, $\alpha= \min\{ (1-\gamma) / 8, m^{-1/2} \}$, and $B_\omega = c \cdot (B_\beta + B_P\cdot (M_0 + B_\beta)) $ for an absolute constant $c > 0$. Let $\pi_k, r_k$ be the input and $\omega_k $ be the output of Algorithm \ref{alg:td_learn}. 
	Under Assumptions \ref{asp:measures} and \ref{asp:nn}, it holds for an absolute constant $C_v > 0$ that
	\begin{align}
	\label{eq:def_errq0}
	\EE_\init\Bigl[ \norm[\big]{Q_{\omega_k} (s, a) -Q^{\pi_k}_{r_k}(s, a) }^2_{2,\rho_k} \Bigr] = \cO\bigl(B_\omega^3 \cdot  m^{-1/2} + B_\omega^{5/2} \cdot m^{-1/4} + B_\omega^2\cdot \exp(-C_v\cdot B_\omega^2)\bigr).
	\end{align}
	Here the expectation $\EE_\init$ is taken with respect to the random initialization in \eqref{eq:init}.
\end{proposition}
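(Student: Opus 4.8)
The plan is to adapt the neural TD analysis of \cite{cai2019neural}, decomposing $\|Q_{\omega_k} - Q^{\pi_k}_{r_k}\|_{2,\rho_k}$ into a linearization error, an optimization error, and an approximation error, each bounded in expectation over the random initialization \eqref{eq:init}. The central device is the local linearization of the value network around $W_0$, namely $\hat Q^0_\omega(s,a) = u_0(s,a) + \phi_0(s,a)^\top(\omega - W_0)$, which freezes the ReLU activation pattern at initialization and makes $\hat Q^0_\omega$ linear in $\omega$. The first step is to establish feature stability: for $\omega \in S_{B_\omega}$, the feature map $\phi_\omega$ differs from $\phi_0$ only on those units whose pre-activation $|\omega^\top(s,a)|$ is small enough to flip the indicator in \eqref{eq:def_feature}, and Assumption \ref{asp:linear_nn}\,(a) bounds the measure of such near-boundary events. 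Integrating against $\rho_k$ then yields $\EE_\init[\|\hat Q^0_\omega - Q_\omega\|^2_{2,\rho_k}] = \cO(B_\omega^{5/2} m^{-1/4} + B_\omega^3 m^{-1/2})$ uniformly over $\omega \in S_{B_\omega}$, which accounts for the two polynomial-in-$B_\omega$ terms and reduces the problem to the linearized recursion.

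Next I would analyze the linearized TD iteration. Because $\rho_k$ is the stationary distribution induced by $\pi_k$, the projected Bellman operator associated with the frozen features $\phi_0$ is a $\gamma$-contraction in $\|\cdot\|_{2,\rho_k}$, which is the mechanism that drives TD with a fixed feature map to the fixed point $\omega^\star$ of the projected Bellman equation. Applying the standard averaged stochastic-approximation bound with constant stepsize to \eqref{eq:td_update} then gives an optimization error of order $\cO\bigl(B_\omega^2/(\alpha T_{\TD}) + \alpha\cdot\sigma^2\bigr)$; substituting $\alpha = \min\{(1-\gamma)/8, m^{-1/2}\}$ and $T_{\TD} = \Omega(m)$ makes this $\cO(B_\omega^2 m^{-1/2})$, which is dominated by the linearization term.

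Then I would bound the approximation error $\|\hat Q^0_{\omega^\star} - Q^{\pi_k}_{r_k}\|_{2,\rho_k}$, where the linear-MDP structure of Assumption \ref{asp:nn}\,(b) is essential. Since $r_k = r_{\beta_k}$ is a two-layer network with $\beta_k \in S_{B_\beta}$ and the transition kernel preserves the linear-in-feature structure, applying the Bellman operator keeps the target within the feature class, so $Q^{\pi_k}_{r_k}$ admits an infinite-width representation in the feature space of $\phi_0$ with representing weight of norm at most $c\,(B_\beta + B_P(M_0 + B_\beta))$. The radius $B_\omega$ is calibrated exactly so that truncating this representation to the ball $S_{B_\omega}$ leaves only the Gaussian tail of $q$, contributing the $\cO(B_\omega^2\exp(-C_v B_\omega^2))$ term. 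Combining the three bounds by the triangle inequality and taking $\EE_\init$ yields the claim.

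The main obstacle is the second step. TD is a semi-gradient rather than a true gradient scheme, so ordinary SGD analysis does not apply and one must rely on the contraction of the projected Bellman operator; the subtlety is that this operator contracts only for the frozen features $\phi_0$, whereas the actual iterates $\omega(0), \ldots, \omega(T_{\TD})$ drift the features along their trajectory. The delicate part is to couple the nonlinear iterates to their linearized counterparts and to show that this drift—kept within the budget from the first step—perturbs the contraction only by a lower-order amount, so that the geometric decay survives and the optimization error remains of the stated order.
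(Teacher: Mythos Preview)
Your overall three-term decomposition (linearization, optimization, approximation) matches the paper's route, and your treatment of the first two pieces is essentially what the paper does by invoking \cite{cai2019neural}. The genuine gap is in your third step, where you misidentify the mechanism producing the $B_\omega^2\exp(-C_v B_\omega^2)$ term. You claim that $Q^{\pi_k}_{r_k}$ has an infinite-width representation with a \emph{deterministic} weight bound $c\,(B_\beta + B_P(M_0+B_\beta))$, and that truncating leaves ``only the Gaussian tail of $q$.'' Neither part is right. In the paper's argument one writes $Q^{\pi_k}_{r_k}(s,a) = u_{\beta_k}(s,a) + \int \vartheta(s,a;w)^\top \alpha(w)\,\rd q(w)$ with $\alpha(w) = \gamma\int_\cS \varphi(s';w)V^{\pi_k}_{r_k}(s')\,\rd s'$, and the key observation is that $\sup_w\|\alpha(w)\|_2$ is \emph{random} over the initialization, because it is bounded by a constant times $\sup_{(s,a)}|u_{\beta_k}(s,a)|$, which inherits sub-Gaussianity from Assumption~\ref{asp:nn}\,(a) via Lemma~\ref{lem:bound_nn}. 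The exponential term arises from truncating $\alpha(w)$ at a level $K$ and paying the sub-Gaussian tail $\PP(\sup_w\|\alpha(w)\|_2 > K) \le \exp(-C_v K^2)$, with $C_v$ proportional to the constant $v$ in \eqref{eq:bound_init}; it has nothing to do with the Gaussian measure $q = N(0,I_d/d)$, whose only role is to make the Rahimi--Recht random-features bound applicable, contributing a $\cO(K^2/m)$ term.

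Concretely, your version cannot produce the stated bound: if the representing weight really were deterministically bounded by $B_\omega$, projecting onto $\cF_{B_\omega,m}$ would incur no truncation error at all, and no exponential term would appear. Conversely, truncating the \emph{domain} of $q$ (i.e., discarding large $\|w\|$) does not help, since the bound on $\|\alpha(w)\|_2$ is uniform in $w$. You need to invoke Assumption~\ref{asp:nn}\,(a) explicitly to control the randomness in $r_k = (1-\gamma)^{-1} u_{\beta_k}$, and then set $B_\omega = K + B_\beta$ so that the truncated target lands in $\cF_{B_\omega,m}$.
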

\begin{proof}
	See \S\ref{sec:pf_conv_td} for a detailed proof.
\end{proof}
The term $ B_\omega^2\cdot \exp(-C_v\cdot B_\omega^2) $ in \eqref{eq:def_errq0} of Proposition \ref{prop:conv_td} characterizes the hardness of estimating the state-action value function $Q^{\pi_k}_{r_k}(s, a)$ by the neural network defined in \eqref{eq:def_nn}, which arises because $\|Q^{\pi_k}_{r_k}(s, a)\|_\infty$ is not uniformly upper bounded across $k$. Note that if we employ the random initialization in \eqref{eq:init-sym}, we have that $C_v = +\infty$. And consequently, such a term vanishes.
We are now ready to establish the global optimality and convergence of Algorithm \ref{alg:ac_gail}.
\begin{theorem}[Global Optimality and Convergence of GAIL]
	\label{th:convergence}
	We set $\eta = 1 / \sqrt{T}$ and $B_{\omega} =  c \cdot ( B_\beta + B_P\cdot (M_0 + B_\beta) ) $ for an absolute constant $c > 0$, and $B_{\theta} = B_{\omega}$ in Algorithm \ref{alg:ac_gail}. Let $\bar\pi$ be the output of Algorithm \ref{alg:ac_gail}. Under Assumptions \ref{asp:measures}-\ref{asp:more}, it holds that
	\begin{align}
	\label{eq:convergence}
	\EE\bigl[\DD_{\cR_{\beta}}(\pi_\rE, \bar \pi)\bigr]
	&\le  \underbrace{\frac{ (1-\gamma)^{-1}\cdot \log | \cA | + 13 \bar B^2 + M_0^2 + 8}{\sqrt{T}}}_{\displaystyle{\text{(i)}}} + \underbrace{2\lambda\cdot L_\psi\cdot \bar B}_{\displaystyle{\text{(ii)}}} + \underbrace{\frac{1}{T}{\sum_{k=0}^{T-1}\varepsilon_k}}_{\displaystyle{\text{(iii)}}}.
	\end{align}
	Here $\bar B = \max\{B_\theta, B_\omega, B_\beta\}$, $\DD_{\cR_\beta}$ is the $\cR_\beta$-distance defined in \eqref{eq:def_dis} with $\cR_\beta = \{ r_\beta(s, a) \given \beta \in S_{B_\beta} \}$, the expectation is taken with respect to the random initialization in \eqref{eq:init} and the $T$ batches, and the error term $\varepsilon_k$ satisfies that
	\begin{align}
	\label{eq:def_vareps}
	&\varepsilon_k = \underbrace{ 2\sqrt{2}\cdot C_h \cdot \bar B \cdot \sigma\cdot N^{-1/2}}_{\displaystyle{\text{(iii.a)}}} + \underbrace{\err_{Q, k}}_{\displaystyle\text{(iii.b)}}  +  \underbrace{\cO(k\cdot  \bar B^{3/2} \cdot m^{-1/4} + \bar B^{5/4} \cdot m^{-1/8} )}_{\displaystyle\text{(iii.c)}},
	\end{align}
	where $C_h$ is defined in Assumption \ref{asp:measures}, $L_\psi$ and $\sigma$ are defined in Assumption \ref{asp:regularizer}, and $\err_{Q, k} = \cO(B_\omega^3 \cdot m^{-1/2} + B_\omega^{5/2}\cdot m^{-1/4} + B_\omega^2\cdot \exp(-C_v\cdot B_\omega^2))$ is the error induced by neural TD (Algorithm \ref{alg:td_learn}).
	
\end{theorem}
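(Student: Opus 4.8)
The plan is to bound the $\cR_\beta$-distance by combining a mirror-descent analysis of the policy player with an online-gradient-ascent analysis of the reward player, linked through a Kullback--Leibler potential function. First I would invoke the mixed-policy identity \eqref{eq:mixed_policy} and let $\beta^\star$ denote a maximizer of $J(\pi_\rE; r_\beta) - J(\bar\pi; r_\beta)$ over $\beta\in S_{B_\beta}$, so that
\begin{align*}
\DD_{\cR_\beta}(\pi_\rE,\bar\pi) = \frac{1}{T}\sum_{k=0}^{T-1}\bigl[J(\pi_\rE; r_{\beta^\star}) - J(\pi_k; r_{\beta^\star})\bigr].
\end{align*}
Using $L(\theta_k,\beta) = J(\pi_\rE; r_\beta) - J(\pi_k; r_\beta) - \lambda\cdot\psi(\beta)$, I would decompose the right-hand side into three pieces: the policy player's cumulative suboptimality $T^{-1}\sum_k[J(\pi_\rE; r_k) - J(\pi_k; r_k)]$, the reward player's regret $T^{-1}\sum_k[L(\theta_k,\beta^\star) - L(\theta_k,\beta_k)]$, and the regularizer mismatch $\lambda\cdot[\psi(\beta^\star) - T^{-1}\sum_k\psi(\beta_k)]$. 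The last piece is immediately bounded by $2\lambda\cdot L_\psi\cdot\bar B$ using the $L_\psi$-Lipschitz continuity of $\psi$ from Assumption \ref{asp:regularizer} together with $\beta^\star,\beta_k\in S_{B_\beta}$, which yields term (ii).

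For the policy player, the key observation is that the natural-policy-gradient update \eqref{eq:update_theta} and \eqref{eq:def_delta} acts, after linearizing the neural network around $W_0$, as an approximate mirror-descent step $\pi_{k+1}(a\given s)\propto\pi_k(a\given s)\cdot\exp(\eta\cdot\hat A^{\pi_k}_{r_k}(s,a))$, where the ascent direction $\langle\phic_{\theta_k}(s,a),\delta_k\rangle$ approximates the advantage $A^{\pi_k}_{r_k}$ through the compatible-function-approximation structure of \eqref{eq:fisher_form} and \eqref{eq:pg_form}. I would then establish a one-step descent inequality for the potential $\EE_{s\sim d_\rE}[\mathrm{KL}(\pi_\rE(\cdot\given s)\,\|\,\pi_k(\cdot\given s))]$, bounding $\eta\cdot\EE_{a\sim\pi_\rE}[A^{\pi_k}_{r_k}(s,a)]$ by the one-step drop in this potential plus a second-order term of order $\eta^2\cdot\|Q^{\pi_k}_{r_k}\|_\infty^2$. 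Invoking the performance difference lemma to identify $\EE_{\nu_\rE}[A^{\pi_k}_{r_k}]$ with $(1-\gamma)\cdot[J(\pi_\rE; r_k) - J(\pi_k; r_k)]$, telescoping over $k$, and using that $\tau_0 = 0$ makes $\pi_0$ uniform so $\mathrm{KL}(\pi_\rE\,\|\,\pi_0)\le\log|\cA|$, I obtain after dividing by $\eta T = \sqrt T$ the $\log|\cA|/((1-\gamma)\sqrt T)$ contribution, together with the $\bar B^2$ and $M_0^2$ constants arising from $\|Q^{\pi_k}_{r_k}\|_\infty\lesssim\bar B + M_0$ in the second-order term; these assemble into term (i).

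For the reward player, I would use that $L(\theta_k,\cdot)$ is, up to linearization error, concave in $\beta$ --- linear in the feature $\phi_\beta$ by \eqref{eq:rg_form} minus the convex regularizer $\lambda\cdot\psi$ --- and apply the standard projected-online-gradient-ascent regret bound with stepsize $\eta = 1/\sqrt T$ to control $T^{-1}\sum_k[L(\theta_k,\beta^\star) - L(\theta_k,\beta_k)]$, again producing an $\cO(1/\sqrt T)$ contribution to term (i). The remaining work is to account, at each step $k$, for the three sources of error separating the idealized updates from the executed ones: the finite-batch statistical error of the estimators $\hat\cI,\hat\nabla_\theta L,\hat\nabla_\beta L$, which through the variance bounds of Assumption \ref{asp:more} and a change of measure governed by $C_h$ (Assumption \ref{asp:measures}) yields (iii.a); the neural-TD policy-evaluation error $\err_{Q,k}$ supplied by Proposition \ref{prop:conv_td}, yielding (iii.b); and the neural-network linearization error, which scales with the growing inverse temperature $\tau_k = k\eta$ and hence contributes the $k$-dependent bound (iii.c).

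The main obstacle is the simultaneous control of the coupled min-max dynamics through a single potential in the presence of all three error sources. In particular, establishing the one-step descent inequality is delicate because the executed ascent direction $\langle\phic_{\theta_k},\delta_k\rangle$ equals the true advantage $A^{\pi_k}_{r_k}$ only up to (a) the inexact least-squares solve and finite-batch estimation defining $\delta_k$, (b) the neural-TD error in $\hat Q_{\omega_k}$, and (c) the feature drift $\|\phi_{\theta_k} - \phi_0\|$ away from initialization. Crucially, since the energy function carries the factor $\tau_k$ which grows linearly in $k$, the linearization error accumulates across iterations, producing the factor $k$ in (iii.c) and requiring the domain radii $B_\theta, B_\omega, B_\beta$ and the width $m$ to be balanced so that the averaged error $T^{-1}\sum_k\varepsilon_k$ remains controlled while the $\log|\cA|$ term still decays at the $1/\sqrt T$ rate.
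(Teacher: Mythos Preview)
Your proposal is correct and follows essentially the same approach as the paper: the same three-term decomposition into policy suboptimality, reward-player regret, and regularizer mismatch; the same KL-to-expert potential and performance-difference lemma for the policy piece; the same approximate-concavity plus projected-gradient-ascent regret for the reward piece; and the same identification of the three error sources (statistical, neural-TD, linearization with the $\tau_k$-driven growth). The only cosmetic difference is that the paper bounds the second-order mirror-descent term via $\|\delta_k^\top\phi_{\theta_k}\|_\infty^2 \le 2M_0^2 + 18B_\theta^2$ (Lemma~\ref{lem:bound_nn} applied inside Lemma~\ref{lem:bound_pe3}) rather than directly via $\|Q^{\pi_k}_{r_k}\|_\infty$, but this is the same estimate.
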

\begin{proof}
	See \S\ref{sec:analysis} for a detailed proof.
\end{proof}
The optimality gap in \eqref{eq:convergence} of Theorem \ref{th:convergence} is measured by the expected $\cR_\beta$-distance $\DD_{\cR_{\beta}}(\pi_{\rE}, \bar \pi)$ between the expert policy $\pi_\rE$ and the learned policy $\bar \pi$. Thus, by showing that the optimality gap is upper bounded by $ \cO(1/\sqrt{T})$, we prove that $\bar \pi$ (approximately) outperforms the expert policy $\pi_\rE$ in expectation when the number of iterations $T$ goes to infinity.
As shown in \eqref{eq:convergence} of Theorem \ref{th:convergence}, the optimality gap is upper bounded by the sum of the three terms. Term (i) corresponds to the $1/\sqrt{T}$ rate of convergence of Algorithm \ref{alg:ac_gail}. Term (ii) corresponds to the bias induced by the regularizer $\lambda\cdot \psi(\beta)$ in the objective function $L(\theta, \beta)$ defined in \eqref{eq:def_il}. Term (iii)  is upper bounded by the sum of the three terms in \eqref{eq:def_vareps} of Theorem \ref{th:convergence}. In detail, term (iii.a) corresponds to the error induced by the variances of $\hat \cI(\theta)$, $\hat \nabla_\theta L(\theta, \beta)$, and $\hat \nabla_\beta L(\theta, \beta)$ defined in \eqref{eq:var1}, \eqref{eq:var2}, and \eqref{eq:var3} of Assumption \ref{asp:more}, which vanishes as the batch size $N$ in Algorithm \ref{alg:ac_gail} goes to infinity. Term (iii.b) is the error of estimating $Q^\pi_r(s, a)$ by $\hat Q_\omega(s, a)$ using neural TD (Algorithm \ref{alg:td_learn}). As shown in Proposition \ref{prop:conv_td}, the estimation error $\epsilon_{Q, k}$ vanishes as $m$ and $B_\omega$ go to infinity. Term (iii.c) corresponds to the linearization error of the neural network defined in \eqref{eq:def_nn}, which is characterized in Lemma \ref{lem:linear_nn}.
Following from Theorem \ref{th:convergence}, it holds for $B_\omega = \Omega((C_v^{-1}\cdot \log T)^{1/2})$, $m = \Omega(\bar B^{10}\cdot T^{6})$, and $N = \Omega(\bar B^2 \cdot T \cdot \sigma^2)$ that $\EE\bigl[\DD_{\cR_{\beta}}(\pi_\rE, \bar \pi)\bigr] = \cO(T^{-1/2} + \lambda)$, which implies the $1/ \sqrt{T}$ rate of convergence of Algorithm \ref{alg:ac_gail} (up to the bias induced by the regularizer).


\section{Proof of Main Results}\label{sec:analysis}

In this section, we present the proof of Theorem \ref{th:convergence}, which establishes the global optimality and convergence of Algorithm \ref{alg:ac_gail}. For notational simplicity,  we write $\pi^s(a) = \pi(a\given s)$ for any policy $\pi$,  state $s\in\cS$, and action $a \in \cA$. For any policies $\pi_1,\pi_2$ and distribution $\mu$ over $\cS$, we denote the expected Kullback-Leibler (KL) divergence by $\kl^{\mu}$, which is defined as $\kl^\mu(\pi_1 \,\|\, \pi_2) = \EE_{s\sim\mu} [ \kl(\pi_1^s \,\|\, \pi_2^s)]$.
 For any visitation measures $d_\pi \in \cP(\cS)$ and $\nu_\pi \in \cP(\cS\times \cA)$, we denote by $\EE_{d_\pi}$ and $\EE_{\nu_\pi}$ the expectations taken with respect to $s\sim d_\pi$ and $(s, a)\sim \nu_\pi$, respectively.

Following from the property of the mixed policy $\bar \pi$ in \eqref{eq:mixed_policy}, we have that
\begin{align}
\label{eq:gap1}
\EE\bigl[\DD_{\cR_{\beta}}(\pi_\rE, \bar \pi)\bigr] &= \EE\bigl[ \max_{\beta'\in S_{B_\beta}} J(\pi_\rE; r_{\beta'}) - J(\bar \pi; r_{\beta'}) \bigr] \nonumber \\
&= \EE\biggl[ \max_{\beta'\in S_{B_\beta}} \frac{1}{T}\sum_{k=0}^{T-1}J(\pi_\rE; r_{\beta'}) - J(\pi_k; r_{\beta'}) \biggr].
\end{align}
We now upper bound the optimality gap in \eqref{eq:gap1} by upper bounding the following difference of expected cumulative rewards,
\begin{align}
\label{eq:def_potential}
& J(\pi_\rE; r_{\beta'}) - J(\pi_k; r_{\beta'}) = \underbrace{J(\pi_\rE; r_k) - J(\pi_k; r_k) }_{\displaystyle\text{(i)}} + \underbrace{ L(\theta_k, \beta') - L(\theta_k, \beta_k) }_{ \displaystyle\text{(ii)} } + \underbrace{\lambda\cdot \bigl(\psi(\beta') - \psi(\beta_k) \bigr)}_{\displaystyle\text{(iii)}},
\end{align}
where $\beta' \in S_{B_\beta} $ is chosen arbitrarily and $L(\theta, \beta)$ is the objective function defined in \eqref{eq:def_il}. Following from Assumption \ref{asp:more} and the fact that $\beta_k, \beta'\in S_{B_\beta}$, we have that
\begin{align}
\label{eq:track_iii}
\lambda\cdot\bigl(\psi(\beta') - \psi(\beta_k) \bigr) \le \lambda\cdot L_\psi \cdot \norm{\beta' - \beta_k}_2 \le \lambda\cdot  L_\psi \cdot 2B_\beta,
\end{align}
which upper bounds term (iii) of \eqref{eq:def_potential}.
It remains to upper bound terms (i) and (ii) of \eqref{eq:def_potential}, which hinges on the one-point convexity of $J(\pi; r)$ with respect to $\pi$ and the (approximate) convexity of $L(\theta, \beta)$ with respect to $\beta$.
\vskip4pt
\noindent{\bf Upper bound of term (i) in \eqref{eq:def_potential}.} In what follows, we upper bound term (i) of \eqref{eq:def_potential}. We first introduce the following cost difference lemma \citep{kakade2002approximately}, which corresponds to the one-point convexity of $J(\pi; r)$ with respect to $\pi$. Recall that $d_\rE \in \cP(\cS)$ is the state visitation measure induced by the expert policy $\pi_\rE$. 
\begin{lemma}[Cost Difference Lemma, Lemma 6.1 in \cite{kakade2002approximately}]
	\label{lem:cost_diff}
	For any policy $\pi$ and reward function $r(s, a)$, it holds that 
	\begin{align}
	\label{eq:cost_diff}
	J(\pi_\rE; r) - J(\pi; r) = (1-\gamma)^{-1}\cdot \EE_{d_\rE} \Bigl[ \bigl\langle Q^\pi_r(s, \cdot), \pi_\rE^s - \pi^s \bigr\rangle_\cA \Bigr],
	\end{align}
	where $\gamma$ is the discount factor.
\end{lemma}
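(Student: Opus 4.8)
The plan is to establish the standard performance difference identity and then recast the advantage function into the stated inner-product form. Throughout I abbreviate $V = V^\pi_r$ and $Q = Q^\pi_r$, the value and state-action value functions of the comparison policy $\pi$, and I consider a trajectory $\{(s_t, a_t)\}_{t\ge 0}$ generated by the expert policy $\pi_\rE$, namely $s_0 \sim \rho$, $a_t \sim \pi_\rE(\cdot \given s_t)$, and $s_{t+1} \sim P(\cdot \given s_t, a_t)$. Since $J(\pi; r) = \EE_{s \sim \rho}[V(s)]$ and $s_0$ follows $\rho$ regardless of the policy, I may write $-J(\pi; r) = \EE_{\pi_\rE}[-V(s_0)]$ and freely take expectation along the expert trajectory.

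First I would introduce the telescoping identity $-V(s_0) = \sum_{t=0}^\infty \gamma^t \bigl(\gamma\cdot V(s_{t+1}) - V(s_t)\bigr)$, which holds because $\gamma^t V(s_t) \to 0$ as $t\to\infty$ (the value function is bounded, as $r$ is bounded on the compact set $\cS\times\cA$, and $\gamma\in(0,1)$). Adding this to the definition $J(\pi_\rE; r) = \EE_{\pi_\rE}[(1-\gamma)\sum_{t=0}^\infty \gamma^t\cdot r(s_t, a_t)]$ gives
\begin{align*}
J(\pi_\rE; r) - J(\pi; r) = \EE_{\pi_\rE}\biggl[ \sum_{t=0}^\infty \gamma^t \bigl( (1-\gamma)\cdot r(s_t, a_t) + \gamma\cdot V(s_{t+1}) - V(s_t) \bigr) \biggr].
\end{align*}
Next I would identify the summand with the advantage function: conditioning on $(s_t, a_t)$ and averaging over $s_{t+1} \sim P(\cdot \given s_t, a_t)$, the Bellman equation $Q(s, a) = (1-\gamma)\cdot r(s, a) + \gamma\cdot \EE_{s'\sim P(\cdot\given s, a)}[V(s')]$ yields
\begin{align*}
\EE\bigl[ (1-\gamma)\cdot r(s_t, a_t) + \gamma\cdot V(s_{t+1}) - V(s_t) \given s_t, a_t \bigr] = Q(s_t, a_t) - V(s_t) = A^\pi_r(s_t, a_t).
\end{align*}
Hence $J(\pi_\rE; r) - J(\pi; r) = \EE_{\pi_\rE}[\sum_{t=0}^\infty \gamma^t\cdot A^\pi_r(s_t, a_t)]$. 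Invoking the definition of the state visitation measure $d_\rE$ in \eqref{eq:def_visitation}, which satisfies $\EE_{s\sim d_\rE}[\,\cdot\,] = (1-\gamma)\cdot\sum_{t=0}^\infty \gamma^t\,\EE_{s_t}[\,\cdot\,]$, converts this discounted sum into $(1-\gamma)^{-1}\cdot \EE_{d_\rE}[\langle A^\pi_r(s, \cdot), \pi_\rE^s\rangle_\cA]$, where the interchange of sum and expectation is justified by boundedness and $\gamma\in(0,1)$.

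Finally I would replace the advantage by $Q^\pi_r$. Because $A^\pi_r(s, a) = Q(s, a) - V(s)$ with $V(s)$ independent of $a$, and because $\pi_\rE^s$ and $\pi^s$ are both probability measures on $\cA$, two elementary facts hold: $\langle 1, \pi_\rE^s - \pi^s\rangle_\cA = 0$, and $\langle A^\pi_r(s, \cdot), \pi^s\rangle_\cA = 0$ since the advantage averages to zero under its own policy. Together they give $\langle A^\pi_r(s, \cdot), \pi_\rE^s\rangle_\cA = \langle A^\pi_r(s, \cdot), \pi_\rE^s - \pi^s\rangle_\cA = \langle Q^\pi_r(s, \cdot), \pi_\rE^s - \pi^s\rangle_\cA$, which is exactly \eqref{eq:cost_diff}. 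The argument is entirely elementary, and there is no genuine obstacle beyond bookkeeping; the only points needing mild care are justifying the telescoping limit and the sum–expectation interchange, and tracking the $(1-\gamma)$ normalization appearing in the definitions of $Q^\pi_r$ and $d_\rE$.
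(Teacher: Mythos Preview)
Your argument is correct and is the standard telescoping derivation of the performance difference identity. Note, however, that the paper does not supply its own proof of this lemma at all: it is quoted verbatim as Lemma~6.1 of \cite{kakade2002approximately} and used without proof. There is therefore nothing in the paper to compare against, but your derivation matches the classical proof and handles the paper's $(1-\gamma)$ normalization of $Q^\pi_r$ and $d_\rE$ correctly.
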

Furthermore, we establish the following lemma, which upper bounds the right-hand side of \eqref{eq:cost_diff} in Lemma \ref{lem:cost_diff}.
\begin{lemma}
	\label{lem:pe0}
	Under Assumptions \ref{asp:measures}-\ref{asp:more}, we have that
	\begin{align*}
	\EE_{d_\rE}\Bigl[\bigl\langle Q^{\pi_k}_{r_k}(s, \cdot), \pi_\rE^s - {\pi_k^s} \bigr\rangle_\cA\Bigr] = \eta^{-1}\cdot \kl^{d_\rE}(\pi_\rE\,\|\, \pi_k ) -\eta^{-1}\cdot \kl^{d_\rE}(\pi_\rE\,\|\, \pi_{k+1} ) + \Delta_k^{\text{(i)}},
	\end{align*}
	where 
	\begin{align}
	\label{eq:pe0}
	\EE\bigl[|\Delta_k^{\text{(i)}}|\bigr] &= 2\sqrt{2}\cdot C_h \cdot B_\theta^{1/2} \cdot \sigma^{1/2}\cdot N^{-1/4}    + \err_{Q, k} +  \eta \cdot  (M_0^2 + 9B_\theta^2) \nonumber \\
	& \quad +  \cO(\eta^{-1}\cdot \tau_{k+1} \cdot B_\theta^{3/2} \cdot m^{-1/4} + B_\theta^{5/4} \cdot m^{-1/8} ).
	\end{align}
	Here $M_0$ is defined in Assumption \ref{asp:bound_init}, $\sigma$ is defined in Assumption \ref{asp:more}, $N$ is the batch size in \eqref{eq:est_fisher}-\eqref{eq:est_rg}, and $\err_{Q, k} =\cO(B_\omega^3 \cdot m^{-1/2} + B_\omega^{5/2}\cdot m^{-1/4} + B_\omega^2\cdot \exp(-C_v\cdot B_\omega^2))$ for an absolute constant $C_v>0$, which depends on the absolute constant $v$ in Assumption \ref{asp:bound_init}.
\end{lemma}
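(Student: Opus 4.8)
The plan is to recognize this as the one-step natural-policy-gradient (mirror-descent) lemma: the inner product $\langle Q^{\pi_k}_{r_k}(s,\cdot),\pi_\rE^s-\pi_k^s\rangle_\cA$ is exactly the first-order term produced when one expands the drop in KL divergence $\kl^{d_\rE}(\pi_\rE\,\|\,\pi_k)-\kl^{d_\rE}(\pi_\rE\,\|\,\pi_{k+1})$ along the policy update, and $\Delta_k^{\text{(i)}}$ absorbs every deviation from this exact identity. I would start from the elementary identity
\[
\kl^{d_\rE}(\pi_\rE\,\|\,\pi_k)-\kl^{d_\rE}(\pi_\rE\,\|\,\pi_{k+1})=\EE_{d_\rE}\Big[\sum_{a\in\cA}\pi_\rE(a\given s)\log\frac{\pi_{k+1}(a\given s)}{\pi_k(a\given s)}\Big],
\]
and split the summand by a three-point decomposition into a $(\pi_\rE^s-\pi_k^s)$-weighted part and a $\pi_k^s$-weighted part. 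The latter equals $-\kl(\pi_k^s\,\|\,\pi_{k+1}^s)$, while in the former the log-partition function of the energy-based policy \eqref{eq:para_pi} cancels because $\pi_\rE^s-\pi_k^s$ integrates to zero over $\cA$, leaving $\sum_{a}(\pi_\rE(a\given s)-\pi_k(a\given s))\,[\tau_{k+1}u_{\theta_{k+1}}(s,a)-\tau_k u_{\theta_k}(s,a)]$.

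Next I would substitute the update. From $\tau_{k+1}\theta_{k+1}=\tau_k\theta_k-\eta\delta_k$ in \eqref{eq:update_theta} and the linearization of the neural network \eqref{eq:def_nn} (controlled by Lemma \ref{lem:linear_nn}), the energy increment is $-\eta\,\delta_k^\top\phi_0(s,a)$ up to the linearization error, which produces term (iii.c) of \eqref{eq:pe0}. Using again that $\pi_\rE^s-\pi_k^s$ sums to zero, I recenter $\phi_0$ to the temperature-adjusted score $\phic_{\theta_k}$ of Proposition \ref{prop:npg_form}. The crux is then that the minimizer $\delta_k$ in \eqref{eq:def_delta} approximately solves $\EE_{\nu_k}[\phic_{\theta_k}\phic_{\theta_k}^\top]\delta_k=-\EE_{\nu_k}[\hat Q_{\omega_k}\phic_{\theta_k}]$ by the forms \eqref{eq:fisher_form}--\eqref{eq:pg_form}, so $\delta_k^\top\phic_{\theta_k}$ is the $L_2(\nu_k)$-projection of $-\hat Q_{\omega_k}$ onto the feature span; hence $-\eta\,\delta_k^\top\phic_{\theta_k}\approx\eta\,Q^{\pi_k}_{r_k}$, which recovers $\eta\,\langle Q^{\pi_k}_{r_k}(s,\cdot),\pi_\rE^s-\pi_k^s\rangle_\cA$. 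Dividing by $\eta$ and carrying the $-\kl(\pi_k^s\,\|\,\pi_{k+1}^s)$ term across yields the stated identity with $\Delta_k^{\text{(i)}}$ collecting the residuals.

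It then remains to bound $\EE[|\Delta_k^{\text{(i)}}|]$ termwise. Replacing $\hat Q_{\omega_k}$ by the true $Q^{\pi_k}_{r_k}$ contributes $\err_{Q,k}$ via Proposition \ref{prop:conv_td}; replacing the empirical $\hat\cI(\theta_k),\hat\nabla_\theta L(\theta_k,\beta_k)$ by their population counterparts contributes term (iii.a), whose $\cO(\sigma^{1/2}N^{-1/4})$ scaling arises from taking a square root of the $\cO(\sigma^2/N)$ variance bounds in \eqref{eq:var1}--\eqref{eq:var2} of Assumption \ref{asp:more} after a Cauchy--Schwarz step; the second-order bound $\kl(\pi_k^s\,\|\,\pi_{k+1}^s)=\cO(\eta^2(M_0^2+B_\theta^2))$, obtained by Taylor-expanding the log-partition function and using that the energy increment is controlled by $M_0$ and $B_\theta$, gives the $\eta\cdot(M_0^2+9B_\theta^2)$ term after dividing by $\eta$; and Lemma \ref{lem:linear_nn} supplies (iii.c). \textbf{The main obstacle} is the change of measure from the visitation measure $\nu_k$, on which both the least-squares fit of $\delta_k$ and the neural-TD guarantee are stated, to the expert measure $d_\rE$ on which the inner product is evaluated; I expect to control this through the concentrability coefficients of Assumption \ref{asp:visitation}\,(b), which is precisely why the bound on (iii.a) carries the factor $C_h$. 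Verifying that the \emph{constrained} minimizer $\delta_k\in S_{B_\theta}$ (rather than the unconstrained normal-equation solution) still produces a small $L_2(\nu_k)$ residual, and that the Fisher matrix is sufficiently nondegenerate for this projection argument, will be the most delicate part.
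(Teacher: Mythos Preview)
Your outline matches the paper's proof: the same KL-difference identity, the same three-way split of $\Delta_k^{\text{(i)}}$ into (a) the discrepancy between $\log(\pi_{k+1}/\pi_k)$ and $\eta\hat Q_{\omega_k}$, (b) the TD error $\hat Q_{\omega_k}-Q^{\pi_k}_{r_k}$, and (c) a second-order KL term, with each piece bounded exactly as you describe (linearization via Lemma~\ref{lem:linear_nn}, Proposition~\ref{prop:conv_td}, the variance bounds of Assumption~\ref{asp:more}, and the concentrability change of measure from Assumption~\ref{asp:measures}(b)). Two small deviations: the paper centers the split at $\pi_{k+1}$ rather than $\pi_k$, so the second-order term is $\kl(\pi_{k+1}^s\,\|\,\pi_k^s)$ and is bounded via Pinsker plus H\"older rather than a Taylor expansion of the log-partition function; either route gives the $\eta\cdot(M_0^2+9B_\theta^2)$ contribution.

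On the step you flag as most delicate: the paper does \emph{not} argue that the Fisher information is nondegenerate. Instead it exploits the choice $B_\theta=B_\omega$, so that the TD output $\omega_k$ lies in $S_{B_\theta}$ and is therefore a feasible comparator in the constrained least-squares problem \eqref{eq:def_delta}. This gives
\[
\bigl\|\hat\cI(\theta_k)\delta_k-\tau_k\hat\nabla_\theta L(\theta_k,\beta_k)\bigr\|_2\le \bigl\|\hat\cI(\theta_k)\omega_k-\tau_k\hat\nabla_\theta L(\theta_k,\beta_k)\bigr\|_2,
\]
and the right-hand side is small because, after passing to population quantities and using \eqref{eq:fisher_form}--\eqref{eq:pg_form}, it equals $\tau_k^2\|\EE_{\nu_k}[\phic_{\theta_k}\cdot\omega_k^\top(\phic_{\theta_k}-\phic_{\omega_k})]\|_2$, which is $\cO(B_\theta^{3/2}m^{-1/4})$ by the linearization lemma. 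In other words, the ``compatible function approximation'' is built in: $\hat Q_{\omega_k}=\omega_k^\top\phi_{\omega_k}\approx\omega_k^\top\phi_{\theta_k}$ already lies in the feature span up to linearization error, so $\omega_k$ nearly solves the normal equations and $\delta_k$ can only do better. No spectral lower bound on $\cI(\theta_k)$ is needed.
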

\begin{proof}
	See \S\ref{sec:pf_lem_pe0} for a detailed proof.
\end{proof}
Combining Lemmas \ref{lem:cost_diff} and \ref{lem:pe0}, we have that
\begin{align}
\label{eq:track_i}
J(\pi_\rE; r_k) - J(\pi_k; r_k)  \le \frac{\kl^{d_\rE}(\pi_\rE\,\|\, \pi_k ) - \kl^{d_\rE}(\pi_\rE\,\|\, \pi_{k+1} ) + \eta \cdot  \Delta_k^{\text{(i)}}}{\eta\cdot (1-\gamma)} ,
\end{align}
which upper bounds term (i) of \eqref{eq:def_potential}. Here $\Delta_k^{\text{(i)}}$ is upper bounded in \eqref{eq:pe0} of Lemma \ref{lem:pe0}.

\vskip4pt

\noindent{\bf Upper bound of term (ii) in \eqref{eq:def_potential}.} In what follows, we upper bound term (ii) of \eqref{eq:def_potential}.
We first establish the following lemma, which characterizes the (approximate) convexity of $L(\theta, \beta)$ with respect to $\beta$.
\begin{lemma}
	\label{lem:reward}
	Under Assumption \ref{asp:linear_nn}, it holds for any $\beta'\in S_{B_\beta}$ that
	\begin{align*}
	\EE_\init\bigl[  L(\theta_k, \beta') - L(\theta_k, \beta_k)  \bigr] = \EE_\init\bigl[\nabla_\beta L(\theta_k ,\beta_k)^\top (\beta' - \beta_k) \bigr]
	 + \cO( B_\beta ^{3/2}\cdot m^{-1/4}).
	\end{align*} 
\end{lemma}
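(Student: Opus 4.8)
The plan is to exploit the near-linearity of the two-layer ReLU network in its weights near the shared initialization $W_0$: since both $\beta'$ and $\beta_k$ lie in $S_{B_\beta}$, the map $\beta\mapsto L(\theta_k,\beta)$ is affine in $\beta$ up to a controlled linearization error, and the regularizer enters only through convexity.

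First I would isolate the data-dependent part of $L$ from the regularizer. Using $r_\beta=(1-\gamma)^{-1}u_\beta$ and $J(\pi;r)=\EE_{\nu_\pi}[r]$,
\[
L(\theta_k,\beta')-L(\theta_k,\beta_k)=(1-\gamma)^{-1}\bigl(\EE_{\nu_\rE}-\EE_{\nu_k}\bigr)\bigl[u_{\beta'}-u_{\beta_k}\bigr]-\lambda\bigl(\psi(\beta')-\psi(\beta_k)\bigr),
\]
while \eqref{eq:rg_form} gives $\nabla_\beta L(\theta_k,\beta_k)=(1-\gamma)^{-1}(\EE_{\nu_\rE}-\EE_{\nu_k})[\phi_{\beta_k}]-\lambda\nabla_\beta\psi(\beta_k)$. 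Thus it suffices to show the data term is approximately affine, namely $(\EE_{\nu_\rE}-\EE_{\nu_k})[u_{\beta'}-u_{\beta_k}]=(\EE_{\nu_\rE}-\EE_{\nu_k})[\phi_{\beta_k}]^\top(\beta'-\beta_k)+\cO(B_\beta^{3/2}m^{-1/4})$, and then to absorb the regularizer using convexity of $\psi$ (Assumption \ref{asp:regularizer}), which contributes the nonpositive Bregman term $-\lambda(\psi(\beta')-\psi(\beta_k)-\nabla_\beta\psi(\beta_k)^\top(\beta'-\beta_k))\le 0$.

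For the data term I would linearize at $W_0$. Writing $\hat u_\beta(s,a)=\phi_{W_0}(s,a)^\top\beta$, split $u_{\beta'}-u_{\beta_k}=\phi_{W_0}^\top(\beta'-\beta_k)+(u_{\beta'}-\hat u_{\beta'})-(u_{\beta_k}-\hat u_{\beta_k})$; the two parenthesized local linearization errors are each $\cO(B_\beta^{3/2}m^{-1/4})$ in $L_2(\mu)$ for $\mu\in\{\nu_\rE,\nu_k\}$ by the network-linearization estimate (Lemma \ref{lem:linear_nn}). It then remains to replace the fixed feature $\phi_{W_0}$ by the current feature $\phi_{\beta_k}$. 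Since $[\phi_W]_l$ differs from $[\phi_{W_0}]_l$ only on neurons whose activation $\ind\{(s,a)^\top[W]_l>0\}$ flips, and a flip forces $|(s,a)^\top[W_0]_l|\le\|[\beta_k]_l-[W_0]_l\|_2$, the anti-concentration bound of Assumption \ref{asp:linear_nn} controls the per-neuron flip probability by $c\,\|[\beta_k]_l-[W_0]_l\|_2/\|[W_0]_l\|_2$; summing over $l\in[m]$ with $\|[W_0]_l\|_2\approx 1$ and $\sum_l\|[\beta_k]_l-[W_0]_l\|_2\le\sqrt m\,B_\beta$ gives $\EE_\mu\|\phi_{\beta_k}-\phi_{W_0}\|_2^2=\cO(B_\beta m^{-1/2})$. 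A Cauchy--Schwarz step then bounds $\EE_\mu|(\phi_{\beta_k}-\phi_{W_0})^\top(\beta'-\beta_k)|$ by $\cO(B_\beta^{3/2}m^{-1/4})$, matching the advertised rate. Collecting these errors for $\mu=\nu_\rE$ and $\mu=\nu_k$, taking $\EE_\init$, and combining with the regularizer bound yields the claim.

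The main obstacle is precisely this feature-perturbation control: every error term ultimately reduces to the measure of the region where ReLU neurons switch activation between $W_0$ and $\beta_k$, and bounding it hinges on the anti-concentration property of Assumption \ref{asp:linear_nn}. A secondary subtlety is that this property is stated for $\nu_k$ (and $\rho_k$) whereas the linearization is also needed under the expert measure $\nu_\rE$; I would either invoke the same anti-concentration bound for $\nu_\rE$ or transfer the $L_2(\nu_k)$ error to $\nu_\rE$ via the concentrability coefficient $C_h$ of Assumption \ref{asp:visitation}, which is absorbed into the $\cO(\cdot)$ constant. Finally, I would note that, because the regularizer enters only through the nonpositive Bregman term, the displayed identity is really the upper-bound direction needed to control term (ii) in \eqref{eq:def_potential}, with the two-sided equality holding for the affine data part.
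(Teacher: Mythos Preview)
Your proposal is correct and follows the same overall logic as the paper: decompose the data term into an affine-in-$\beta$ piece plus a feature-perturbation remainder, bound the latter by the linearization estimate of Lemma~\ref{lem:linear_nn}, and absorb the regularizer via convexity of $\psi$. The one substantive difference is the point at which you linearize. You pass through the initialization $W_0$, writing $u_{\beta'}-u_{\beta_k}=\phi_{W_0}^\top(\beta'-\beta_k)+(u_{\beta'}-\hat u_{\beta'})-(u_{\beta_k}-\hat u_{\beta_k})$ and then separately swapping $\phi_{W_0}$ for $\phi_{\beta_k}$ with a Cauchy--Schwarz/activation-flip argument. The paper instead linearizes directly at $\beta_k$ via $u_{\beta'}-u_{\beta_k}=\phi_{\beta_k}^\top(\beta'-\beta_k)+(\phi_{\beta'}-\phi_{\beta_k})^\top\beta'$ and invokes Lemma~\ref{lem:linear_nn} once on the second term with $W=\beta'$, $W_1=\beta_k$, $W_2=\beta'$; this collapses your two error estimates into one and avoids re-deriving the activation-flip bound. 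Your observations that the displayed identity is really the upper-bound direction (because the regularizer contributes a nonpositive Bregman term) and that the linearization under $\nu_\rE$ requires either extending Assumption~\ref{asp:linear_nn} to $\nu_\rE$ or transferring via $C_h$ are both accurate; the paper's proof likewise proves only ``$\le$'' and applies Lemma~\ref{lem:linear_nn} under $\nu_\rE$ without further comment.
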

\begin{proof}
	See \S\ref{sec:pf_lem_reward} for a detailed proof.
\end{proof}
The term $\cO( B_\beta ^{3/2}\cdot m^{-1/4})$ in Lemma \ref{lem:reward} arises from the linearization error of the neural network, which is characterized in Lemma \ref{lem:linear_nn}.
Based on Lemma \ref{lem:reward}, we establish the following lemma, which upper bounds term (ii) of \eqref{eq:def_potential}.
\begin{lemma}
	\label{lem:re0}
	Under Assumptions \ref{asp:linear_nn} and \ref{asp:more}, we have that 
	\begin{align*}
	L(\theta_k, \beta') - L(\theta_k, \beta_k) \le \eta^{-1}\cdot \norm{\beta_k - \beta'}_2^2 - \eta^{-1}\cdot  \norm{\beta_{k+1} - \beta'}_2^2 - \eta^{-1}\cdot  \norm{\beta_{k+1} - \beta_k}_2^2 + \Delta_k^{\text{(ii)}},
	\end{align*}
	where 
	\begin{align}
	\label{eq:re0}
	\EE\bigl[|\Delta_k^{\text{(ii)}}|\bigr] = \eta \cdot \bigl( (2+\lambda \cdot L_\psi)^2 + \sigma^2 \cdot  N^{-1} \bigr) + 2 B_\beta \cdot \sigma\cdot N^{-1/2} + \cO(  B_\beta^{3/2}\cdot m^{-1/4}).
	\end{align}

\end{lemma}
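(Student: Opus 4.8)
The plan is to treat Lemma \ref{lem:re0} as a standard one-step analysis of projected stochastic gradient ascent on the reward parameter, fed by the approximate convexity already supplied by Lemma \ref{lem:reward}. Write $\bar g_k = \nabla_\beta L(\theta_k,\beta_k)$ for the true gradient and $g_k = \hat\nabla_\beta L(\theta_k,\beta_k)$ for its estimator in \eqref{eq:est_rg}, so that the update \eqref{eq:update_beta} reads $\beta_{k+1} = \proj_{S_{B_\beta}}(\beta_k + \eta\cdot g_k)$. First I would invoke Lemma \ref{lem:reward} to pass from the function-value gap to the linear term, $L(\theta_k,\beta') - L(\theta_k,\beta_k) = \bar g_k^\top(\beta'-\beta_k) + \cO(B_\beta^{3/2} m^{-1/4})$, collecting the linearization error into $\Delta_k^{\text{(ii)}}$. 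I would then split the true gradient into the estimator plus noise, $\bar g_k^\top(\beta'-\beta_k) = g_k^\top(\beta'-\beta_k) + (\bar g_k - g_k)^\top(\beta'-\beta_k)$, and handle the two pieces separately.

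For the estimator term $g_k^\top(\beta'-\beta_k)$ I would apply the Pythagorean (three-point) inequality for the projection onto the convex set $S_{B_\beta}$: since $\beta'\in S_{B_\beta}$ and $\beta_{k+1}$ is the projection of $\beta_k+\eta g_k$, one has $\norm{\beta_{k+1}-\beta'}^2 + \norm{\beta_k+\eta g_k-\beta_{k+1}}^2 \le \norm{\beta_k+\eta g_k-\beta'}^2$. Expanding both sides and cancelling the common $\eta^2\norm{g_k}^2$ yields, after rearrangement, a bound of $g_k^\top(\beta'-\beta_k)$ by the telescoping differences $\eta^{-1}\bigl(\norm{\beta_k-\beta'}^2 - \norm{\beta_{k+1}-\beta'}^2 - \norm{\beta_{k+1}-\beta_k}^2\bigr)$ plus a cross term proportional to $g_k^\top(\beta_{k+1}-\beta_k)$. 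I would control the cross term by a weighted Cauchy--Schwarz/AM--GM step, choosing the weights so that the induced $+c\cdot\norm{\beta_{k+1}-\beta_k}^2$ does not fully cancel the negative $\norm{\beta_{k+1}-\beta_k}^2$ coming from the projection inequality; this is exactly why the statement retains the $-\eta^{-1}\norm{\beta_{k+1}-\beta_k}^2$ term, at the cost of an $\eta\cdot\norm{g_k}^2$ remainder.

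The two stochastic pieces are then bounded in expectation. Since the estimator \eqref{eq:est_rg} is (conditionally) centered at $\bar g_k$, the variance bound \eqref{eq:var3} gives $\EE\norm{g_k}^2 = \norm{\bar g_k}^2 + \EE\norm{g_k-\bar g_k}^2 \le (2+\lambda\cdot L_\psi)^2 + \sigma^2/N$, where $\norm{\bar g_k}\le 2+\lambda\cdot L_\psi$ follows from the feature bound $\norm{\phi_\beta(s,a)}_2\le 1$ together with the $L_\psi$-Lipschitz regularizer of Assumption \ref{asp:regularizer}; this converts the $\eta\norm{g_k}^2$ remainder into the term $\eta\bigl((2+\lambda L_\psi)^2 + \sigma^2 N^{-1}\bigr)$ of \eqref{eq:re0}. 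For the noise inner product I would use Cauchy--Schwarz, $|(\bar g_k-g_k)^\top(\beta'-\beta_k)| \le \norm{\bar g_k-g_k}\cdot\norm{\beta'-\beta_k}$, bound $\norm{\beta'-\beta_k}\le 2B_\beta$ by the diameter of $S_{B_\beta}$, and apply Jensen with \eqref{eq:var3} to get $\EE\norm{\bar g_k-g_k}\le\sigma N^{-1/2}$, which produces the $2B_\beta\cdot\sigma\cdot N^{-1/2}$ term. All of these, together with the linearization error from Lemma \ref{lem:reward}, are absorbed into $\Delta_k^{\text{(ii)}}$.

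I expect the only genuinely delicate point to be the bookkeeping around the gradient noise rather than any single estimate. Concretely, the asserted inequality is pathwise, so the noise inner product must be carried inside $\Delta_k^{\text{(ii)}}$ and bounded in $L_1$ (in absolute value); one must resist discarding it via its zero conditional mean, since that would only control $\EE[\Delta_k^{\text{(ii)}}]$ and not the $\EE\bigl[|\Delta_k^{\text{(ii)}}|\bigr]$ that the statement requires. A secondary subtlety is reconciling the expectation over the random initialization in Lemma \ref{lem:reward} with the expectation over the batches here, and tuning the AM--GM weights so the coefficients collapse to the clean $\eta^{-1}$ form. Given Lemma \ref{lem:reward} and Assumption \ref{asp:more}, the remainder is a routine projected-ascent computation.
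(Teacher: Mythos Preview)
Your proposal is correct and essentially mirrors the paper's proof: it too starts from Lemma~\ref{lem:reward}, applies the projection optimality/three-point inequality to produce the telescoping terms, and then bounds the same three residual pieces (the cross term $(\beta_{k+1}-\beta_k)^\top g_k$, the noise inner product $(\bar g_k-g_k)^\top(\beta'-\beta_k)$, and the linearization error) in the same way. The only cosmetic difference is that the paper controls the cross term directly via the nonexpansiveness of the projection, $\|\beta_{k+1}-\beta_k\|_2 \le \eta\,\|g_k\|_2$, giving $|(\beta_{k+1}-\beta_k)^\top g_k|\le \eta\,\|g_k\|_2^2$ immediately, rather than the weighted AM--GM step you propose.
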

\begin{proof}
	See \S\ref{sec:pf_lem_re0} for a detailed proof.
\end{proof}
By Lemma \ref{lem:re0}, we have that
\begin{align}
\label{eq:track_ii}
L(\theta_k, \beta') - L(\theta_k, \beta_k) \le \eta^{-1}\cdot \bigl(\norm{\beta_k - \beta'}_2^2 - \norm{\beta_{k+1} - \beta'}_2^2 - \norm{\beta_{k+1} - \beta_k}_2^2 \bigr) + \Delta_k^{\text{(ii)}}  ,
\end{align} 
which upper bounds term (ii) of \eqref{eq:def_potential}. Here $\Delta_k^{\text{(ii)}}$ is upper bounded in \eqref{eq:re0} of Lemma \ref{lem:re0}.

\vskip4pt

Plugging \eqref{eq:track_iii}, \eqref{eq:track_i}, and \eqref{eq:track_ii} into \eqref{eq:def_potential}, we obtain that
\begin{align}
\label{eq:potential}
& J(\pi_\rE; r_{\beta'}) - J(\pi_k; r_{\beta'})  \\
&\quad \le  \frac{\kl^{d_\rE}(\pi_\rE\,\|\, \pi_k ) - \kl^{d_\rE}(\pi_\rE\,\|\, \pi_{k+1} )}{\eta\cdot(1-\gamma)} + \eta^{-1} \cdot \bigl( \norm{\beta_k - \beta'}_2^2  - \norm{\beta_{k+1} - \beta'}_2^2\bigr) + 2\lambda\cdot L_\psi\cdot B_\beta + \Delta_k. \nonumber
\end{align}
Here $\Delta_k = \Delta_k^{\text{(i)}} + \Delta_k^{\text{(ii)}}$, where $\Delta_k^{\text{(i)}}$ and $\Delta_k^{\text{(ii)}}$ are upper bounded in \eqref{eq:pe0} and \eqref{eq:re0} of Lemmas \ref{lem:pe0} and \ref{lem:re0}, respectively. Note that the upper bound of $\Delta_k$ does not depend on $\theta$ and $\beta$. Upon telescoping \eqref{eq:potential} with respect to $k$, we obtain that 
\begin{align}
\label{eq:telescope}
 J(\pi_\rE; r_{\beta'}) - J(\bar \pi; r_{\beta'})  & = \frac{1}{T} \sum_{k=0}^{T-1}\bigl[ J(\pi_\rE; r_{\beta'}) - J(\pi_k; r_{\beta'}) \bigr]  \\
& \le  \frac{(1-\gamma)^{-1}\cdot \kl^{d_\rE}(\pi_\rE\,\|\, \pi_0 ) + \norm{\beta_0 - \beta'}_2^2}{ \eta\cdot T} + 2\lambda \cdot L_\psi\cdot B_\beta + \frac{1}{T} \sum_{k=0}^{T-1} |\Delta_k|. \nonumber
\end{align}
Following from the fact that $\tau_0 = 0$ and the parameterization of $\pi_\theta$ in \eqref{eq:para_pi}, it holds that $\pi_0^s$ is the uniform distribution over $\cA$ for any $s\in \cS$. Thus, we have $\kl^{d_\rE}(\pi_\rE\,\|\, \pi_0 ) \le \log |\cA|$. Meanwhile, following from the fact that $\beta' \in S_{B_\beta}$, it holds that $\norm{\beta' - \beta_0}_2 \le B_\beta$. Finally, by setting $\eta = T^{-1/2}$, $\tau_k = k\cdot \eta$, and $\bar B = \max\{B_\theta, B_\beta\}$ in \eqref{eq:telescope}, we have that
\begin{align*}
\EE\bigl[\DD_{\cR_{\beta}}(\pi_\rE, \bar \pi)\bigr] &= \EE\bigl[ \max_{\beta' \in S_{B_\beta}}J(\pi_\rE; r_{\beta'}) - J(\bar \pi; r_{\beta'}) \bigr]\nonumber \\
& \le \frac{(1-\gamma)^{-1} \cdot \log|\cA| + 4B_\beta^2}{ \eta\cdot T} + 2\lambda\cdot L_\psi\cdot B_\beta  + \frac{\EE\bigl[ \max_{\beta'} \sum_{k=0}^{T-1} |\Delta_k|\bigr]}{T}\nonumber\\
 & = \frac{(1-\gamma)^{-1}\cdot \log | \cA | + 13 \bar B^2 + M_0^2 + 8 }{ \sqrt{T}} + 2\lambda\cdot L_\psi\cdot \bar B + \frac{\sum_{k=0}^{T-1}\varepsilon_k}{T}.
\end{align*}
Here $\varepsilon_k$ is upper bounded as follows,
\begin{align*}
&\varepsilon_k = 2\sqrt{2}\cdot C_h \cdot \bar B \cdot \sigma\cdot N^{-1/2} + \err_{Q, k} +  \cO(k \cdot  \bar B^{3/2} \cdot m^{-1/4} + \bar B^{5/4} \cdot m^{-1/8}),
\end{align*}
where $\err_{Q, k} = \cO(B_\omega^3 \cdot  m^{-1/2} + B_\omega^{5/2} \cdot m^{-1/4} + B_\omega^2\cdot \exp(-C_v\cdot B_\omega^2))$ for an absolute constant $C_v >0$.
Thus, we complete the proof of Theorem \ref{th:convergence}.

\bibliographystyle{ims}
\bibliography{rl_ref}

\begin{thebibliography}{50}
\expandafter\ifx\csname natexlab\endcsname\relax\def\natexlab#1{#1}\fi
\expandafter\ifx\csname url\endcsname\relax
  \def\url#1{\texttt{#1}}\fi
\expandafter\ifx\csname urlprefix\endcsname\relax\def\urlprefix{}\fi

\bibitem[{Agarwal et~al.(2019)Agarwal, Kakade, Lee and
  Mahajan}]{agarwal2019optimality}
\text{Agarwal, A.}, \text{Kakade, S.~M.}, \text{Lee, J.~D.} and \text{Mahajan,
  G.} (2019).
\newblock Optimality and approximation with policy gradient methods in {M}arkov
  decision processes.
\newblock \textit{arXiv preprint arXiv:1908.00261}.

\bibitem[{Altman(1999)}]{altman1999constrained}
\text{Altman, E.} (1999).
\newblock \textit{Constrained {Markov} decision processes}, vol.~7.
\newblock CRC Press.

\bibitem[{Anthony and Bartlett(2009)}]{anthony2009neural}
\text{Anthony, M.} and \text{Bartlett, P.~L.} (2009).
\newblock \textit{Neural network learning: Theoretical foundations}.
\newblock Cambridge University Press.

\bibitem[{Antos et~al.(2008)Antos, Szepesv{\'a}ri and
  Munos}]{antos2008learning}
\text{Antos, A.}, \text{Szepesv{\'a}ri, C.} and \text{Munos, R.} (2008).
\newblock Learning near-optimal policies with {B}ellman-residual minimization
  based fitted policy iteration and a single sample path.
\newblock \textit{Machine Learning}, \textbf{71} 89--129.

\bibitem[{Arjovsky et~al.(2017)Arjovsky, Chintala and
  Bottou}]{arjovsky2017wasserstein}
\text{Arjovsky, M.}, \text{Chintala, S.} and \text{Bottou, L.} (2017).
\newblock Wasserstein {GAN}.
\newblock \textit{arXiv preprint arXiv:1701.07875}.

\bibitem[{Bhandari and Russo(2019)}]{bh2019global}
\text{Bhandari, J.} and \text{Russo, D.} (2019).
\newblock Global optimality guarantees for policy gradient methods.
\newblock \textit{arXiv preprint arXiv:1906.01786}.

\bibitem[{Cai et~al.(2019{\natexlab{a}})Cai, Hong, Chen and
  Wang}]{cai2019global}
\text{Cai, Q.}, \text{Hong, M.}, \text{Chen, Y.} and \text{Wang, Z.}
  (2019{\natexlab{a}}).
\newblock On the global convergence of imitation learning: A case for linear
  quadratic regulator.
\newblock \textit{arXiv preprint arXiv:1901.03674}.

\bibitem[{Cai et~al.(2019{\natexlab{b}})Cai, Yang, Jin and
  Wang}]{cai2019provably}
\text{Cai, Q.}, \text{Yang, Z.}, \text{Jin, C.} and \text{Wang, Z.}
  (2019{\natexlab{b}}).
\newblock Provably efficient exploration in policy optimization.
\newblock \textit{arXiv preprint arXiv:1912.05830}.

\bibitem[{Cai et~al.(2019{\natexlab{c}})Cai, Yang, Lee and
  Wang}]{cai2019neural}
\text{Cai, Q.}, \text{Yang, Z.}, \text{Lee, J.~D.} and \text{Wang, Z.}
  (2019{\natexlab{c}}).
\newblock Neural temporal-difference learning converges to global optima.
\newblock \textit{arXiv preprint arXiv:1905.10027}.

\bibitem[{Chen et~al.(2020)Chen, Wang, Liu, Yang, Li, Wang and
  Zhao}]{chen2020computation}
\text{Chen, M.}, \text{Wang, Y.}, \text{Liu, T.}, \text{Yang, Z.}, \text{Li,
  X.}, \text{Wang, Z.} and \text{Zhao, T.} (2020).
\newblock On computation and generalization of generative adversarial imitation
  learning.
\newblock \textit{arXiv preprint arXiv:2001.02792}.

\bibitem[{Farahmand et~al.(2016)Farahmand, Ghavamzadeh, Szepesv{\'a}ri and
  Mannor}]{farahmand2016regularized}
\text{Farahmand, A.-m.}, \text{Ghavamzadeh, M.}, \text{Szepesv{\'a}ri, C.} and
  \text{Mannor, S.} (2016).
\newblock Regularized policy iteration with nonparametric function spaces.
\newblock \textit{The Journal of Machine Learning Research}, \textbf{17}
  4809--4874.

\bibitem[{Farahmand et~al.(2010)Farahmand, Szepesv{\'a}ri and
  Munos}]{farahmand2010error}
\text{Farahmand, A.-m.}, \text{Szepesv{\'a}ri, C.} and \text{Munos, R.} (2010).
\newblock Error propagation for approximate policy and value iteration.
\newblock In \textit{Advances in Neural Information Processing Systems}.

\bibitem[{Finn et~al.(2016)Finn, Levine and Abbeel}]{finn2016guided}
\text{Finn, C.}, \text{Levine, S.} and \text{Abbeel, P.} (2016).
\newblock Guided cost learning: Deep inverse optimal control via policy
  optimization.
\newblock In \textit{International Conference on Machine Learning}.

\bibitem[{Goodfellow et~al.(2014)Goodfellow, Pouget-Abadie, Mirza, Xu,
  Warde-Farley, Ozair, Courville and Bengio}]{goodfellow2014generative}
\text{Goodfellow, I.}, \text{Pouget-Abadie, J.}, \text{Mirza, M.}, \text{Xu,
  B.}, \text{Warde-Farley, D.}, \text{Ozair, S.}, \text{Courville, A.} and
  \text{Bengio, Y.} (2014).
\newblock Generative adversarial nets.
\newblock In \textit{Advances in Neural Information Processing Systems}.

\bibitem[{Ho and Ermon(2016)}]{ho2016generative}
\text{Ho, J.} and \text{Ermon, S.} (2016).
\newblock Generative adversarial imitation learning.
\newblock In \textit{Advances in Neural Information Processing Systems}.

\bibitem[{Hofmann et~al.(2008)Hofmann, Sch{\"o}lkopf and
  Smola}]{hofmann2008kernel}
\text{Hofmann, T.}, \text{Sch{\"o}lkopf, B.} and \text{Smola, A.~J.} (2008).
\newblock Kernel methods in machine learning.
\newblock \textit{The Annals of Statistics} 1171--1220.

\bibitem[{Jin et~al.(2019)Jin, Yang, Wang and Jordan}]{jin2019provably}
\text{Jin, C.}, \text{Yang, Z.}, \text{Wang, Z.} and \text{Jordan, M.~I.}
  (2019).
\newblock Provably efficient reinforcement learning with linear function
  approximation.
\newblock \textit{arXiv preprint arXiv:1907.05388}.

\bibitem[{Kakade and Langford(2002)}]{kakade2002approximately}
\text{Kakade, S.} and \text{Langford, J.} (2002).
\newblock Approximately optimal approximate reinforcement learning.
\newblock In \textit{International Conference on Machine Learning}, vol.~2.

\bibitem[{Kakade(2002)}]{kakade2002natural}
\text{Kakade, S.~M.} (2002).
\newblock A natural policy gradient.
\newblock In \textit{Advances in Neural Information Processing Systems}.

\bibitem[{Kuefler et~al.(2017)Kuefler, Morton, Wheeler and
  Kochenderfer}]{kuefler2017imitating}
\text{Kuefler, A.}, \text{Morton, J.}, \text{Wheeler, T.} and
  \text{Kochenderfer, M.} (2017).
\newblock Imitating driver behavior with generative adversarial networks.
\newblock In \textit{IEEE Intelligent Vehicles Symposium}. IEEE.

\bibitem[{Lazaric et~al.(2016)Lazaric, Ghavamzadeh and
  Munos}]{lazaric2010analysis}
\text{Lazaric, A.}, \text{Ghavamzadeh, M.} and \text{Munos, R.} (2016).
\newblock Analysis of classification-based policy iteration algorithms.
\newblock \textit{The Journal of Machine Learning Research}, \textbf{17}
  583--612.

\bibitem[{Levine and Koltun(2012)}]{levine2012continuous}
\text{Levine, S.} and \text{Koltun, V.} (2012).
\newblock Continuous inverse optimal control with locally optimal examples.
\newblock \textit{arXiv preprint arXiv:1206.4617}.

\bibitem[{Liu et~al.(2019)Liu, Cai, Yang and Wang}]{liu2019neural}
\text{Liu, B.}, \text{Cai, Q.}, \text{Yang, Z.} and \text{Wang, Z.} (2019).
\newblock Neural proximal/trust region policy optimization attains globally
  optimal policy.
\newblock \textit{arXiv preprint arXiv:1906.10306}.

\bibitem[{Merel et~al.(2017)Merel, Tassa, TB, Srinivasan, Lemmon, Wang, Wayne
  and Heess}]{merel2017learning}
\text{Merel, J.}, \text{Tassa, Y.}, \text{TB, D.}, \text{Srinivasan, S.},
  \text{Lemmon, J.}, \text{Wang, Z.}, \text{Wayne, G.} and \text{Heess, N.}
  (2017).
\newblock Learning human behaviors from motion capture by adversarial
  imitation.
\newblock \textit{arXiv preprint arXiv:1707.02201}.

\bibitem[{Munos and Szepesv{\'a}ri(2008)}]{munos2008finite}
\text{Munos, R.} and \text{Szepesv{\'a}ri, C.} (2008).
\newblock Finite-time bounds for fitted value iteration.
\newblock \textit{The Journal of Machine Learning Research}, \textbf{9}
  815--857.

\bibitem[{Nesterov(2013)}]{nesterov2013introductory}
\text{Nesterov, Y.} (2013).
\newblock \textit{Introductory lectures on convex optimization: A basic
  course}, vol.~87.
\newblock Springer Science \& Business Media.

\bibitem[{Ng and Russell(2000)}]{ng2000algorithms}
\text{Ng, A.~Y.} and \text{Russell, S.~J.} (2000).
\newblock Algorithms for inverse reinforcement learning.
\newblock In \textit{International Conference on Machine Learning}.

\bibitem[{Peters and Schaal(2008)}]{peters2008natural}
\text{Peters, J.} and \text{Schaal, S.} (2008).
\newblock Natural actor-critic.
\newblock \textit{Neurocomputing}, \textbf{71} 1180--1190.

\bibitem[{Pomerleau(1991)}]{pomerleau1991efficient}
\text{Pomerleau, D.~A.} (1991).
\newblock Efficient training of artificial neural networks for autonomous
  navigation.
\newblock \textit{Neural Computation}, \textbf{3} 88--97.

\bibitem[{Rafique et~al.(2018)Rafique, Liu, Lin and Yang}]{rafi2018noncon}
\text{Rafique, H.}, \text{Liu, M.}, \text{Lin, Q.} and \text{Yang, T.} (2018).
\newblock Non-convex min-max optimization: Provable algorithms and applications
  in machine learning.
\newblock \textit{arXiv:1810.02060}.

\bibitem[{Rahimi and Recht(2008)}]{rahimi2008random}
\text{Rahimi, A.} and \text{Recht, B.} (2008).
\newblock Random features for large-scale kernel machines.
\newblock In \textit{Advances in Neural Information Processing Systems}.

\bibitem[{Rahimi and Recht(2009)}]{NIPS2008_3495}
\text{Rahimi, A.} and \text{Recht, B.} (2009).
\newblock Weighted sums of random kitchen sinks: Replacing minimization with
  randomization in learning.
\newblock \textit{Advances in Neural Information Processing Systems}
  1313--1320.

\bibitem[{Ross and Bagnell(2010)}]{ross2010efficient}
\text{Ross, S.} and \text{Bagnell, D.} (2010).
\newblock Efficient reductions for imitation learning.
\newblock In \textit{International Conference on Artificial Intelligence and
  Statistics}.

\bibitem[{Ross et~al.(2011)Ross, Gordon and Bagnell}]{ross2011reduction}
\text{Ross, S.}, \text{Gordon, G.} and \text{Bagnell, D.} (2011).
\newblock A reduction of imitation learning and structured prediction to
  no-regret online learning.
\newblock In \textit{International Conference on Artificial Intelligence and
  Statistics}.

\bibitem[{Russell(1998)}]{russell1998learning}
\text{Russell, S.} (1998).
\newblock Learning agents for uncertain environments.
\newblock In \textit{Conference on Learning Theory}.

\bibitem[{Scherrer et~al.(2015)Scherrer, Ghavamzadeh, Gabillon, Lesner and
  Geist}]{scherrer2015approximate}
\text{Scherrer, B.}, \text{Ghavamzadeh, M.}, \text{Gabillon, V.}, \text{Lesner,
  B.} and \text{Geist, M.} (2015).
\newblock Approximate modified policy iteration and its application to the game
  of {T}etris.
\newblock \textit{The Journal of Machine Learning Research}, \textbf{16}
  1629--1676.

\bibitem[{Sutton and Barto(2018)}]{sutton2018reinforcement}
\text{Sutton, R.~S.} and \text{Barto, A.~G.} (2018).
\newblock \textit{Reinforcement learning: An introduction}.
\newblock MIT Press.

\bibitem[{Sutton et~al.(2000)Sutton, McAllester, Singh and
  Mansour}]{sutton2000policy}
\text{Sutton, R.~S.}, \text{McAllester, D.~A.}, \text{Singh, S.~P.} and
  \text{Mansour, Y.} (2000).
\newblock Policy gradient methods for reinforcement learning with function
  approximation.
\newblock In \textit{Advances in Neural Information Processing Systems}.

\bibitem[{Syed et~al.(2008)Syed, Bowling and Schapire}]{syed2008apprenticeship}
\text{Syed, U.}, \text{Bowling, M.} and \text{Schapire, R.~E.} (2008).
\newblock Apprenticeship learning using linear programming.
\newblock In \textit{International Conference on Machine Learning}.

\bibitem[{Szepesv{\'a}ri and Munos(2005)}]{szepesvari2005finite}
\text{Szepesv{\'a}ri, C.} and \text{Munos, R.} (2005).
\newblock Finite time bounds for sampling based fitted value iteration.
\newblock In \textit{International Conference on Machine Learning}. ACM.

\bibitem[{Tai et~al.(2018)Tai, Zhang, Liu and Burgard}]{tai2018socially}
\text{Tai, L.}, \text{Zhang, J.}, \text{Liu, M.} and \text{Burgard, W.} (2018).
\newblock Socially compliant navigation through raw depth inputs with
  generative adversarial imitation learning.
\newblock In \textit{IEEE International Conference on Robotics and Automation}.
  IEEE.

\bibitem[{van~de Geer and Muro(2014)}]{vandegeer2014onhigher}
\text{van~de Geer, S.} and \text{Muro, A.} (2014).
\newblock On higher order isotropy conditions and lower bounds for sparse
  quadratic forms.
\newblock \textit{Electronic Journal of Statistics}, \textbf{8} 3031--3061.

\bibitem[{Wang et~al.(2019)Wang, Cai, Yang and Wang}]{wang2019neural}
\text{Wang, L.}, \text{Cai, Q.}, \text{Yang, Z.} and \text{Wang, Z.} (2019).
\newblock Neural policy gradient methods: Global optimality and rates of
  convergence.
\newblock \textit{arXiv preprint arXiv:1909.01150}.

\bibitem[{Xu et~al.(2019{\natexlab{a}})Xu, Gao and Gu}]{xu2019improved}
\text{Xu, P.}, \text{Gao, F.} and \text{Gu, Q.} (2019{\natexlab{a}}).
\newblock An improved convergence analysis of stochastic variance-reduced
  policy gradient.
\newblock \textit{arXiv preprint arXiv:1905.12615}.

\bibitem[{Xu et~al.(2019{\natexlab{b}})Xu, Gao and Gu}]{xu2019sample}
\text{Xu, P.}, \text{Gao, F.} and \text{Gu, Q.} (2019{\natexlab{b}}).
\newblock Sample efficient policy gradient methods with recursive variance
  reduction.
\newblock \textit{arXiv preprint arXiv:1909.08610}.

\bibitem[{Yang and Wang(2019{\natexlab{a}})}]{yang2019sample}
\text{Yang, L.} and \text{Wang, M.} (2019{\natexlab{a}}).
\newblock Sample-optimal parametric {Q}-learning using linearly additive
  features.
\newblock In \textit{International Conference on Machine Learning}.

\bibitem[{Yang and Wang(2019{\natexlab{b}})}]{yang2019reinforcement}
\text{Yang, L.~F.} and \text{Wang, M.} (2019{\natexlab{b}}).
\newblock Reinforcement leaning in feature space: Matrix bandit, kernels, and
  regret bound.
\newblock \textit{arXiv preprint arXiv:1905.10389}.

\bibitem[{Yu et~al.(2016)Yu, Zhang, Wang and Yu}]{yu2016seqgan}
\text{Yu, L.}, \text{Zhang, W.}, \text{Wang, J.} and \text{Yu, Y.} (2016).
\newblock {SeqGAN}: Sequence generative adversarial nets with policy gradient.
\newblock \textit{arXiv preprint arXiv:1609.05473}.

\bibitem[{Zhang et~al.(2019{\natexlab{a}})Zhang, Koppel, Zhu and
  Ba{\c{s}}ar}]{zhang2019global}
\text{Zhang, K.}, \text{Koppel, A.}, \text{Zhu, H.} and \text{Ba{\c{s}}ar, T.}
  (2019{\natexlab{a}}).
\newblock Global convergence of policy gradient methods to (almost) locally
  optimal policies.
\newblock \textit{arXiv preprint arXiv:1906.08383}.

\bibitem[{Zhang et~al.(2019{\natexlab{b}})Zhang, Yang and
  Başar}]{zhang2019policy}
\text{Zhang, K.}, \text{Yang, Z.} and \text{Başar, T.} (2019{\natexlab{b}}).
\newblock Policy optimization provably converges to {N}ash equilibria in
  zero-sum linear quadratic games.
\newblock \textit{arXiv preprint arXiv:1906.00729}.

\end{thebibliography}

\newpage
\appendix 
\section{Neural Networks}
\label{sec:linear_nn}
In what follows, we present the properties of the neural network defined in \eqref{eq:def_nn}. First, we define the following function class.
\begin{definition}[Function Class]
	\label{def:func_class}
	For $B>0$ and $m \in \NN_+$, we define 
	\begin{align*}
	\cF_{B, m} = \bigl\{ W^\top \phi_0(s, a) \,\big|\, W\in \RR^{md},~ \|W - W_0\|_2 \le B \bigr\},
	\end{align*}
	where $\phi_0(s,a)$ is the feature vector defined in \eqref{eq:def_feature} with $W = W_0$.
\end{definition} 
As shown in \cite{rahimi2008random}, the feature $\phi_0(s, a)$ induces a reproducing kernel Hilbert space (RKHS), namely $\cH$. When $m$ goes to infinity, $\cF_{B, m}$ approximates a ball in $\cH$, which captures a rich class of functions \citep{hofmann2008kernel,rahimi2008random}. Furthermore, we obtain the following lemma from \cite{cai2019neural}, which characterizes the linearization error of the neural network defined in \eqref{eq:def_nn}.
\begin{lemma}[Linearization Error, Lemma 5.1 in \cite{cai2019neural}]
	\label{lem:linear_nn}
	Under Assumption~\ref{asp:linear_nn}, it holds for any $W, W_1, W_2 \in S_B$ that, 
	\begin{align*}
	&\EE_\init \Bigl[ \bigl\| W^\top \phi_{W_1}(s, a) - W^\top \phi_{W_2}(s, a) \bigr \|_{2, \mu}^2 \Bigr] = \cO(B^3 \cdot m^{-1/2}), \\
	&\EE_\init \Bigl[ \norm[\big]{W^\top \phi_{W_1}(s, a) - W^\top \phi_{W_2}(s, a)}_{1,\mu} \Bigr] = \cO(B^{3/2} \cdot m^{-1/4}),
	\end{align*} 
	where $\phi_W(s, a)$ is the feature vector defined in \eqref{eq:def_feature} and $\mu\in\cP(\cS\times \cA)$ is a distribution that satisfies Assumption \ref{asp:linear_nn}. 
\end{lemma}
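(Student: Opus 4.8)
The statement is a standard local-linearization estimate, and my plan is to reduce everything to the squared $L_2(\mu)$ bound, from which the $L_1(\mu)$ bound follows for free. Since $\mu$ is a probability measure, two applications of Jensen's inequality give, for $g(s,a) = W^\top\phi_{W_1}(s,a) - W^\top\phi_{W_2}(s,a)$, the chain $\EE_\init\|g\|_{1,\mu} \le \EE_\init\|g\|_{2,\mu} \le (\EE_\init\|g\|_{2,\mu}^2)^{1/2}$. Hence establishing the first display, $\EE_\init\|g\|_{2,\mu}^2 = \cO(B^3 m^{-1/2})$, immediately yields the claimed $\cO(B^{3/2} m^{-1/4})$ for the $L_1$ norm, so I would focus entirely on the $L_2^2$ estimate.

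Writing out the feature, $g(s,a) = m^{-1/2}\sum_{l=1}^m b_l\,(s,a)^\top[W]_l\,\iota_l(s,a)$, where $\iota_l(s,a) = \ind\{(s,a)^\top[W_1]_l > 0\} - \ind\{(s,a)^\top[W_2]_l > 0\} \in \{-1,0,1\}$ is the ReLU activation mismatch of the $l$-th neuron. The heart of the argument is that $\iota_l$ is supported on a thin flip region: if the two activations disagree, then the pre-activation at initialization, $(s,a)^\top[W_0]_l$, must lie within distance $r_l := \|[W_1]_l - [W_0]_l\|_2 + \|[W_2]_l - [W_0]_l\|_2$ of zero. This follows from $\|(s,a)\|_2 \le 1$ together with the elementary fact that a sign change of a scalar between values $x_1$ and $x_2$ forces the reference value $x_0$ to obey $|x_0| \le |x_1 - x_0| + |x_2 - x_0|$. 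Crucially, on this same event the amplitude $|(s,a)^\top[W]_l|$ is itself controlled, being at most $r_l + \|[W]_l - [W_0]_l\|_2$, because $(s,a)^\top[W_0]_l$ is small there.

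With this localization in hand, I would bound $|g(s,a)| \le m^{-1/2}\sum_l |(s,a)^\top[W]_l|\cdot\ind\{\text{flip at }l\}$ and invoke the anti-concentration Assumption \ref{asp:linear_nn}\,(a), applied with the vector $[W_0]_l$ and width $r_l$, to bound the $\mu$-mass of the flip slab by $c\cdot r_l/\|[W_0]_l\|_2$. Squaring, applying Cauchy–Schwarz across the $m$ neurons, and integrating against $\mu$ reduces the estimate to a sum of the shape $m^{-1}\sum_l \bigl(r_l + \|[W]_l - [W_0]_l\|_2\bigr)^2\cdot r_l/\|[W_0]_l\|_2$. Using that $[W_0]_l \sim N(0, I_d/d)$ concentrates $\|[W_0]_l\|_2$ near $1$ (so $\EE_\init[1/\|[W_0]_l\|_2] = \cO(1)$ for $d \ge 3$), together with the perturbation budgets $\sum_l \|[W_i]_l - [W_0]_l\|_2^2 \le B^2$ and $\sum_l \|[W]_l - [W_0]_l\|_2^2 \le B^2$, I would finish by repeated Cauchy–Schwarz over the neurons to collect the powers $B^3$ and $m^{-1/2}$.

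The main obstacle is the bookkeeping in this final step. Because the estimate must hold uniformly for every $W, W_1, W_2 \in S_B$, which are only constrained to the $W_0$-dependent ball and may be correlated with the initialization, I cannot simply average over the signs $b_l$ to annihilate the off-diagonal terms of $\EE_\mu[g^2]$; instead these cross terms must be controlled directly, and it is precisely the Cauchy–Schwarz-over-neurons route used to do so that produces $m^{-1/2}$ rather than a smaller power. Balancing the anti-concentration bound, the Gaussian concentration of $\|[W_0]_l\|_2$, and the $\ell_2$ budget on the perturbations is the delicate part, and the exact constant tracking follows Lemma 5.1 in \cite{cai2019neural}. Once the $L_2^2$ estimate is secured, the $L_1$ estimate is immediate from the Jensen reduction above.
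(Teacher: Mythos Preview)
Your proposal is correct and follows essentially the same route as the paper's proof: bound the activation mismatch by a thin slab around the hyperplane of $[W_0]_l$, control its $\mu$-mass via Assumption~\ref{asp:linear_nn}\,(a), apply Cauchy--Schwarz across the $m$ neurons together with the $\ell_2$ budget $\sum_l\|[W_i]_l-[W_0]_l\|_2^2\le B^2$ and the finiteness of $\EE_\init[1/\|[W_0]_l\|_2^2]$, and deduce the $L_1$ bound from the $L_2^2$ bound by Jensen. The only organizational difference is that the paper first proves the estimate for the pair $(W',W_0)$ and then reaches an arbitrary pair $(W_1,W_2)$ by the triangle inequality, whereas you work directly with $(W_1,W_2)$ via the slab radius $r_l=\|[W_1]_l-[W_0]_l\|_2+\|[W_2]_l-[W_0]_l\|_2$; both are equally valid.
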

\begin{proof}
	See \S\ref{sec:pf_lem_linear_nn} for a detailed proof.
\end{proof}
Following from Lemma \ref{lem:linear_nn}, the function class $\cF_{B, m}$ defined in Definition \ref{def:func_class} is a first-order approximation of the class of the neural networks defined in \eqref{eq:def_nn}. Meanwhile, we establish the following lemma to characterize the sub-Gaussian property of the neural network defined in \eqref{eq:def_nn}.
\begin{lemma}
	\label{lem:bound_nn}
	Under Assumption \ref{asp:nn}, for any $W, W'\in S_B$, it holds that $\sup_{(s, a) \in \cS\times \cA} |W^\top \phi_{W'}(s, a)|$ is sub-Gaussian, where the randomness comes from the random initialization $W_0$ in the definition of $S_B$ in \eqref{eq:def_s_b}. Moreover, it holds that
	\begin{align*}
	\EE_\init \Bigl[\sup_{(s,a) \in \cS\times \cA}\bigl| W^\top \phi_{W'}(s, a) \bigr|^2 \Bigr]  \le 2M_0^2 + 18B^2
	\end{align*}
	and that
	\begin{align*}
	\PP\Bigl( \sup_{(s,a) \in \cS\times \cA}\bigl| W^\top \phi_{W'}(s, a) \bigr| > t \Bigr)\le  \exp(-v \cdot t^2 / 2), \quad \forall t > 2M_0 + 6B.
	\end{align*}
\end{lemma}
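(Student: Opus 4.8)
The plan is to reduce the supremum $\sup_{(s,a)\in\cS\times\cA}\bigl|W^\top\phi_{W'}(s,a)\bigr|$ to the supremum of the network at initialization, $\bar U=\sup_{(s,a)\in\cS\times\cA}|u_0(s,a)|$, whose second moment and sub-Gaussian tail are already controlled by Assumption \ref{asp:nn}(a). Writing $u_0(s,a)=W_0^\top\phi_{W_0}(s,a)$, I would split, for any fixed $(s,a)$ and any $W,W'\in S_B$,
\[
W^\top\phi_{W'}(s,a)=u_0(s,a)+(W-W_0)^\top\phi_{W'}(s,a)+W_0^\top\bigl(\phi_{W'}(s,a)-\phi_{W_0}(s,a)\bigr),
\]
and show that the two correction terms are bounded by $O(B)$ uniformly in both $(s,a)$ and the initialization $W_0$. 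This yields a deterministic pointwise bound $\sup_{(s,a)}|W^\top\phi_{W'}(s,a)|\le \bar U+3B$, after which the stated estimates are transferred from $\bar U$.

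For the weight-perturbation term I would first bound the feature norm: since $\norm{[\phi_{W'}(s,a)]_l}_2=m^{-1/2}|b_l|\,\ind\{(s,a)^\top[W']_l>0\}\,\norm{(s,a)}_2\le m^{-1/2}$, summing over $l\in[m]$ gives $\norm{\phi_{W'}(s,a)}_2\le 1$, so by Cauchy--Schwarz $|(W-W_0)^\top\phi_{W'}(s,a)|\le\norm{W-W_0}_2\le B$. The harder term, which I expect to be the main obstacle, is the activation-pattern change $W_0^\top(\phi_{W'}-\phi_{W_0})(s,a)$: its $l$-th summand vanishes unless the $l$-th neuron's activation differs between $W'$ and $W_0$. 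The key observation is that on this flip event $0$ lies between $(s,a)^\top[W_0]_l$ and $(s,a)^\top[W']_l$, forcing $|(s,a)^\top[W_0]_l|\le|(s,a)^\top([W_0]_l-[W']_l)|\le\norm{[W_0-W']_l}_2$. Letting $\cL$ denote the set of flipped neurons and using $|\cL|\le m$ together with Cauchy--Schwarz across neurons,
\[
\bigl|W_0^\top(\phi_{W'}(s,a)-\phi_{W_0}(s,a))\bigr|\le m^{-1/2}\sum_{l\in\cL}\norm{[W_0-W']_l}_2\le m^{-1/2}\,|\cL|^{1/2}\,\norm{W_0-W'}_2\le B,
\]
uniformly in $(s,a)$ and $W_0$. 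Combining the three pieces gives the claimed pointwise bound.

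Finally I would transfer the moment and tail control. For the second moment, from $\sup_{(s,a)}|W^\top\phi_{W'}(s,a)|\le\bar U+3B$ and the arithmetic--geometric inequality $6B\,\bar U\le\bar U^2+9B^2$ I get $\EE_\init[(\bar U+3B)^2]\le 2\,\EE_\init[\bar U^2]+18B^2\le 2M_0^2+18B^2$, using $\EE_\init[\bar U^2]\le M_0^2$ from Assumption \ref{asp:nn}(a). For the tail, since $\sup_{(s,a)}|W^\top\phi_{W'}(s,a)|$ differs from the sub-Gaussian random variable $\bar U$ by the bounded shift $3B$, it is itself sub-Gaussian; quantitatively, for $t>2M_0+6B$ one has $t-3B>2M_0$, so Assumption \ref{asp:nn}(a) applies to $\PP(\bar U>t-3B)$, and after rescaling the sub-Gaussian parameter to absorb the $O(B)$ shift this is bounded by $\exp(-v\cdot t^2/2)$, as claimed. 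The only delicate point throughout is the activation-pattern term, where the boundary geometry of the ReLU --- rather than any probabilistic anti-concentration --- is what keeps the bound uniform.
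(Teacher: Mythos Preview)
Your proposal is correct and follows the same overall strategy as the paper---reduce $\sup_{(s,a)}|W^\top\phi_{W'}(s,a)|$ to $\bar U+3B$ and then transfer the moment and tail control from Assumption~\ref{asp:nn}(a)---but you use a different decomposition and a different argument for the ``hard'' term. The paper centers the splitting at $W'$ rather than $W_0$,
\[
\bigl|W^\top\phi_{W'}(s,a)\bigr|\le\bigl|u_0(s,a)\bigr|+\bigl|(W-W')^\top\phi_{W'}(s,a)\bigr|+\bigl|u_{W'}(s,a)-u_0(s,a)\bigr|,
\]
bounds the middle piece by $2B$ via Cauchy--Schwarz, and dispatches the last piece with the one-line observation that $W\mapsto u_W(s,a)$ is $1$-Lipschitz (since $\nabla_W u_W=\phi_W$ almost everywhere and $\|\phi_W\|_2\le1$), giving $|u_{W'}-u_0|\le\|W'-W_0\|_2\le B$. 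Your decomposition trades a tighter second term ($B$ instead of $2B$) for having to control $W_0^\top(\phi_{W'}-\phi_{W_0})$ directly via the ReLU activation-flip geometry; this is exactly the mechanism that drives the linearization estimate in Lemma~\ref{lem:linear_nn}, so you are effectively reusing that machinery here. Both routes land on the same pointwise inequality (your explicit bounds actually give $\bar U+2B$, slightly better than the $3B$ you quote), and the final transfer of the second-moment and tail inequalities is identical in both arguments.
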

\begin{proof}
	See \S\ref{sec:pf_lem_bound_nn} for a detailed proof.
\end{proof}

\subsection{Proof of Lemma \ref{lem:linear_nn}}
\label{sec:pf_lem_linear_nn}
\begin{proof}
	We consider any $W, W' \in S_B$. By the definition of $\phi_W(s, a)$ in \eqref{eq:def_feature} and the triangle inequality, we have that
	\begin{align}
	\label{eq:ln-1}
	&\bigl|W^\top \phi_{W'}(s, a) - W^\top \phi_0(s, a)\bigr | \nonumber\\ &\quad\le \frac{1}{\sqrt{m}} \sum_{l=1}^{m} \bigl| [W]_l^\top(s, a) \bigr| \cdot\Bigl| \ind\bigl\{(s,a)^\top[W']_l>0\bigr\}- \ind\bigl\{(s,a)^\top[W_0]_l>0\bigr\}\Bigr|.  
	\end{align}
	We now upper bound the right-hand side of \eqref{eq:ln-1}. For the term $| [W]_l^\top(s, a) |$ in \eqref{eq:ln-1}, we have that
	\begin{align}
	\label{eq:ln-2}
	\bigl| [W]_l^\top(s, a) \bigr| &\le \bigl| [W_0]_l^\top(s, a) \bigr| + \Bigl| \bigl([W]_l - [W_0]_l\bigr)^\top(s, a) \Bigr|\nonumber\\
	&\le \bigl| [W_0]_l^\top(s, a) \bigr| + \norm[\big]{[W]_l - [W_0]_l}_2,
	\end{align}
	where the first inequality follows from the triangle inequality and the second inequality follows from the Cauchy-Schwartz inequality and the fact that $\norm{(s, a)}_2\le 1$. To upper bound the term $ | \ind\{(s,a)^\top[W']_l>0\}- \ind\{(s,a)^\top[W_0]_l>0\}| $ on the right-hand side of \eqref{eq:ln-1}, note that $ \ind\{(s,a)^\top[W']_l>0\}\neq \ind\{(s,a)^\top[W_0]_l>0\}$ implies that 
	\begin{align*}
	\bigl|[W_0]_l^\top (s, a)\bigr| \le \bigl| [W']_l^\top (s, a) - [W_0]_l^\top (s, a) \bigr| \le \norm[\big]{[W']_l - [W_0]_l}_2.
	\end{align*}
	Thus, we have that 
	\begin{align}
	\label{eq:ln-4}
	\Bigl| \ind\bigl\{(s,a)^\top[W']_l>0\bigr\}- \ind\bigl\{(s,a)^\top[W_0]_l>0\bigr\}\Bigr| \le \ind\Bigl\{\bigl|(s,a)^\top[W_0]_l\bigr|\le \bigl\|[W']_l-[W_0]_l\bigr\|_2 \Bigr\}.
	\end{align}
	Plugging \eqref{eq:ln-2} and \eqref{eq:ln-4} into \eqref{eq:ln-1}, we have that
	\begin{align*}
	&\bigl|W^\top \phi_{W'}(s, a) - W^\top \phi_0(s, a)\bigr |\nonumber\\
	&\quad \le 
	\frac{1}{\sqrt{m}}\sum_{l=1}^m 
	\ind\Bigl\{\bigl|(s,a)^\top[W_0]_l\bigr|\le \bigl\|[W']_l-[W_0]_l\bigr\|_2 \Bigr\} \cdot \Bigl(\bigl|(s,a)^\top[W_0]_l\bigr|
	+ \bigl\|[W]_l-[W_0]_l \bigr\|_2
	\Bigr)\nonumber\\
	&\quad \le 
	\frac{1}{\sqrt{m}}\sum_{l=1}^m 
	\ind\Big\{\big|(s,a)^\top[W_0]_l\big|\le \big\|[W']_l-[W_0]_l\big\|_2 \Big\} \cdot \Bigl(\big\|[W']_l-[W_0]_l \big\|_2
	+\big\|[W]_l -[W_0]_l \big\|_2
	\Bigr).
	\end{align*}
	By the fact that $W, W' \in S_B$, we obtain that
	\begin{align*}
	\bigl| W^\top \phi_{W'}(s, a) - W^\top \phi_0(s, a) \bigr|^2\le \frac{4B^2}{m}\sum_{l=1}^m 
	\ind\Bigl\{\bigl|(s,a)^\top[W_0]_l\bigr|\le \bigl\|[W']_l-[W_0]_l \bigr\|_2 \Bigr\}.
	\end{align*}
    By setting $y = \|[W']_l-[W_0]_l \|_2$ in Assumption \ref{asp:linear_nn}, we have that
	\begin{align*}
	\bigl\| W^\top \phi_{W'}(s, a) - W^\top \phi_0(s, a) \bigr\|_{2, \mu}^2 \le \frac{8 B^2}{m}\sum_{l=1}^m 
	\frac{c\cdot  \bigl\|[W']_l -[W_0]_l\bigr\|_2}{\bigl\|[W_0]_l\bigr\|_2}.
	\end{align*}
	Taking the expectation with respect to the random initialization in \eqref{eq:init} and using the Cauchy-Schwartz inequality, we have that
	\begin{align*}
	&\EE_\init\Bigl[\big\| W^\top \phi_{W'}(s, a) - W^\top \phi_0(s, a) \big\|_{2, \mu}^2\Bigr] \nonumber \\
	&\quad \le 
	\EE_\init\biggl[\frac{8 cB^2}{m}
		\Bigl(\sum_{l=1}^m \bigl\|[W']_l-[W_0]_l\bigr\|_2^2\Bigr)^{1/2} \cdot  \Bigl(\sum_{l=1}^m 1/\bigl\|[W_0]_l\bigr\|_2^2\Bigr)^{1/2}
	\biggr]\\
	&\quad \le \frac{8cB^3}{m}\EE_\init\biggl[\Bigl(
		\sum_{l=1}^m 1/\bigl\|[W_0]_l\bigr\|_2^2
	\Bigr)^{1/2}\biggr]\\
	&\quad \le \frac{8 cB^3}{\sqrt{m}}\Bigl(\EE_{w\sim N(0,I_d/d)}\bigl[1/\|w\|_2^2\bigr]\Bigr)^{1/2}\\
	&\quad = \cO(B^3\cdot m^{-1/2}),
	\end{align*}
	where the second inequality follows from the fact that $\norm{W' - W_0}_2 \le B$, the third inequality follows from Jensen's inequality, and the last inequality follows from Assumption \ref{asp:linear_nn} and Lemma \ref{lem:linear_nn}.
	Thus, for any $W, W_1, W_2 \in S_B$, we have that
	\begin{align*}
	&\EE_\init\Bigl[\big\| W^\top \phi_{W_1}(s, a) - W^\top \phi_{W_2}(s, a) \big\|_{2,  \mu}^2\Bigr] \nonumber \\
	&\quad \le 2\EE_\init\Bigl[\big\| W^\top \phi_{W_1}(s, a) - W^\top \phi_0(s, a) \big\|_{2, \mu}^2\Bigr] + 2\EE_\init\Bigl[\big\| W^\top \phi_{W_2}(s, a) - W^\top \phi_0(s, a) \big\|_{2, \mu}^2\Bigr] \nonumber \\
	&\quad = \cO(B^3\cdot m^{-1/2}).
	\end{align*}
	Moreover, following from the Cauchy-Schwartz inequality, we have that $\norm{\cdot}_{1, \mu} \le \norm{\cdot}_{2, \mu}$. Thus, by Jensen's inequality, we have that
	\begin{align*}
	&\EE_\init\Bigl[\big\| W^\top \phi_{W_1}(s, a) - W^\top \phi_{W_2}(s, a) \big\|_{1, \mu}\Bigr] \nonumber\\
	&\quad \le \EE_\init \Bigl[ \norm[\big]{W^\top \phi_{W_1}(s, a) - W^\top \phi_{W_2}(s, a)  }_{2, \mu} \Bigr] \nonumber\\
	&\quad = \cO(B^{3/2}\cdot m^{-1/4}),
	\end{align*}
    which completes the proof of Lemma \ref{lem:linear_nn}.
\end{proof}

\subsection{Proof of Lemma \ref{lem:bound_nn}}
\label{sec:pf_lem_bound_nn}
In what follows, we present the proof of Lemma \ref{lem:bound_nn}.
\begin{proof}
	Recall that we write $u_W(s, a) = W^\top \phi_W(s, a)$ and $u_0(s, a)= u_{W_0}(s, a)$. Then, we have
	\begin{align}
	\label{eq:b_nn1}
	\big|W^\top \phi_{W'}(s, a)\big| &\le \bigl|u_0(s, a) \bigr| + \bigl|(W - W')^\top \phi_{W'}(s, a)\bigr| + \bigl| u_{W'}(s, a) - u_0(s, a)\bigr| \nonumber \\
	&\le \bigl|u_0(s, a) \bigr| + \norm{W - W'}_2\cdot \norm[\big]{\phi_{W'}(s, a)}_2 + \bigl| u_{W'}(s, a) - u_0(s, a)\bigr|,
	\end{align}
	where the last inequality follows from the Cauchy-Schwartz inequality. It suffices to upper bound the three terms on the right-hand side of \eqref{eq:b_nn1}. Note that we have $W, W'\in S_B$ and $\norm{\phi_{W'}(s, a)}_2 \le 1$. We have that
	\begin{align}
	\label{eq:b_nn2}
	\norm{W - W'}_2 \cdot \norm[\big]{\phi_{W'}(s, a)}_2 \le 2B.
	\end{align}
	It remains to upper bound the term $ | u_{W'}(s, a) - u_0(s, a)|$ in \eqref{eq:b_nn1}. Note that $u_W(s, a)$ is almost everywhere differentiable with respect to $W$. Also, it holds that $\nabla_W u_W(s, a) = \phi_W(s, a)$. Thus, following from the mean-value theorem and the Cauchy-Schwartz inequality, we have that
	\begin{align}
	\label{eq:b_nn3}
	\bigl|u_{W'}(s, a) - u_0(s, a)\bigr| \le \sup_{W\in S_B} \norm[\big]{\phi_W(s, a)}_2 \cdot \norm{W' - W_0}_2 \le B,
	\end{align}
	where the second inequality follows from the fact that $\|\phi_W(s, a)\|_2 \le 1$ and $W' \in S_B$.
	Plugging \eqref{eq:b_nn2} and \eqref{eq:b_nn3} into \eqref{eq:b_nn1}, we have that
	\begin{align*}
	\sup_{(s,a) \in \cS\times \cA}\bigl| W^\top \phi_{W'}(s, a) \bigr| \le \sup_{(s,a) \in \cS\times \cA}\bigl|u_0(s, a) \bigr| + 3B.	
	\end{align*}
	Following from Assumption \ref{asp:nn}, we have that $\sup_{(s,a) \in \cS\times \cA} | W^\top \phi_{W'}(s, a) |$ is sub-Gaussian. Furthermore, it holds that
	\begin{align*}
	\EE_\init \Bigl[\sup_{(s,a) \in \cS\times \cA}\bigl| W^\top \phi_{W'}(s, a) \bigr|^2 \Bigr] \le 2\EE_\init \Bigl[\sup_{(s,a) \in \cS\times \cA}\bigl| u_0(s, a) \bigr|^2 \Bigr] + 18B^2 \le 2M_0^2 + 18B^2
	\end{align*}
	and that
	\begin{align*}
	\PP\Bigl( \sup_{(s,a) \in \cS\times \cA}\bigl| W^\top \phi_{W'}(s, a) \bigr| > t \Bigr) &\le \PP\Bigl (\sup_{(s, a) \in \cS\times \cA} \bigl| u_0(s, a) \bigr| + 3B > t \Bigr) \\
	&\le \exp\bigl(-v\cdot (t-3B)^2\bigr) \le \exp(-v\cdot t^2 / 2)
	\end{align*}
	for $t > 2M_0 + 6B$. Thus, we complete the proof of Lemma \ref{lem:bound_nn}.
\end{proof}



\section{Neural Temporal Difference}
\label{sec:td_learning}
In this section, we introduce neural TD \citep{cai2019neural}, which computes $\omega_k$ in Algorithm \ref{alg:ac_gail}. Specifically, neural TD solves the optimization problem in \eqref{eq:def_tdlearn} via the update in \eqref{eq:td_update}, which is presented in Algorithm \ref{alg:td_learn}.

\begin{algorithm}
	\caption{Neural TD}
	\label{alg:td_learn}
	\begin{algorithmic}[1]
		\REQUIRE Policy $\pi$, reward function $r$, initialization $W_0, b$, number of iterations $T_{\mathrm{TD}}$ of neural TD, and stepsize $\alpha$ of neural TD.
		\STATE {\bf Initialization.} Set $S_{B_\omega} \leftarrow \{W\in\RR^{md}\given \norm{W-W_0}_2 \le B_\omega\}$ and $\omega(0) \leftarrow W_0$.
		\FOR{$ j= 0, \ldots, T_{\td}-1 $}
		\STATE Sample $(s, a, s', a')$, where $(s, a) \sim \rho_{\pi}$, $s' \sim P(\cdot \given s, a)$, and $ a'\sim \pi(\cdot \given s')$.
		\STATE Compute the Bellman residual $\delta(j) = Q_{\omega(j)}(s, a) - (1-\gamma) \cdot r(s, a) - \gamma \cdot Q_{\omega(j)}(s', a')$.
		\STATE Update $\omega$ via $\omega(j+1) \leftarrow \proj_{S_{B_\omega}}\bigl\{ \omega(j) - \eta\cdot \delta(j)  \cdot \phi_{\omega(j)}(s,a)\bigr\}$.
		\ENDFOR
		\ENSURE Output $\bar \omega = T^{-1} \sum_{t=0}^{T_{\td}-1} \omega(j)$.
	\end{algorithmic}
\end{algorithm}

\subsection{Proof of Proposition \ref{prop:conv_td}}
\label{sec:pf_conv_td}
\begin{proof}
	We obtain the following proposition from \cite{cai2019neural}, which characterizes the convergence of Algorithm \ref{alg:td_learn}.
	\begin{proposition}[Proposition 4.7 in \cite{cai2019neural}]
		\label{th:cai_td_conv}
		We set $\alpha = \min\{ (1-\gamma) / 8, T^{-1/2}_{\td} \}$ in  Algorithm \ref{alg:td_learn}. Let $Q_{\bar \omega}(s, a)$ be the state-action value function associated with the output $\bar\omega$. Under Assumption \ref{asp:linear_nn}, it holds for any policy $\pi$ and reward function $r(s, a)$ that
		\begin{align}
		\label{eq:td-conv}
		\EE_\init\Bigl[ \norm[\big]{Q_{\bar \omega} (s, a) - Q^\pi_r(s, a) }^2_{2,\rho_\pi} \Bigr] &= 2\EE_\init \Bigl[ \norm[\big]{ \proj_{\cF_{B_\omega, m}} Q^\pi_r(s, a) - Q^\pi_r(s, a) }_{2, \rho_\pi}^2
		\Bigr] \nonumber\\
		&\quad + \cO(B_\omega^2 \cdot T^{-1/2}_{\td} + B_\omega^3 \cdot m^{-1/2} + B_\omega^{5/2} \cdot m^{-1/4} ),
		\end{align}
		where $\cF_{B_\omega, m}$ is defined in Definition \ref{def:func_class}.
	\end{proposition}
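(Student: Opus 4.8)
The plan is to adapt the global-convergence analysis of neural TD from \cite{cai2019neural}, viewing the update in \eqref{eq:td_update} as projected stochastic semi-gradient descent that tracks its linearization at the initialization $W_0$. Let $\hat Q^\ast = \proj_{\cF_{B_\omega, m}} Q^\pi_r$ be the projection of the target onto the function class of Definition \ref{def:func_class}, and let $\omega^\ast \in S_{B_\omega}$ be an associated parameter with $\hat Q^\ast = (\omega^\ast)^\top \phi_0(\cdot, \cdot)$. Since $\cS\times\cA$ is compact and the Bellman operator $\cT^\pi_r$ is a $\gamma$-contraction in $\norm{\cdot}_{2, \rho_\pi}$ (using that $\rho_\pi$ is the stationary distribution), I would take $\omega^\ast$ to be the fixed point of the projected mean-path TD dynamics, characterized by the variational inequality $\inp{\bar g(\omega^\ast)}{\omega - \omega^\ast} \ge 0$ for all $\omega\in S_{B_\omega}$, where $\bar g(\omega) = \EE_{\rho_\pi}[(Q_\omega(s,a) - \cT^\pi_r Q_\omega(s,a))\,\phi_0(s,a)]$ is the expected \emph{linearized} TD direction.

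First I would establish a one-step inequality. Because $\omega^\ast\in S_{B_\omega}$ and $\proj_{S_{B_\omega}}$ is nonexpansive, expanding $\norm{\omega(j+1)-\omega^\ast}_2^2$ gives
\#
\norm{\omega(j+1)-\omega^\ast}_2^2 \le \norm{\omega(j)-\omega^\ast}_2^2 - 2\alpha\,\inp{g(j)}{\omega(j)-\omega^\ast} + \alpha^2\,\norm{g(j)}_2^2,
\#
where $g(j) = \delta(j)\,\phi_{\omega(j)}(s,a)$ is the stochastic TD direction. Taking the conditional expectation over $(s,a,s',a')\sim\rho_\pi$ and invoking the contraction of $\cT^\pi_r$ yields the monotonicity bound $\inp{\bar g(\omega(j))}{\omega(j)-\omega^\ast}\ge (1-\gamma)\,\norm{Q_{\omega(j)}-\hat Q^\ast}_{2,\rho_\pi}^2$, up to terms controlled by the feature drift. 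The magnitude $\norm{g(j)}_2$ is $\cO(B_\omega)$ because $\norm{\phi_{\omega(j)}(s,a)}_2\le 1$ and the Bellman residual is $\cO(B_\omega)$ on $S_{B_\omega}$, which produces the variance contribution $\alpha^2\norm{g(j)}_2^2 = \cO(\alpha^2 B_\omega^2)$.

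Next I would telescope this inequality over $j = 0, \ldots, T_\td - 1$, divide by $\alpha T_\td$, and use $\norm{\omega(0)-\omega^\ast}_2^2\le B_\omega^2$ (as $\omega(0)=W_0$ and $\omega^\ast\in S_{B_\omega}$) together with the choice $\alpha = \min\{(1-\gamma)/8, T_\td^{-1/2}\}$. Averaging then gives $\frac{1}{T_\td}\sum_j \norm{Q_{\omega(j)}-\hat Q^\ast}_{2,\rho_\pi}^2 = \cO(B_\omega^2 T_\td^{-1/2} + \text{drift})$, and Jensen's inequality transfers this to the averaged output $\bar\omega$. Finally the triangle inequality $\norm{Q_{\bar\omega}-Q^\pi_r}_{2,\rho_\pi}^2 \le 2\norm{Q_{\bar\omega}-\hat Q^\ast}_{2,\rho_\pi}^2 + 2\norm{\hat Q^\ast - Q^\pi_r}_{2,\rho_\pi}^2$ splits off the irreducible projection error, producing the leading factor $2$ in \eqref{eq:td-conv}.

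The main obstacle is controlling the gap between the \emph{neural} dynamics, which use the moving feature $\phi_{\omega(j)}$ and the value $Q_{\omega(j)} = \omega(j)^\top\phi_{\omega(j)}$, and the \emph{linearized} dynamics built on the fixed feature $\phi_0$. Here I would invoke Lemma \ref{lem:linear_nn}: the $L_2(\rho_\pi)$ bound $\cO(B_\omega^3 m^{-1/2})$ and the $L_1(\rho_\pi)$ bound $\cO(B_\omega^{3/2}m^{-1/4})$ on $\omega^\top\phi_{\omega(j)} - \omega^\top\phi_0$ quantify how far the monotonicity inequality degrades once the true (nonconvex) update replaces its linear surrogate. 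Tracking these errors through the one-step expansion and the averaging, and combining the $\cO(B_\omega^2 m^{-1/2})$-type contributions with the $\cO(B_\omega)$ residual magnitude, is precisely what yields the $\cO(B_\omega^3 m^{-1/2} + B_\omega^{5/2} m^{-1/4})$ terms; ensuring that the contraction constant $(1-\gamma)$ survives this perturbation, which is what forces the stepsize cap $(1-\gamma)/8$, is the delicate point.
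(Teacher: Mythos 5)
First, a structural note: the paper does not actually prove this proposition --- it is imported verbatim as Proposition 4.7 of \cite{cai2019neural}, and the ``proof'' in \S\ref{sec:pf_conv_td} consists of citing that work and then combining it with Lemma \ref{lem:func_trans}. So your reconstruction cannot be checked against an in-paper argument; what it can be checked against is the cited analysis, and there your sketch is strategically faithful: the one-step expansion via nonexpansiveness of $\proj_{S_{B_\omega}}$, the Bhandari--Russo-style monotonicity bound using stationarity of $\rho_\pi$ (which is exactly where $\norm{\Delta Q(s',a')}_{2,\rho_\pi} = \norm{\Delta Q(s,a)}_{2,\rho_\pi}$ gives the $(1-\gamma)$ margin), telescoping with $\alpha = \min\{(1-\gamma)/8, T_{\td}^{-1/2}\}$ to get the $B_\omega^2 \cdot T_{\td}^{-1/2}$ term, Jensen for the averaged output $\bar\omega$ (valid because averaging parameters equals averaging functions only for the \emph{linearized} class, with the neural-to-linear gap paid through Lemma \ref{lem:linear_nn}), and Lemma \ref{lem:linear_nn} for the $B_\omega^3 \cdot m^{-1/2} + B_\omega^{5/2}\cdot m^{-1/4}$ drift terms. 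All of this matches the cited proof's mechanism.

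There is, however, one genuine gap: you define $\hat Q^\ast = \proj_{\cF_{B_\omega, m}} Q^\pi_r$ and simultaneously ``take $\omega^\ast$ to be the fixed point of the projected mean-path TD dynamics,'' but these are different objects unless $Q^\pi_r$ itself (approximately) lies in $\cF_{B_\omega, m}$. The variational inequality $\inp{\bar g(\omega^\ast)}{\omega - \omega^\ast} \ge 0$ characterizes the solution of the \emph{projected Bellman equation}, not the $L_2(\rho_\pi)$ projection of $Q^\pi_r$; for the plain projection the cross term $\inp{\bar g(\omega^\ast)}{\omega(j) - \omega^\ast}$ has no sign, and your claimed inequality $\inp{\bar g(\omega(j))}{\omega(j)-\omega^\ast} \ge (1-\gamma)\norm{Q_{\omega(j)} - \hat Q^\ast}_{2,\rho_\pi}^2$ simply fails. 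The correct route runs the contraction argument at the projected-Bellman fixed point $\tilde Q$ and then compares $\tilde Q$ to $Q^\pi_r$ via the Tsitsiklis--Van Roy bound $\norm{\tilde Q - Q^\pi_r}_{2,\rho_\pi} \le (1-\gamma^2)^{-1/2}\cdot \norm{\proj_{\cF_{B_\omega,m}} Q^\pi_r - Q^\pi_r}_{2,\rho_\pi}$, after which the triangle-inequality split yields a coefficient $2(1-\gamma^2)^{-1}$ rather than the displayed $2$ on the projection error. In this paper's use of the proposition the discrepancy is harmless --- $\gamma$ is treated as an absolute constant, and Lemma \ref{lem:func_trans} shows that for $r_\beta \in \cR_\beta$ the target $Q^\pi_{r_\beta}$ is approximately realizable in $\cF_{B_\omega, m}$, so the fixed point and the projection nearly coincide and everything is absorbed into the $\cO(\cdot)$ terms of Proposition \ref{prop:conv_td} --- but as a standalone proof of the statement ``for any policy $\pi$ and reward function $r(s,a)$,'' your identification of the two comparator points is the step that must be repaired.
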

	Recall that we denote by $\phi_0(s, a)$ the feature vector corresponding to the random initialization in \eqref{eq:init}. We establish the following lemma to upper bound the bias $ \EE_\init [ \norm{ \proj_{\cF_{B_\omega, m}} Q^\pi_r(s, a) - Q^\pi_r(s, a) }^2_{2, \rho_\pi}
	] $ in \eqref{eq:td-conv} of Proposition \ref{th:cai_td_conv} when the reward function $r(s, a)$ belongs to the reward function class $\cR_\beta$.
	\begin{lemma}
		\label{lem:func_trans}

		 We consider any reward function $r_\beta(s, a) \in \cR_\beta$ and policy $\pi$. Under Assumptions \ref{asp:measures} and \ref{asp:nn}, it holds for $B_\omega > B_\beta + (1-\gamma)^{-1}\cdot \gamma\cdot B_P\cdot (2M_0 + 3B_\beta) $ and an absolute constant $C_v = (2 \cdot \gamma^2 \cdot B_P^2)^{-1} \cdot(1-\gamma)^2 \cdot  v$ that
		\begin{align*}
		\EE_{\init} \Bigl[ \norm[\big]{ \proj_{\cF_{B_\omega, m}} Q_{r_\beta}^\pi (s, a) - Q_{r_\beta}^\pi(s, a)  }^2_{2, \rho_\pi} \Bigr] = \cO\bigl(B_\beta^3\cdot m^{-1/2}+ B_\omega^2 \cdot m^{-1} + B_\omega^2 \cdot \exp(-C_v\cdot B_\omega^2)\bigr).
		\end{align*}
	\end{lemma}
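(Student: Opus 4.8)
The plan is to construct, on a high-probability event, an explicit parameter $W^\ast\in S_{B_\omega}$ whose induced function $W^{\ast\top}\phi_0$ approximates $Q^\pi_{r_\beta}$, so that the projection error is controlled by the approximation error $\|W^{\ast\top}\phi_0-Q^\pi_{r_\beta}\|_{2,\rho_\pi}$. First I would use the Bellman equation together with the identity $(1-\gamma)\,r_\beta=u_\beta$ to write
\[
Q^\pi_{r_\beta}(s,a)=u_\beta(s,a)+\gamma\cdot\EE_{s'\sim P(\cdot\given s,a)}\bigl[V^\pi_{r_\beta}(s')\bigr]=:u_\beta(s,a)+g(s,a).
\]
Plugging the linear-MDP representation of $P$ from Assumption \ref{asp:trans_class} into $g$ and exchanging the order of integration gives
\[
g(s,a)=\gamma\int\vartheta(s,a;w)^\top h(w)\,\rd q(w),\qquad h(w)=\int_\cS V^\pi_{r_\beta}(s')\,\varphi(s';w)\,\rd s'.
\]
Because $\vartheta(s,a;w)=\ind\{w^\top(s,a)>0\}\cdot(s,a)$ is exactly the ReLU feature in \eqref{eq:def_feature} with hidden weight $w$, the function $g$ is an infinite-width two-layer network against the initialization density $q$.

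Second, I would define $W^\ast$ by $[W^\ast]_l=[\beta]_l+\gamma\cdot m^{-1/2}\cdot b_l\cdot h([W_0]_l)$ and set $\Delta^\ast:=W^\ast-\beta$, so that a direct computation yields
\[
W^{\ast\top}\phi_0(s,a)=\beta^\top\phi_0(s,a)+\frac{\gamma}{m}\sum_{l=1}^m\ind\bigl\{(s,a)^\top[W_0]_l>0\bigr\}\cdot h([W_0]_l)^\top(s,a),
\]
where the second term $\Delta^{\ast\top}\phi_0$ is precisely the $m$-sample Monte Carlo estimate of $g$ under $[W_0]_l\sim q$, and the first term $\beta^\top\phi_0$ is the linearization of $u_\beta=\beta^\top\phi_\beta$. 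To place $W^\ast$ in $S_{B_\omega}$ I would bound $\|h(w)\|_2\le B_P\cdot\|V^\pi_{r_\beta}\|_\infty$, using $\sup_w\|\int\varphi(s;w)\rd s\|_2\le B_P$ from Assumption \ref{asp:trans_class} together with the nonnegativity structure of $\varphi$, and then $\|V^\pi_{r_\beta}\|_\infty\le\|Q^\pi_{r_\beta}\|_\infty\le\|r_\beta\|_\infty=(1-\gamma)^{-1}\sup_{s,a}|u_\beta(s,a)|\le(1-\gamma)^{-1}(\bar U+3B_\beta)$ by Lemma \ref{lem:bound_nn}, where $\bar U=\sup_{s,a}|u_0(s,a)|$. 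Combining $\|\beta-W_0\|_2\le B_\beta$ with $\|\Delta^\ast\|_2\le\gamma B_P(1-\gamma)^{-1}(\bar U+3B_\beta)$ gives $\|W^\ast-W_0\|_2\le B_\beta+\gamma B_P(1-\gamma)^{-1}(\bar U+3B_\beta)$. On the event $\mathcal G=\{\bar U\le 2M_0\}$, the assumed lower bound on $B_\omega$ forces $\|W^\ast-W_0\|_2<B_\omega$, hence $W^\ast\in S_{B_\omega}$ and $W^{\ast\top}\phi_0\in\cF_{B_\omega,m}$.

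Third, I would split the error across $\mathcal G$ and $\mathcal G^c$. On $\mathcal G$, optimality of the projection gives $\|\proj_{\cF_{B_\omega,m}}Q^\pi_{r_\beta}-Q^\pi_{r_\beta}\|_{2,\rho_\pi}^2\le\|W^{\ast\top}\phi_0-Q^\pi_{r_\beta}\|_{2,\rho_\pi}^2\le 2\|\beta^\top\phi_0-u_\beta\|_{2,\rho_\pi}^2+2\|\Delta^{\ast\top}\phi_0-g\|_{2,\rho_\pi}^2$; the first term is the linearization error $\cO(B_\beta^3 m^{-1/2})$ from Lemma \ref{lem:linear_nn}, while the second is the variance of an $m$-sample empirical average whose summands have norm at most $\gamma B_P\|V^\pi_{r_\beta}\|_\infty\lesssim B_\omega$ on $\mathcal G$, yielding $\cO(B_\omega^2 m^{-1})$. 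On $\mathcal G^c$ I would bound $\|\proj_{\cF_{B_\omega,m}}Q^\pi_{r_\beta}-Q^\pi_{r_\beta}\|_{2,\rho_\pi}^2=\cO((\bar U+B_\omega)^2)$ using $\|Q^\pi_{r_\beta}\|_\infty$ and $\sup_{W\in S_{B_\omega}}\|W^\top\phi_0\|_\infty$ (Lemma \ref{lem:bound_nn}), and integrate against the sub-Gaussian tail $\PP(\bar U>t)\le\exp(-v t^2)$ from Assumption \ref{asp:bound_init}. Since $\{W^\ast\notin S_{B_\omega}\}\subseteq\{\bar U>t_0\}$ with $t_0=(1-\gamma)(B_\omega-B_\beta)/(\gamma B_P)-3B_\beta\asymp(1-\gamma)B_\omega/(\gamma B_P)$, this tail integral produces $\cO(B_\omega^2\exp(-C_v B_\omega^2))$ with $C_v=(2\gamma^2 B_P^2)^{-1}(1-\gamma)^2 v$. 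Summing the three contributions gives the claim.

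The main obstacle is the bad-event analysis and the exact constant $C_v$: the candidate $W^\ast$ leaves $S_{B_\omega}$ only when the random quantity $\bar U$ is atypically large, so I must verify that the escape threshold $t_0$ exceeds $2M_0$ — which is exactly what the imposed lower bound on $B_\omega$ guarantees, ensuring the sub-Gaussian tail of Assumption \ref{asp:bound_init} applies — and then carry out the Gaussian-type integral $\EE_\init[(\bar U+B_\omega)^2\ind\{\bar U>t_0\}]$ so that the surviving exponent is $C_v B_\omega^2$, the factor $1/2$ in $C_v$ providing the slack to absorb the polynomial prefactor and the lower-order shifts in $t_0$. A secondary technical point is justifying $\|h(w)\|_2\le B_P\|V^\pi_{r_\beta}\|_\infty$ from the bound on $\|\int\varphi(s;w)\rd s\|_2$, which leans on the componentwise nonnegativity of $\varphi$ in the linear-MDP decomposition, and handling the mild coupling between the event $\mathcal G$ and the Monte Carlo fluctuation when bounding its variance.
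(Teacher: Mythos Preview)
Your plan follows essentially the same skeleton as the paper: write $Q^\pi_{r_\beta}=u_\beta+g$ via the Bellman equation and the linear-MDP representation of $P$, approximate $g$ by a finite-width random-feature sum, control the linearization error of $u_\beta$ with Lemma~\ref{lem:linear_nn}, and absorb the unbounded part of $V^\pi_{r_\beta}$ into a sub-Gaussian tail event. The packaging differs slightly: the paper truncates the weight function $\alpha(w)=\gamma h(w)$ at a level $K=B_\omega-B_\beta$ and then invokes the Rahimi--Recht random-feature lemma on the truncated integrand, whereas you build the Monte Carlo estimator $W^\ast$ explicitly and condition on a good event for $\bar U$. Both routes yield the same three terms; the paper's truncation has the minor advantage that $\bar\alpha$ is deterministically bounded by $K$, so the random-feature bound applies without any coupling between the event and the fluctuation you flag at the end.

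One genuine slip to fix: your good event is stated as $\mathcal G=\{\bar U\le 2M_0\}$, but the tail integral you then compute uses the threshold $t_0=(1-\gamma)(B_\omega-B_\beta)/(\gamma B_P)-3B_\beta$. These do not match: splitting on $\{\bar U\le 2M_0\}$ makes the bad-event contribution $\EE_\init[(\bar U+B_\omega)^2\ind\{\bar U>2M_0\}]$ an $O(1)$ constant in $B_\omega$, not $O(B_\omega^2\exp(-C_v B_\omega^2))$. The correct split is on $\mathcal G=\{\bar U\le t_0\}$ (equivalently $\{W^\ast\in S_{B_\omega}\}$); the hypothesis $B_\omega>B_\beta+(1-\gamma)^{-1}\gamma B_P(2M_0+3B_\beta)$ is exactly what guarantees $t_0>2M_0$ so that the sub-Gaussian tail of Assumption~\ref{asp:bound_init} is valid at $t_0$, and then the integral from $t_0$ produces the desired $\exp(-C_v B_\omega^2)$. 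You already identify $t_0$ correctly in your last paragraph, so this is just a matter of using it consistently as the event threshold.
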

	\begin{proof}
		See \S\ref{sec:pf_lem_func_trans} for a detailed proof.
	\end{proof}
	Combining Proposition \ref{th:cai_td_conv} and Lemma \ref{lem:func_trans}, for $B_\omega > B_\beta + (1-\gamma)^{-1}\cdot \gamma\cdot B_P\cdot (2M_0 + 3B_\beta) $, we have for any $\pi$ that
	\begin{align*}
	\EE_{\init}\Bigl[ \norm[\big]{Q_{\bar \omega} (s, a) - Q^\pi_{r_\beta}(s, a) }^2_{2,\rho_\pi} \Bigr] = \cO\bigl(B_\omega^2 \cdot T^{-1/2}_{\td} + B_\omega^3 \cdot m^{-1/2} + B_\omega^{5/2} \cdot m^{-1/4} + B_\omega^2\cdot \exp(-C_v\cdot B_\omega^2)\bigr).
	\end{align*}
	Finally, by setting $T_{\td} = \Omega(m)$, we have that 
	\begin{align*}
	\EE_{\init}\Bigl[ \norm[\big]{Q_{\bar \omega} (s, a) - Q^\pi_{r_\beta}(s, a) }^2_{2,\rho_\pi} \Bigr] = \cO\bigl(B_\omega^3 \cdot m^{-1/2} + B_\omega^{5/2}\cdot m^{-1/4} + B_\omega^2\cdot \exp(-C_v\cdot B_\omega^2)\bigr),
	\end{align*} 
	which completes the proof of Proposition \ref{prop:conv_td}.
\end{proof}

\subsection{Proof of Lemma \ref{lem:func_trans}}

\label{sec:pf_lem_func_trans}
\begin{proof}
	For notational simplicity, we write $\vartheta(s, a; w) = \ind{\{| w^\top (s, a)| >0 \}}\cdot (s, a)$. Under Assumption \ref{asp:nn}, we have that
	\begin{align}
	\label{eq:ft0}
	P(s'\given s, a) = \int \vartheta(s, a; w)^\top \varphi(s'; w) \rd q(w),\quad \text{where  } \sup_{ w} \biggl\| \int\varphi(s; w) \rd s \biggr\|_2 \le B_P.
	\end{align}
	Thus, since $r_\beta = (1-\gamma)^{-1}\cdot u_\beta(s, a)$, we have that
	\begin{align*}
	Q^\pi_{r_\beta}(s, a) &= (1-\gamma)\cdot r_\beta(s, a) + \gamma \cdot \int _\cS P(s'\given s, a)\cdot  V^\pi_{r_\beta}(s') \rd s'  \nonumber \\
	& = u_\beta(s, a) + \int_\cS \gamma \cdot V^\pi_{r_\beta}(s') \cdot \int\vartheta(s, a; w)^\top \varphi(s'; w) \rd q(w) \rd s' \nonumber \\
	& = u_\beta(s, a) + \int \vartheta(s, a; w)^\top \biggl( \gamma\cdot \int_\cS \varphi(s'; w)V^\pi_{r_\beta}(s') \rd s' \biggr) \rd q(w),
	\end{align*}
	where the second equality follows from \eqref{eq:ft0} and the last equality follows from Fubini's theorem.
	In the sequel, we define 
	\begin{align}
	\label{eq:def-alpha}
	\alpha(w) = \gamma\cdot \int_\cS \varphi(s'; w)V^\pi_{r_\beta}(s') \rd s'. 
	\end{align}
	Note that $\alpha(w) \in \RR^d$.
	Then, we have that 
	\begin{align*}
	Q^\pi_{r_\beta}(s, a) = u_\beta(s, a) + \int \vartheta(s, a; w)^\top \alpha(w) \rd q(w).
	\end{align*}
	To prove Lemma \ref{lem:func_trans}, we first approximate $Q^{\pi}_{r_\beta}(s, a)$ by 
	\begin{align}
	\label{eq:ft-2}
	\bar Q(s, a) = u_\beta(s, a) + \int \vartheta(s, a; w)^\top \bar \alpha(w) \rd q(w),
	\end{align}
	 where $\bar \alpha (w) = \alpha(w) \cdot \ind\{\norm{\alpha(w)}_2 \le K\}$ for an absolute constant $K>0$ specified later. Then, it holds for any $(s, a)\in \cS\times \cA$ that
	\begin{align*}
	 \bigl|\bar Q(s, a) - Q^\pi_{r_\beta}(s, a)\bigr| &\le  \int \Bigl| \vartheta(s, a; w)^\top \bigl(\bar \alpha(w) - \alpha(w)\bigr)\Bigr| \rd q(w)   \nonumber \\
	& \le  \int \norm[\big]{\vartheta(s, a; w)}_2 \cdot \norm[\big]{\bar \alpha(w) - \alpha(w)}_2\rd q(w) \nonumber \\
	& \le\sup_w \norm[\big]{\bar \alpha(w) - \alpha(w)}_2,
	\end{align*}
	where the second inequality follows from the Cauchy-Schwartz inequality and the last inequality follows from the fact that $\norm{\vartheta(s, a; w)}_2 \le 1$. Thus, we have that
	\begin{align}
	\label{eq:ft2}
	\norm[\big]{\bar Q(s, a) - Q^\pi_{r_\beta}(s, a)}_{2, \rho_\pi} \le \norm[\big]{\bar Q(s, a) - Q^\pi_{r_\beta}(s, a)}_\infty \le \sup_w \norm[\big]{\bar \alpha(w) - \alpha(w)}_2. 
	\end{align}
	We now upper bound the right-hand side of \eqref{eq:ft2}. To this end, we show that $\sup_w\|\alpha(w)\|_2$ is sub-Gaussian in the sequel. By the definition of $\alpha(w)$ in \eqref{eq:def-alpha}, we have that
	\begin{align}
	\label{eq:ft4}
	\sup_w \bigl\| \alpha(w) \bigr\|_2 &=\gamma\cdot \biggl\|\int_\cS \varphi(s'; w)V^\pi_{r_\beta}(s') \rd s' \biggr\|_2 \nonumber \\
	& \le \gamma \cdot \sup_{s'\in \cS} \bigl| V^\pi_{r_\beta}(s')  \bigr|\cdot \sup_w \biggl\|\int_\cS \varphi(s'; w)\rd s' \biggr\|_2 \nonumber\\
	& \le \gamma \cdot B_P \cdot \sup_{s'\in \cS} \bigl|V^\pi_{r_\beta}(s') \bigr| \nonumber \\
	&\le \gamma  \cdot(1-\gamma)^{-1}\cdot B_P\cdot  \sup_{(s, a)\in\cS\times \cA} \bigl|u_\beta(s, a) \bigr|,
	\end{align}
	where the second inequality follows from Assumption \ref{asp:nn} and the third inequality follows from the fact that $V_{r_\beta}^\pi(s) = \EE_{(s', a')\sim\nu_\pi(s)} [r_\beta(s', a') ] $. Here we denote by $\nu_\pi(s)$ the state-action visitation measure starting from the state $s$ and following the policy $\pi$. Following from Lemma \ref{lem:bound_nn}, we have that $ \sup_w \| \alpha(w) \|_2 $ is sub-Gaussian. By Lemma \ref{lem:bound_nn} and \eqref{eq:ft4}, it holds for $t > (1-\gamma)^{-1}\cdot \gamma\cdot B_P\cdot (2M_0 + 3B_\beta) $ that
	\begin{align}
	\label{eq:ft5}
	\PP\Bigl(\sup_w \bigl\| \alpha(w) \bigr\|_2 > t \Bigr) &\le \PP\Bigl( \gamma  \cdot(1-\gamma)^{-1}\cdot B_P\cdot  \sup_{(s, a)\in\cS\times \cA} \bigl|u_\beta(s, a) \bigr| > t \Bigr) \nonumber \\
	&\le \exp\biggl( -\frac{v\cdot(1-\gamma)^2 \cdot t^2}{2 \gamma^2 \cdot B_P^2 } \biggr).
	\end{align}
	Let the absolute constant $K$ satisfy that $K > (1-\gamma)^{-1}\cdot \gamma\cdot B_P\cdot (2M_0 + 3B_\beta)$ in \eqref{eq:ft5}. For notational simplicity, we write $C_v = (2 \cdot \gamma^2 \cdot B_P^2)^{-1} \cdot  v\cdot(1-\gamma)^2$.
	By the fact that $\|\bar \alpha(w) - \alpha(w)\|_2 = \norm{\alpha(w)}_2 \cdot  \ind\{ \norm{\alpha(w)}_2 > K \}$, we have that
	\begin{align*}
	\sup_w \norm[\big]{\bar \alpha(w) - \alpha(w)}_2 \le \sup_w \norm[\big]{\alpha(w)}_2 \cdot \ind\Bigl\{ \sup_w \norm{\alpha(w)}_2 > K \Bigr\}.
	\end{align*}
	Following from \eqref{eq:ft2} and \eqref{eq:ft5}, we have that
	\begin{align}
	\label{eq:ft6}
	&\EE_\init\Bigl[\norm[\big]{\bar Q(s, a) - Q^\pi_{r_\beta}(s, a)}_{2, \rho_\pi}\Bigr] \nonumber \\
	&\quad  \le \EE\biggl[ \sup_w \norm[\big]{\alpha(w)}_2 \cdot \ind\Bigl\{ \sup_w \norm{\alpha(w)}_2 > K \Bigr\} \biggr] \nonumber \\
	&\quad  \le \int_{0}^{K} t \cdot \PP\Bigl(\sup_w \norm{ \alpha(w) }_2 > K\Bigr) \rd t + \int_{K}^\infty t\cdot \PP\Bigl(\sup_w \norm{ \alpha(w) }_2 > t \Bigr) \rd t\nonumber \\
	& \quad = \cO\bigl( K^2 \cdot \exp( - C_v\cdot K^2) \bigr).
	\end{align}
	We now construct $\hat Q(s, a)\in\cF_{K, m}$, which approximates $\bar Q(s, a)$ defined in \eqref{eq:ft-2}. We define 
	\begin{align*}
	f(s, a) = \int \vartheta(s, a; w)^\top \bar \alpha(w)\rd q(\omega) .
	\end{align*}
	 Then, we have that $\bar Q(s, a) = u_\beta(s, a) + f(s, a)$.
	Note that $f(s, a)$ belongs to the following function class,
	\begin{align*}
	\tilde \cF_{K, \infty} = \biggl\{ \int \vartheta(s, a; w)^\top \alpha(w)\rd q(\omega) \,\bigg|\, \sup_w \norm[\big]{\alpha(w)}_2 \le K \biggr\}.
	\end{align*}
	We now show that $f(s, a)$ is well approximated by the following function class,
	\begin{align*}
	\tilde \cF_{K, m} = \biggl\{ W^\top \phi_0(s, a) = \frac{1}{\sqrt{m}} \sum_{l=1}^{m} [W]_l^\top \vartheta\bigl(s, a; [W]_l\bigr) \,\bigg|\, \sup_l\norm[\big]{[W]_l}_2 \le K / \sqrt{m} \biggr\},
	\end{align*}
	where $\phi_0(s, a)$ is the feature vector corresponding to the random initialization. We obtain the following lemma from \cite{NIPS2008_3495}, which characterizes the approximation error of $\tilde \cF_{K, \infty}$ by $\tilde \cF_{K, m}$.
	\begin{lemma}[Lemma 1 in \cite{NIPS2008_3495}]
		\label{lem:proj_err0}
		For any $f(s, a) \in \tilde \cF_{K, \infty}$, it holds with probability at least $1-\delta$ that
		\begin{align*}
		\norm[\big]{ \proj_{\tilde \cF_{K, m}} f(s, a) - f(s, a)}_{2, \mu} \le K\cdot m^{-1/2} \cdot \bigl(1 + \sqrt{2\log(1/\delta)}\bigr),
		\end{align*}
		where $\mu\in\cP(\cS\times \cA)$.
	\end{lemma}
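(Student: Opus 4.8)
The plan is to prove this exactly as the classical Monte-Carlo / random-features approximation bound of \cite{rahimi2008random}. The starting observation is that any $f \in \tilde \cF_{K, \infty}$ is literally an expectation over the sampling distribution $q$ of the random features, namely $f(s, a) = \EE_{w\sim q}[\vartheta(s, a; w)^\top \alpha(w)]$ with $\sup_w \norm{\alpha(w)}_2 \le K$. Since the random initialization draws $[W_0]_1, \ldots, [W_0]_m \overset{\iid}{\sim} q$, the natural candidate approximant is the empirical average $\hat f(s, a) = m^{-1}\sum_{l=1}^m \alpha([W_0]_l)^\top \vartheta(s, a; [W_0]_l)$. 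First I would verify that $\hat f$ is a genuine element of $\tilde \cF_{K, m}$: setting $[W]_l = m^{-1/2}\cdot b_l \cdot \alpha([W_0]_l)$ gives $\norm{[W]_l}_2 = m^{-1/2}\norm{\alpha([W_0]_l)}_2 \le K / \sqrt m$, and, using $b_l^2 = 1$ and the frozen indicator at $[W_0]_l$, recovers $\hat f = W^\top \phi_0$. Because $\proj_{\tilde \cF_{K, m}} f$ is by definition the closest point of $\tilde \cF_{K, m}$ to $f$ in $\norm{\cdot}_{2, \mu}$, it then suffices to bound $\norm{\hat f - f}_{2, \mu}$.

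For the expectation, I would compute $\EE_\init[\norm{\hat f - f}^2_{2, \mu}]$. Since $\hat f$ is an unbiased average (so $\EE_\init[\hat f(s, a)] = f(s, a)$), Fubini's theorem turns this into $\EE_{(s, a)\sim \mu}[\mathrm{Var}_\init(\hat f(s, a))]$, and as $\hat f(s, a)$ averages $m$ i.i.d. terms the inner variance equals $m^{-1}\mathrm{Var}(\vartheta(s, a; w)^\top \alpha(w)) \le m^{-1}\EE[(\vartheta^\top \alpha)^2]$. The uniform bound $|\vartheta(s, a; w)^\top \alpha(w)| \le \norm{\vartheta(s, a; w)}_2 \norm{\alpha(w)}_2 \le K$ (using $\norm{(s, a)}_2 \le 1$, hence $\norm{\vartheta}_2 \le 1$) yields $\EE_\init[\norm{\hat f - f}_{2, \mu}^2] \le K^2 / m$, and Jensen's inequality gives $\EE_\init[\norm{\hat f - f}_{2, \mu}] \le K / \sqrt m$.

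The high-probability statement then follows from a bounded-differences (McDiarmid) argument applied to the functional $F([W_0]_1, \ldots, [W_0]_m) = \norm{\hat f - f}_{2, \mu}$, viewed as a function of the $m$ independent draws. Replacing a single $[W_0]_l$ perturbs $\hat f$ pointwise by at most $2K / m$, hence perturbs its $\norm{\cdot}_{2, \mu}$ distance to $f$ by at most $2K / m$ (triangle inequality for the norm, with $\mu$ a probability measure), so each coordinate has bounded difference $c_l = 2K / m$. McDiarmid's inequality gives $\PP(F \ge \EE_\init[F] + t) \le \exp(-t^2 m / (2K^2))$; choosing $t = K\sqrt{2\log(1/\delta)/m}$ makes the right-hand side equal to $\delta$. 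Combining this with $\EE_\init[F] \le K / \sqrt m$ and the projection inequality $\norm{\proj_{\tilde \cF_{K, m}} f - f}_{2, \mu} \le F$ yields the claimed bound $K\cdot m^{-1/2}\cdot(1 + \sqrt{2\log(1/\delta)})$ with probability at least $1 - \delta$.

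The main point to get right is the first step, exhibiting $\hat f$ as a bona fide element of $\tilde \cF_{K, m}$ so that the projection inequality applies, since the definition of $\tilde \cF_{K, m}$ couples the weight $[W]_l$ both to the coefficient and, through $b_l$ and the frozen indicator, to the feature; once the boundedness constraint $\norm{[W]_l}_2 \le K / \sqrt m$ is verified, the remaining variance and concentration estimates are routine. A secondary check is that the pointwise perturbation bound transfers to the $L_2(\mu)$-norm functional, which is immediate because $\norm{\cdot}_{2, \mu}$ is $1$-Lipschitz with respect to a pointwise perturbation bounded under the probability measure $\mu$.
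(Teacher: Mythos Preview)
Your proposal is correct and follows precisely the classical Rahimi--Recht argument that the paper is citing: the paper does not give its own proof of this lemma but simply invokes Lemma~1 of \cite{NIPS2008_3495}, and your write-up reconstructs that proof faithfully (unbiased Monte-Carlo estimator $\hat f$, verification that $\hat f\in\tilde\cF_{K,m}$ via $[W]_l = m^{-1/2}b_l\,\alpha([W_0]_l)$, variance bound $K^2/m$, then McDiarmid with per-coordinate sensitivity $2K/m$). There is nothing to add.
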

	Lemma \ref{lem:proj_err0} implies that there exists $\hat f(s, a) \in \tilde \cF_{K, m}$ such that
	\begin{align}
	\EE_\init \Bigl[\norm[\big]{ \hat f(s, a) - f(s, a)}^2_{2, \rho_\pi}\Bigr]   &= \int_0^\infty \PP\Bigl( \norm[\big]{ \hat  f(s, a) - f(s, a)}^2_{2, \rho_\pi}>y \Bigr) \rd y \nonumber \\
	& \le \int_0^\infty y \cdot \exp\bigl(-1/2\cdot (\sqrt{my} / K - 1)^2\bigr) = \cO(K^2 / m).
	\label{eq:proj_err0}
	\end{align}
	By the fact that $\hat f(s, a) \in \tilde \cF_{K, m}$ and the definition of $\cF_{K, m}$ in Definition \ref{def:func_class}, we have that $\hat f(s, a) \in\cF_{K, m}- u_0(s, a) $. Let 
	\begin{align*}
	\hat Q(s, a) = \beta^\top \phi_0(s, a) +\hat f(s, a) = (\beta + W_f)^\top \phi_0(s, a) .
	\end{align*} 
	We then have that $\hat Q(s, a) \in \cF_{B_\beta+ K, m}$ and that
	\begin{align} 
	\label{eq:ft8}
	\EE_\init \Bigl[\norm[\big]{ \bar Q(s, a) - \hat Q(s, a) }^2_{2, \rho_k}\Bigr] &\le 2\EE_\init \Bigl[\norm[\big]{ u_\beta(s, a) - \beta^\top\phi_0(s, a)}^2_{2, \rho_\pi}\Bigr] +2\EE_\init\Bigl[\norm[\big]{ \hat f(s, a) - f(s, a)}^2_{2, \rho_\pi}\Bigr]\nonumber \\ &= \cO(B_\beta^3\cdot m^{-1/2}+ K^2 \cdot m^{-1}),
	\end{align}
	where the last inequality follows from Assumption \ref{asp:linear_nn}, Lemma \ref{lem:linear_nn}, and \eqref{eq:proj_err0}.
	
	Finally, we set $B_\omega = K + B_\beta > B_\beta + (1-\gamma)^{-1}\cdot \gamma\cdot B_P\cdot (2M_0 + 3B_\beta)$. Combining \eqref{eq:ft6} and \eqref{eq:ft8}, we have that
	\begin{align*}
	\EE_\init \Bigl[\norm[\big]{ Q_{r_\beta}^\pi(s, a) - \hat Q(s, a) }^2_{2, \rho_k}\Bigr] &\le 2\EE_\init \Bigl[\norm[\big]{ \bar Q(s, a) - \hat Q(s, a) }^2_{2, \rho_k}\Bigr]+2\EE_\init \Bigl[\norm[\big]{ \bar Q(s, a) - Q_{r_\beta}^\pi(s, a) }^2_{2, \rho_k}\Bigr] \nonumber \\
	&= \cO\bigl(B_\beta^3\cdot m^{-1/2}+ B_\omega^2 \cdot m^{-1} + B_\omega^2 \cdot \exp(-C_v\cdot B_\omega^2)\bigr),
	\end{align*}
	where $\hat Q(s, a) \in \cF_{B_\omega, m}$. Thus, we complete the proof of Lemma \ref{lem:func_trans}.
\end{proof}

\section{Proofs of Auxiliary Results}
\label{sec:pf_aux}
In what follows, we present the proofs of the lemmas in \S\ref{sec:alg}-\ref{sec:analysis}.
\subsection{Proof of Proposition \ref{prop:npg_form}}
\label{sec:pf_prop_npg_form}
\begin{proof}
	By the definition of the neural network in \eqref{eq:def_nn}, we have for any $(s, a) \in \cS\times \cA$ that
	$\nabla_W u_W(s, a) = \phi_W(s, a)$ almost everywhere.
	We first calculate $\nabla_\theta L(\theta, \beta)$. Following from the policy gradient theorem \citep{sutton2018reinforcement} and the definition of $L(\theta, \beta)$ in \eqref{eq:def_il}, we have that
	\begin{align}
	\label{eq:nf3}
	\nabla_\theta L(\theta, \beta) &= -\nabla_\theta J(\pi_\theta; r_\beta) \nonumber \\
	&= - \EE_{ \nu_{\pi_\theta}}\bigl[ Q^{\pi_\theta}_{r_\beta}(s, a) \cdot \nabla_\theta \log \pi_\theta(a\given s) \bigr].
	\end{align}
    Following from the parameterization of $\pi_\theta$ in \eqref{eq:para_pi} and the definition of $\phic_\theta(s, a)$ in \eqref{eq:def_phic} of Proposition \ref{prop:npg_form}, we have that
	\begin{align}
	\label{eq:nf_2}
	\nabla_\theta \log \pi_\theta (a\given s) &= \tau \cdot \phi_\theta(s, a) - \frac{ \sum_{a'\in \cA} \tau \cdot \exp \bigl(\tau \cdot \theta^\top \phi_\theta(s, a')\bigr) \cdot \phi_\theta(s, a') }{\sum_{a'\in \cA} \exp \bigl(\tau \cdot \theta^\top \phi_\theta(s, a')\bigr)} \nonumber \\
	& = \tau \cdot \Bigl(\phi_\theta(s, a) - \tau \cdot \EE_{a'\sim\pi_\theta(\cdot \given s)}\bigl[ \phi_\theta(s, a') \bigr] \Bigr) = \tau \cdot \phic_\theta(s, a).
 	\end{align}
 	Plugging \eqref{eq:nf_2} into \eqref{eq:nf3}, we have that
 	\begin{align*}
 	\nabla_\theta L(\theta, \beta) = - \tau \cdot \EE_{ \nu_{\pi_\theta}}\bigl[ Q^{\pi_\theta}_{r_\beta}(s, a)\cdot \phic_\theta(s, a) \bigr].
 	\end{align*}
    It remains to calculate $\cI(\theta)$ and $\nabla_\beta L(\theta, \beta)$. By \eqref{eq:nf_2} and the definition of $\cI(\theta)$ in \eqref{eq:fisher}, it holds that
 	\begin{align*}
 	\cI(\theta) &= \EE_{ \nu_{\pi_\theta}}\bigl[\nabla\log\pi_\theta(a\given s) \nabla\log\pi_\theta(a\given s)^\top \bigr] \nonumber \\
 	& = \tau^2 \cdot \EE_{ \nu_{\pi_\theta}}\bigl[ \phic_\theta(s, a)\phic_\theta(s, a)^\top \bigr]. 
 	\end{align*}
 	By the definition of the objective function $L(\theta, \beta)$ in \eqref{eq:def_il}, it holds that
 	\begin{align*}
 	\nabla_\beta L(\theta, \beta) & = \nabla_\beta J(\pi_\rE; r_\beta) - \nabla_\beta J(\pi_\theta; r_\beta) - \lambda \cdot \nabla_\beta \psi(\beta) \nonumber \\
 	& = \EE_{\nu_\rE}\bigl[\nabla_\beta r_\beta(s, a)\bigr] - \EE_{\nu_{\pi_\theta}}\bigl[\nabla_\beta r_\beta(s, a)\bigr] - \lambda \cdot \nabla_\beta \psi(\beta)\nonumber\\
 	&= (1-\gamma)^{-1}\cdot \EE_{\nu_\rE}\bigl[\phi_\beta(s, a)\bigr] - (1-\gamma)^{-1} \cdot  \EE_{\nu_{\pi_\theta}}\bigl[\phi_\beta(s, a)\bigr] - \lambda \cdot \nabla_\beta \psi(\beta).
 	\end{align*}
 	Thus, we complete the proof of Proposition \ref{prop:npg_form}.
\end{proof}

\subsection{Proof of Lemma \ref{lem:pe0}}
\label{sec:pf_lem_pe0}

\begin{proof}
	The proof of Lemma \ref{lem:pe0} is similar to that of Lemmas 5.4 and 5.5 in \cite{wang2019neural}. 
	By direct calculation, we have that
	\begin{align*}
	&\eta \cdot \EE_{d_\rE}\Bigl[ \bigl\langle Q^{\pi_k}_{r_k}(s, \cdot), \pi_\rE^s - {\pi_k^s} \bigr\rangle_\cA \Bigr] = \kl^{d_\rE}(\pi_\rE\,\|\, \pi_k ) - \kl^{d_\rE}(\pi_\rE\,\|\, \pi_{k+1} ) + \eta \cdot \Delta_k^{\text{(i)}},
	\end{align*}
	where $\Delta_k^{\text{(i)}}$ takes the form of
	\begin{align}
	\label{eq:policy_error}
	\Delta_k^{\text{(i)}} &=\eta^{-1}\cdot\biggl\{ \EE_{d_\rE}\Bigl[ \bigl\langle \log(\pi_{k+1}^s / \pi _k ^s) - \eta\cdot Q ^{\pi _k} _{r_k}(s, \cdot), \pi_\rE^s - \pi_k^s \bigr\rangle_\cA + \bigl\langle \log({\pi_{k+1}^s} / {\pi_{k}^s}), \pi_k^s - \pi_{k+1}^s \bigr\rangle_\cA \Bigr]   - \kl^{d_\rE}(\pi_{k+1}^s \,\|\, \pi_{k}^s )\biggr\} \nonumber \\
	&=  \underbrace{\eta^{-1}\cdot \EE_{d_\rE}\Bigl[\bigl\langle \log(\pi _ {k+1} ^ s / \pi_k^s) - \eta \cdot  \hat Q_{\omega_k}(s, \cdot), \pi_\rE^s - \pi_{k}^s \bigr\rangle_\cA\Bigr]}_{\displaystyle\text{(i.a)}} + \underbrace{\EE_{d_\rE}\Bigl[ \bigl\langle  \hat Q_{\omega_k}(s, \cdot) - Q_{r_k}^{\pi_k}(s, \cdot), \pi_\rE^s - \pi_k^s \bigr\rangle_\cA\Bigr]}_{\displaystyle\text{(i.b)}}  \nonumber \\
	&\quad  + \underbrace{\eta^{-1}\cdot \EE_{d_\rE}\Bigl[\bigl\langle \log({\pi_{k+1}^s} / {\pi_{k}^s}), \pi_k^s - \pi_{k+1}^s \bigr\rangle_\cA - \kl(\pi_{k+1}^s \,\|\, \pi_{k}^s )\Bigr]}_{\displaystyle\text{(i.c)}}.
	\end{align}
	The following lemmas upper bound $ \Delta_k^{\text{(i)}} $
	by upper bounding terms (i.a), (i.b), and (i.c) on the right-hand side of \eqref{eq:policy_error}, respectively. Note that the expectation $\EE_{\init, d_\rE}$ is taken with respect to the random initialization in \eqref{eq:init} and $s \sim d_\rE$.
	
	\begin{lemma}[Upper Bound of Term (i.a) in \eqref{eq:policy_error}]
		\label{lem:bound_pe1}
		Under Assumptions \ref{asp:linear_nn} and \ref{asp:more}, we have that
		\begin{align*}
		&\EE_{\init, d_\rE}\biggl[\Bigl| \inp[\big]{\log (\pi_ {k+1}^s / \pi _{k}^s) - \eta \cdot \hat Q_{\omega_k}(s, \cdot) }{ \pi_\rE^s - \pi_k^s }_\cA \Bigr|\biggr] \nonumber \\
		&\quad = \eta\cdot 2\sqrt{2}\cdot C_h \cdot B_\theta^{1/2} \cdot \sigma^{1/2} \cdot N^{-1/4} + \cO(\tau_{k+1} \cdot B_\theta^{3/2} \cdot m^{-1/4} + \eta \cdot B_\theta^{5/4} \cdot m^{-1/8} ),
		\end{align*}
		where $C_h$ is defined in Assumption \ref{asp:visitation} and $\sigma$ is defined in Assumption \ref{asp:more}.
	\end{lemma}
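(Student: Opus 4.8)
The plan is to expand the log-ratio $\log(\pi_{k+1}^s/\pi_k^s)$ explicitly, exploit that any term constant in the action $a$ is annihilated by $\langle\,\cdot\,,\pi_\rE^s-\pi_k^s\rangle_\cA$, and recognize the leftover as the compatible-function-approximation residual that $\delta_k$ is designed to minimize. Write $f(s,a)=\log(\pi_{k+1}^s(a)/\pi_k^s(a))-\eta\cdot\hat Q_{\omega_k}(s,a)$ and let $\bar f(s,a)=f(s,a)-\EE_{a'\sim\pi_k}[f(s,a')]$ be its $\pi_k$-centering; since $\pi_\rE^s$ and $\pi_k^s$ are probability vectors, $\langle f,\pi_\rE^s-\pi_k^s\rangle_\cA=\langle\bar f,\pi_\rE^s-\pi_k^s\rangle_\cA$. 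I would first change measure: writing $\EE_{d_\rE}[\langle\bar f,\pi_\rE^s-\pi_k^s\rangle_\cA]=\EE_{\nu_\rE}[\bar f]-\EE_{d_\rE\otimes\pi_k}[\bar f]$ and applying Cauchy--Schwarz with Assumption \ref{asp:visitation} (the Radon--Nikodym derivative of $d_\rE\otimes\pi_k$ with respect to $\nu_k$ equals $\rd d_\rE/\rd d_k$, and $\|\rd\nu_\rE/\rd\nu_k\|_{2,\nu_k}+\|\rd d_\rE/\rd d_k\|_{2,d_k}\le C_h$), this yields $|\EE_{d_\rE}[\langle f,\pi_\rE^s-\pi_k^s\rangle_\cA]|\le C_h\cdot\|\bar f\|_{2,\nu_k}$, reducing everything to an $L_2(\nu_k)$ estimate.

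Next I would linearize under $\nu_k$. From the energy parameterization \eqref{eq:para_pi}, $\log(\pi_{k+1}^s/\pi_k^s)=\tau_{k+1}\cdot u_{\theta_{k+1}}-\tau_k\cdot u_{\theta_k}$ minus a log-partition term that is constant in $a$; using $u_W=W^\top\phi_W$ together with the update $\tau_{k+1}\theta_{k+1}=\tau_k\theta_k-\eta\delta_k$ from \eqref{eq:update_theta}--\eqref{eq:def_delta}, replacing both feature maps by $\phi_0$ turns this (modulo a constant in $a$) into $-\eta\cdot\delta_k^\top\phi_0$, with substitution error $\tau_{k+1}\theta_{k+1}^\top(\phi_{\theta_{k+1}}-\phi_0)-\tau_k\theta_k^\top(\phi_{\theta_k}-\phi_0)$ bounded in $L_2(\nu_k)$ by $\cO(\tau_{k+1}\cdot B_\theta^{3/2}m^{-1/4})$ through Lemma \ref{lem:linear_nn} (radius $B_\theta$, scalars $\tau_k\le\tau_{k+1}$ extracted). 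A further linearization replaces $\delta_k^\top\phi_0$ by $\delta_k^\top\phi_{\theta_k}$, whose centering is exactly $\delta_k^\top\phic_{\theta_k}$. Hence $\bar f=-\eta\cdot\overline{r_k}+(\text{errors})$ with $r_k=\delta_k^\top\phic_{\theta_k}+\hat Q_{\omega_k}$, and since centering is a contraction in $L_2(\nu_k)$ we obtain $\|\bar f\|_{2,\nu_k}\le\eta\cdot\|r_k\|_{2,\nu_k}+\cO(\tau_{k+1}B_\theta^{3/2}m^{-1/4})$.

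The crux is to bound $\EE[\|r_k\|_{2,\nu_k}^2]$, the population value at $\delta_k$ of the least-squares objective $\ell(\delta)=\EE_{\nu_k}[(\delta^\top\phic_{\theta_k}+\hat Q_{\omega_k})^2]$. I would use that \eqref{eq:def_delta} makes $\delta_k$ minimize $\|\hat\cI(\theta_k)\delta-\tau_k\hat\nabla_\theta L(\theta_k,\beta_k)\|_2$ over $S_{B_\theta}$, which by \eqref{eq:est_fisher}--\eqref{eq:est_pg} equals $(\tau_k^2/2)\cdot\|\nabla\hat\ell(\delta)\|_2$ for the empirical objective $\hat\ell$. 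Because $\hat Q_{\omega_k}=\omega_k^\top\phi_{\omega_k}$ is itself nearly linear in $\phic_{\theta_k}$ (here $\omega_k\in S_{B_\omega}=S_{B_\theta}$ and the shared initialization $\theta_0=\omega(0)=W_0$ make the matching direction feasible up to the $\cO(B_\theta^{3/2}m^{-1/4})$ feature mismatch), a feasible comparator attains population objective only of linearization order. By convexity, $\ell(\delta_k)-\min_{S_{B_\theta}}\ell\le2B_\theta\cdot\|\nabla\ell(\delta_k)\|_2$, and I would bound $\|\nabla\ell(\delta_k)\|_2$ by combining the empirical optimality of $\delta_k$ with the variance bounds \eqref{eq:var1}--\eqref{eq:var2}; the prefactors $\tau_k^2$ cancel between $\hat\cI$ and $\tau_k\hat\nabla_\theta L$, leaving a statistical error $\cO(\sigma N^{-1/2})$ and a linearization bias $\cO(B_\theta^{3/2}m^{-1/4})$. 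This gives $\EE[\ell(\delta_k)]=\cO(B_\theta\sigma N^{-1/2}+B_\theta^{5/2}m^{-1/4})$; Jensen's inequality and $\sqrt{a+b}\le\sqrt a+\sqrt b$ then produce $\EE[\|r_k\|_{2,\nu_k}]\le2\sqrt2\cdot B_\theta^{1/2}\sigma^{1/2}N^{-1/4}+\cO(B_\theta^{5/4}m^{-1/8})$. Multiplying by $\eta C_h$ and adding the errors of the previous step yields the claimed bound.

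I expect this third step to be the main obstacle. The delicate points are: converting the minimization of the gradient-norm in \eqref{eq:def_delta} into a bound on the population least-squares value---this is where one square root is forced and the rate degrades to $N^{-1/4}$ and $m^{-1/8}$; exhibiting a comparator in $S_{B_\theta}$ that attains the linearization-limited residual without generating a term in $M_0$ (the clean statement of term (i.a) carries none), for which the shared initialization and the choice $B_\theta=B_\omega$ are essential; and tracking the cancellation of the $\tau_k$ factors so that the statistical error is $\tau_k$-free. The remaining ingredients---the closed-form log-ratio, the cancellation of action-constant terms, the linearization estimates of Lemma \ref{lem:linear_nn}, and the change of measure via Assumption \ref{asp:visitation}---are routine.
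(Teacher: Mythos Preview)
Your overall plan---expand the log-ratio via \eqref{eq:para_pi}, drop action-constant terms against $\pi_\rE^s-\pi_k^s$, change measure through Assumption~\ref{asp:visitation} to reduce to an $L_2(\nu_k)$ estimate, and absorb all feature mismatches by Lemma~\ref{lem:linear_nn}---is exactly what the paper does. The gap is in your third step.

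The claim ``$\hat Q_{\omega_k}=\omega_k^\top\phi_{\omega_k}$ is itself nearly linear in $\phic_{\theta_k}$'' is false. After linearization, $\hat Q_{\omega_k}(s,a)\approx\omega_k^\top\phi_{\theta_k}(s,a)=\omega_k^\top\phic_{\theta_k}(s,a)+\omega_k^\top\EE_{a'\sim\pi_k^s}[\phi_{\theta_k}(s,a')]$, and the second summand is a state-only ``value'' term that is $\Theta(1)$ and, because $\nu_k(s,a)=d_k(s)\pi_k(a\,|\,s)$, is $L_2(\nu_k)$-orthogonal to every function of the form $\delta^\top\phic_{\theta_k}$. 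Hence $\min_{\delta}\ell(\delta)=\min_\delta\EE_{\nu_k}[(\delta^\top\phic_{\theta_k}+\hat Q_{\omega_k})^2]$ is bounded below by $\EE_{d_k}[(\EE_{a'\sim\pi_k^s}[\hat Q_{\omega_k}(s,a')])^2]$, which is not of linearization order; your ``feasible comparator attains population objective only of linearization order'' therefore fails, and the contraction step $\|\bar f\|_{2,\nu_k}\le\eta\|r_k\|_{2,\nu_k}$, though valid, has discarded precisely the cancellation that makes the bound go through.

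The paper keeps the \emph{centered} residual $g_k=\delta_k^\top\phic_{\theta_k}+\omega_k^\top\phic_{\omega_k}$ and, rather than bounding its square through a comparator's objective value, writes $\|g_k\|_{2,\nu_k}^2$ as the sum of two cross terms. One of them equals $(\delta_k-\omega_k)^\top\EE_{\nu_k}[\phic_{\theta_k}\,g_k]=\tau_k^{-2}(\delta_k-\omega_k)^\top\bigl(\cI(\theta_k)\delta_k-\tau_k\,\EE_k[\hat\nabla_\theta L(\theta_k,\beta_k)]\bigr)$, which is at most $2B_\theta\cdot\tau_k^{-2}\|\cI(\theta_k)\delta_k-\tau_k\,\EE_k[\hat\nabla_\theta L]\|_2$ using only $\delta_k,\omega_k\in S_{B_\theta}$; the feasible point $\omega_k$ is then invoked \emph{only} for the gradient-norm minimization in \eqref{eq:def_delta}, giving the $\sigma N^{-1/2}$ and linearization contributions after the $\tau_k$ factors cancel. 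The other cross term is a pure feature mismatch $\omega_k^\top(\phic_{\theta_k}-\phic_{\omega_k})$ paired with $g_k$, handled by Lemma~\ref{lem:linear_nn} and the crude bound $\|g_k\|_{2,\nu_k}=\cO(B_\theta)$. Taking square roots produces the $N^{-1/4}$ and $m^{-1/8}$ rates you anticipated. The moral: you must work with the $\pi_k$-centered $\hat Q$ throughout, and the right device is this algebraic splitting of $\|g_k\|_{2,\nu_k}^2$, not a convexity-plus-comparator bound on the least-squares value.
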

\begin{proof}
	See \S\ref{sec:pf_lem_bound_pe1} for a detailed proof.
\end{proof}
	
\begin{lemma}[Upper Bound of Term (i.b) in \eqref{eq:policy_error}]
	\label{lem:bound_pe2}
	Under Assumption \ref{asp:visitation}, we have that
	\begin{align*}
	\EE_{\init, d_\rE}\Bigl[ \bigl\langle  \hat Q_{\omega_k}(s, \cdot) - Q_{r_k}^{\pi_k}(s, \cdot), \pi_\rE^s - \pi_k^s \bigr\rangle_\cA \Bigr] \le C_h \cdot \err_{Q, k},
	\end{align*}
	where $\err_{Q, k}$ takes the form of 
	\begin{align}
	\label{eq:def_errq}
	\err_{Q, k} = \EE_\init\Bigl[ \norm[\big]{Q_{r_k}^{\pi_k}(s, a) - \hat Q_{\omega_k}(s, a) }_{2,  \rho_k} \Bigr].
	\end{align}
\end{lemma}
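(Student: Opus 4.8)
The plan is to rewrite the inner product over $\cA$ as a difference of two expectations and then transfer the expert-side measures onto the stationary state-action distribution $\rho_k$ via Cauchy--Schwarz and the concentrability coefficients of Assumption \ref{asp:visitation}. Throughout, I would fix the random initialization and write $g_k(s, a) = \hat Q_{\omega_k}(s, a) - Q^{\pi_k}_{r_k}(s, a)$, so that the quantity to be bounded is $\EE_\init \EE_{s\sim d_\rE}[\bigl\langle g_k(s, \cdot), \pi_\rE^s - \pi_k^s\bigr\rangle_\cA]$, after which I take $\EE_\init$ at the very end.

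First I would expand the inner product as
\begin{align*}
\bigl\langle g_k(s, \cdot), \pi_\rE^s - \pi_k^s \bigr\rangle_\cA = \EE_{a\sim \pi_\rE^s}\bigl[g_k(s, a)\bigr] - \EE_{a\sim \pi_k^s}\bigl[g_k(s, a)\bigr].
\end{align*}
Taking the expectation over $s\sim d_\rE$ and recalling $\nu_\rE = d_\rE\cdot \pi_\rE$, the first term becomes $\EE_{\nu_\rE}[g_k]$, while the second becomes $\EE_{\tilde\nu_k}[g_k]$ for the mixed measure $\tilde\nu_k(s, a) = d_\rE(s)\cdot \pi_k(a\given s)$.

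Next I would bound each term by Cauchy--Schwarz after a change of measure to $\rho_k$. For the first term, $|\EE_{\nu_\rE}[g_k]| \le \norm{g_k}_{2, \rho_k}\cdot \norm{\rd \nu_\rE / \rd \rho_k}_{2, \rho_k}$. For the second term, the key observation is that the Radon--Nikodym derivative factorizes as $\rd\tilde\nu_k / \rd\rho_k (s, a) = d_\rE(s) / \varrho_k(s) = (\rd d_\rE / \rd \varrho_k)(s)$, which depends on $s$ alone; since $\rho_k(s, a) = \varrho_k(s)\cdot \pi_k(a\given s)$, integrating out the action shows $\norm{\rd d_\rE / \rd \varrho_k}_{2, \rho_k} = \norm{\rd d_\rE / \rd \varrho_k}_{2, \varrho_k}$, and hence $|\EE_{\tilde\nu_k}[g_k]| \le \norm{g_k}_{2, \rho_k}\cdot \norm{\rd d_\rE / \rd \varrho_k}_{2, \varrho_k}$. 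Summing the two bounds and invoking the second inequality of Assumption \ref{asp:visitation}, which controls the \emph{sum} of these two concentrability norms by $C_h$, yields the per-initialization estimate $\EE_{s\sim d_\rE}[\bigl\langle g_k(s, \cdot), \pi_\rE^s - \pi_k^s\bigr\rangle_\cA] \le C_h\cdot \norm{g_k}_{2, \rho_k}$. Taking $\EE_\init$ and recalling the definition of $\err_{Q, k}$ in \eqref{eq:def_errq} then gives the claim.

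The step I expect to be the main obstacle is handling the mixed measure $\tilde\nu_k = d_\rE\cdot \pi_k$: its state marginal is the expert visitation measure $d_\rE$ whereas its conditional is $\pi_k$, so it matches neither $\nu_\rE$ nor $\rho_k$ directly. The resolution is to exploit that the resulting Radon--Nikodym derivative is a function of the state alone, so its $L_2$ norm is invariant under lifting from $\varrho_k$ to $\rho_k$; this is precisely what lets the two concentrability norms combine into the single sum bounded by $C_h$ in Assumption \ref{asp:visitation}, rather than incurring a spurious extra factor of two.
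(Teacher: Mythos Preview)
Your proposal is correct and follows essentially the same route as the paper: split the inner product into the $\pi_\rE^s$ and $\pi_k^s$ contributions, change measure to $\rho_k$, apply Cauchy--Schwarz, and use the second line of Assumption~\ref{asp:visitation} to bound the sum of the two concentrability norms by $C_h$. The only cosmetic difference is that the paper first passes to $\bigl|\langle \Delta_{Q,k}(s,\cdot),\pi_\rE^s-\pi_k^s\rangle_\cA\bigr|$ and then to $\int|\Delta_{Q,k}|\,\rd\nu_\rE+\int|\Delta_{Q,k}|\,\rd\tilde\nu_k$ before changing measure, whereas you bound each expectation separately; your explicit remark that $\rd\tilde\nu_k/\rd\rho_k$ depends on $s$ alone (so its $L_2(\rho_k)$ and $L_2(\varrho_k)$ norms coincide) makes transparent a step the paper leaves implicit.
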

\begin{proof}
	See \S\ref{sec:pf_lem_bound_pe2} for a detailed proof.
\end{proof}

\begin{lemma}[Upper Bound of Term (i.c) in \eqref{eq:policy_error}]
	\label{lem:bound_pe3}
	Under Assumptions \ref{asp:linear_nn} and \ref{asp:bound_init}, we have that
	\begin{align*}
	&\EE_{\init, d_\rE}\biggl[\Bigl|\inp[\big]{\log (\pi_{k+1}^s / \pi_k^s)}{\pi_k^s - \pi_{k+1}^s}_\cA \Bigr| -\kl(\pi_ {k+1}^s \,\|\, \pi_k^s) \biggr]  \nonumber\\ 
	&\quad = \eta^2 \cdot (M_0^2 + 9B_\theta^2) + \cO(\tau_{k+1}\cdot B_\theta^{3/2}\cdot m^{-1/4}),
	\end{align*}
	where $M_0$ is defined in Assumption \ref{asp:bound_init}.
\end{lemma}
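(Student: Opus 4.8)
The plan is to reduce the claim to an upper bound on $\EE_{\init, d_\rE}\bigl[\kl(\pi_k^s \,\|\, \pi_{k+1}^s)\bigr]$ via an exact pointwise identity and then estimate that divergence. For a fixed state $s$, writing $p = \pi_k^s$ and $q = \pi_{k+1}^s$ as distributions on the finite set $\cA$ and expanding the inner product gives
\begin{align*}
\inp[\big]{\log(q/p)}{p - q}_\cA = -\kl(p\,\|\,q) - \kl(q\,\|\,p) \le 0.
\end{align*}
Hence $\bigl|\inp{\log(\pi_{k+1}^s/\pi_k^s)}{\pi_k^s - \pi_{k+1}^s}_\cA\bigr| = \kl(\pi_k^s\,\|\,\pi_{k+1}^s) + \kl(\pi_{k+1}^s\,\|\,\pi_k^s)$, so subtracting $\kl(\pi_{k+1}^s\,\|\,\pi_k^s)$ leaves exactly $\kl(\pi_k^s\,\|\,\pi_{k+1}^s)$. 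It remains to show $\EE_{\init, d_\rE}[\kl(\pi_k^s\,\|\,\pi_{k+1}^s)] \le \eta^2\cdot(M_0^2 + 9B_\theta^2) + \cO(\tau_{k+1}\cdot B_\theta^{3/2}\cdot m^{-1/4})$.

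Next I would make the policy ratio explicit. Using the energy-based parameterization in \eqref{eq:para_pi} together with the update $\tau_{k+1}\theta_{k+1} = \tau_k\theta_k - \eta\delta_k$ from \eqref{eq:update_theta}, the log-ratio is $\log(\pi_{k+1}^s/\pi_k^s)(a) = g_k(s,a) - \log\EE_{\pi_k^s}[e^{g_k}]$ with $g_k(s,a) = \tau_{k+1} u_{\theta_{k+1}}(s,a) - \tau_k u_{\theta_k}(s,a)$, so that $\kl(\pi_k^s\,\|\,\pi_{k+1}^s) = \log\EE_{\pi_k^s}[e^{g_k}] - \EE_{\pi_k^s}[g_k]$. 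I would split $g_k = a_k + e_k$ into the linearized increment $a_k(s,a) = -\eta\cdot\delta_k^\top\phi_0(s,a)$ and the linearization error $e_k = \tau_{k+1}(u_{\theta_{k+1}} - \theta_{k+1}^\top\phi_0) - \tau_k(u_{\theta_k} - \theta_k^\top\phi_0)$. Introducing the ideal next policy $\hat\pi_{k+1}^s \propto \pi_k^s\cdot e^{a_k(s,\cdot)}$ decomposes the divergence as $\kl(\pi_k^s\,\|\,\pi_{k+1}^s) = \kl(\pi_k^s\,\|\,\hat\pi_{k+1}^s) + \bigl[\log\EE_{\hat\pi_{k+1}^s}e^{e_k} - \EE_{\pi_k^s}e_k\bigr]$.

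For the main term $\kl(\pi_k^s\,\|\,\hat\pi_{k+1}^s) = \log\EE_{\pi_k^s}e^{a_k} - \EE_{\pi_k^s}a_k$, since $a_k(s,\cdot)$ is valued in $[-\sup|a_k|, \sup|a_k|]$, Hoeffding's lemma gives $\kl(\pi_k^s\,\|\,\hat\pi_{k+1}^s) \le \tfrac12\sup_{(s,a)}|a_k(s,a)|^2 = \tfrac{\eta^2}{2}\sup_{(s,a)}|\delta_k^\top\phi_0(s,a)|^2$. Because $\delta_k \in S_{B_\theta}$, Lemma \ref{lem:bound_nn} yields $\EE_\init[\sup_{(s,a)}|\delta_k^\top\phi_0(s,a)|^2] \le 2M_0^2 + 18B_\theta^2$, whence $\EE_\init[\kl(\pi_k^s\,\|\,\hat\pi_{k+1}^s)] \le \eta^2\cdot(M_0^2 + 9B_\theta^2)$, matching the first term exactly. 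For the error term I would expand $t \mapsto \log\EE_{\pi_k^s}e^{a_k + t e_k}$ by the mean value theorem to rewrite it as $\EE_{\nu^s_\xi}[e_k] - \EE_{\pi_k^s}[e_k]$ for an intermediate tilted measure $\nu_\xi^s$, so that the error is governed by the integral size of $e_k$; Lemma \ref{lem:linear_nn} then controls this through $\EE_\init[\|u_{\theta_j} - \theta_j^\top\phi_0\|_{1,\mu}] = \cO(B_\theta^{3/2}m^{-1/4})$ for $j \in \{k, k+1\}$, with the prefactor $\tau_{k+1}$ producing $\cO(\tau_{k+1}\cdot B_\theta^{3/2}\cdot m^{-1/4})$.

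The hard part will be this last step. The linearization error $e_k$ is small only in the integral ($L^1/L^2$) norm supplied by Lemma \ref{lem:linear_nn}, yet it carries the large prefactor $\tau_{k+1} = (k+1)\eta$ (which grows like $\sqrt{T}$) and enters $\kl$ through an exponential, where a naive sup-norm estimate on $e_k$ is only $\cO(\tau_{k+1} B_\theta)$ — dimension-free and useless. The crux is therefore to show that $e_k$ contributes only to \emph{first} order: one must argue that the change of measure between $\pi_k^s$, $\hat\pi_{k+1}^s$, and the intermediate tilts is benign on the bulk, and use the sub-Gaussian tail of Lemma \ref{lem:bound_nn} to absorb the rare states where $e_k$ is large, so that the curvature-type (second-order) remainders are dominated by $\tau_{k+1}B_\theta^{3/2}m^{-1/4}$ in the regime $m = \Omega(\bar B^{10}T^6)$. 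Separating the uniformly bounded increment $a_k$ (which yields the exact quadratic main term) from the merely integrable error $e_k$ is exactly what makes the bound linear, rather than quadratic, in the linearization error.
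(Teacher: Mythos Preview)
Your opening identity is correct and elegant: since $\inp{\log(q/p)}{p-q}_\cA = -\kl(p\,\|\,q) - \kl(q\,\|\,p)$, the quantity in the lemma equals $\kl(\pi_k^s\,\|\,\pi_{k+1}^s)$ exactly, and your Hoeffding bound on the linearized increment $a_k$ recovers the main term $\eta^2(M_0^2 + 9B_\theta^2)$ via Lemma~\ref{lem:bound_nn}. This reduction is cleaner than what the paper does.

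The genuine gap is in the error term. Your decomposition and MVT rewrite $\log\EE_{\hat\pi_{k+1}^s}[e^{e_k}] - \EE_{\pi_k^s}[e_k] = \EE_{\nu_\xi^s}[e_k] - \EE_{\pi_k^s}[e_k]$ are valid, but to invoke Lemma~\ref{lem:linear_nn} on $\EE_{\nu_\xi^s}[|e_k|]$ you need the tilted measure $d_\rE(s)\cdot\nu_\xi^s(a)$ to satisfy Assumption~\ref{asp:measures}(a), or to have bounded density against a measure that does. The density $\nu_\xi^s/\pi_k^s \propto e^{a_k + \xi e_k}$ involves $e^{\xi e_k}$, and the only pointwise control available is $\|e_k\|_\infty = \cO(\tau_{k+1}B_\theta)$, which is of order $\sqrt{T}$; hence the change-of-measure factor can be exponentially large. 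The sub-Gaussian tail argument you sketch does not obviously close this, because the problem is not that $e_k$ is large on rare states but that the exponential tilt amplifies precisely the actions where $e_k$ is nonzero. In short, passing the linearization error through $\kl$ forces it into an exponential, where $L^1$ smallness is not enough.

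The paper avoids this by never reducing to $\kl(\pi_k^s\,\|\,\pi_{k+1}^s)$. It splits the log-ratio \emph{before} taking absolute values, writing (up to an additive function of $s$)
\[
\log(\pi_{k+1}^s/\pi_k^s)(a) = -\eta\,\delta_k^\top\phi_{\theta_k}(s,a) + \tau_{k+1}\,\theta_{k+1}^\top\bigl(\phi_{\theta_{k+1}}(s,a) - \phi_{\theta_k}(s,a)\bigr),
\]
so that the inner product $\inp{\cdot}{\pi_k^s - \pi_{k+1}^s}_\cA$ is linear in each piece. For the first piece, H\"older plus Pinsker plus completing the square gives $|\text{first}| - \kl(\pi_{k+1}^s\,\|\,\pi_k^s) \le \tfrac{\eta^2}{2}\|\delta_k^\top\phi_{\theta_k}(s,\cdot)\|_\infty^2$, matching your main term. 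The second piece is bounded by $\tau_{k+1}$ times $\|\theta_{k+1}^\top(\phi_{\theta_{k+1}} - \phi_{\theta_k})\|_{1,\pi_k^s} + \|\theta_{k+1}^\top(\phi_{\theta_{k+1}} - \phi_{\theta_k})\|_{1,\pi_{k+1}^s}$, which is directly controlled by Lemma~\ref{lem:linear_nn} under the measures $d_\rE\times\pi_j^s$ (relatable to $\nu_j$ via Assumption~\ref{asp:measures}(b)). Because the error piece never enters an exponential, its $L^1$ smallness suffices. You can salvage your argument by keeping the identity but then splitting $\kl(\pi_k^s\,\|\,\pi_{k+1}^s)$ additively \emph{outside} the log-partition function---equivalently, by reverting to the paper's linear split for the error part.
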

\begin{proof}
	See \S\ref{sec:pf_lem_bound_pe3} for a detailed proof.
\end{proof}
Finally, by Lemmas \ref{lem:bound_pe1}-\ref{lem:bound_pe3}, under Assumptions \ref{asp:nn} and \ref{asp:more}, we obtain from \eqref{eq:policy_error} that
\begin{align*}
\EE_{\init}\bigl[|\Delta_k^{\text{(i)}}|\bigr]   &= 2\sqrt{2} C_{h} \cdot B_{\theta}^{1/2} \cdot\sigma^{1/2}\cdot N^{-1/4} + C_{h}\cdot \err_{Q, k} + \eta\cdot (M_0^2 + 9 B_\theta^2) \nonumber \\
&\quad +  \cO(\eta^{-1}\cdot \tau_{k+1} \cdot B_\theta^{3/2} \cdot m^{-1/4} + B_\theta^{5/4} \cdot m^{-1/8} ). \nonumber
\end{align*}
Here $M_0$ is defined in Assumption \ref{asp:bound_init}, $\tau_{k+1}$ is the inverse temperature parameter of $\pi_{k+1}$ defined in \eqref{eq:para_pi}, $\sigma$ is defined in Assumption \ref{asp:more}, and $\err_{Q, k}$ is defined in \eqref{eq:def_errq} of Lemma \ref{lem:bound_pe2}. Following from Proposition \ref{prop:conv_td}, we have that 
\begin{align*}
C_{h}\cdot \epsilon_{Q,k} = \cO\bigl(B_\omega^3 \cdot m^{-1/2} + B_\omega^{5/2}\cdot m^{-1/4} + B_\omega^2\cdot \exp(-C_v\cdot B_\omega^2)\bigr).  
\end{align*}
Thus, we complete the proof of Lemma \ref{lem:pe0}.
\end{proof}

\subsection{Proof of Lemma \ref{lem:reward}}
\label{sec:pf_lem_reward}
\begin{proof}
	We consider a fixed $\beta' \in S_{B_\beta}$. For notational simplicity, we write $r' = r_{\beta'}(s, a)$, $r_{k} = r_{k}(s,a)$ and $\phi_{\beta} = \phi_{\beta}(s, a)$. By the parameterization of $r_\beta(s, a)$ in \eqref{eq:para_r}, we have that 
	\begin{align}
	\label{eq:reward_diff}
	&L(\theta_k, \beta') - L(\theta_k, \beta_k) = \langle r' - r_k, \nu_\rE - \nu_k \rangle_{\cS\times \cA} + \lambda\cdot \psi(\beta_k) - \lambda\cdot \psi(\beta') \nonumber\\
	& \quad = (1-\gamma)^{-1}\cdot \Bigl(\bigl\langle\phi_{\beta_k}^\top ( \beta' - \beta_k ), \nu_\rE -  \nu_k \bigr\rangle_{\cS\times \cA} + \langle \phi_{\beta'}^\top \beta' - \phi_{\beta_k}^\top \beta', \nu_\rE -  \nu_k \rangle_{\cS\times \cA} 
	\Bigr)   + \lambda \cdot \bigl(\psi(\beta) - \psi(\beta') \bigr) \nonumber \\
	&\quad \le  (\beta' - \beta_k)^\top \nabla_\beta L(\theta_k, \beta_k) + (1-\gamma)^{-1}\cdot \bigl(\|\phi_{\beta_k}^\top \beta' - \phi_{\beta'}^\top \beta' \|_{1, \nu_k} + \|\phi_{\beta_k}^\top \beta' - \phi_{\beta'}^\top \beta' \|_{1, \nu_\rE}\bigr),
	\end{align}
	where the last inequality follows from \eqref{eq:pg_form} of Proposition \ref{prop:npg_form}.
	Then, we have that
	\begin{align*}
	&\EE_\init\bigl[L(\theta_k, \beta') - L(\theta_k, \beta_k)\bigr] \nonumber\\
	&\quad \le \EE_\init \Bigl[ (\beta' - \beta_k)^\top \nabla_\beta L(\theta_k, \beta_k) + (1-\gamma)^{-1}\cdot \bigl(\|\phi_{\beta_k}^\top \beta' - \phi_{\beta'}^\top \beta' \|_{1, \nu_k} + \|\phi_{\beta_k}^\top \beta' - \phi_{\beta'}^\top \beta' \|_{1, \nu_\rE}\bigr) \Bigr] \nonumber\\
	&\quad \le \EE_\init \bigl[(\beta' - \beta_k)^\top \nabla_\beta L(\theta_k, \beta_k)\bigr] + \cO( B_\beta ^{3/2} \cdot m^{-1/4}),
	\end{align*}
	where the last inequality follows from Assumption \ref{asp:linear_nn}, Lemma \ref{lem:linear_nn}, and the fact that $\beta', \beta_k \in S_{B_\beta}$.
	Thus, we complete the proof of Lemma \ref{lem:reward}.
\end{proof}

\subsection{Proof of Lemma \ref{lem:re0}}
\label{sec:pf_lem_re0}
\begin{proof}
	By the update of $\beta_k$ in \eqref{eq:update_beta}, it holds for any $\beta' \in S_{B_\beta}$ that
	\begin{align*}
	\bigl(\beta_k + \eta\cdot \hat \nabla_\beta L(\theta_k, \beta_k) - \beta_{k+1} \bigr)^\top (\beta' - \beta_{k+1})  \le 0,
	\end{align*}
	which further implies that
	\begin{align}
	\label{eq:reward_diff2}
	\eta \cdot ({\beta' - \beta_k})^\top {\nabla_\beta L(\theta_k, \beta_k)} &\le \norm{\beta_k - \beta'}_2^2 - \norm{\beta_{k+1} - \beta'}_2^2 - \norm{\beta_{k+1} - \beta_k}_2^2  \\
	&\quad + \eta \cdot \Bigl( (\beta_{k+1} - \beta_k)^\top \hat\nabla_\beta L(\theta_k, \beta_k) + (\beta_k - \beta')^\top\bigl(\hat \nabla_\beta L(\theta_k, \beta_k)- \nabla_\beta L(\theta_k, \beta_k)\bigr)\Bigr). \nonumber
	\end{align}
	Combining \eqref{eq:reward_diff} and \eqref{eq:reward_diff2}, we have that
	\begin{align*}
	\eta \cdot \bigl(L(\theta_k, \beta_k) - L(\theta_k, \beta')\bigr)\le  \norm{\beta_k - \beta'}_2^2 - \norm{\beta_{k+1} - \beta'}_2^2 - \norm{\beta_{k+1} - \beta_k}_2^2 + \eta \cdot \Delta_k^{\text{(ii)}},
	\end{align*}
	where $\Delta_k^{\text{(ii)}}$ takes the form of
	\begin{align}
	\label{eq:re0_1}
	\Delta_k^{\text{(ii)}} &= \underbrace{{(\beta_{k+1} - \beta_{k})}^\top{\hat \nabla_\beta L(\theta_k, \beta_k)}}_{\displaystyle\text{(ii.a)}} + \underbrace{ ({\beta_k - \beta'})^\top \bigl(\hat \nabla_\beta L(\theta_k, \beta_k)- \nabla_\beta L(\theta_k, \beta_k)\bigr)}_{\displaystyle\text{(ii.b)}} \nonumber \\
	&\quad + \underbrace{(1-\gamma)^{-1}\cdot \bigl(\|\phi_{\beta_k}^\top \beta' - \phi_{\beta'}^\top \beta' \|_{2, \nu_k} + \|\phi_{\beta_k}^\top \beta' - \phi_{\beta'}^\top \beta' \|_{2, \nu_\rE}\bigr)}_{\displaystyle\text{(ii.c)}}
	\end{align}
We now upper bound terms (ii.a), (ii.b), and (ii.c) on the right-hand side of \eqref{eq:re0_1}. Following from Assumption \ref{asp:linear_nn} and Lemma \ref{lem:linear_nn}, we have that
\begin{align}
\label{eq:re0_2}
\EE_\init \bigl[ \|\phi_{\beta_k}^\top \beta' - \phi_{\beta'}^\top \beta' \|_{2, \nu_k} + \|\phi_{\beta_k}^\top \beta' - \phi_{\beta'}^\top \beta' \|_{2, \nu_\rE} \bigr] = \cO(B_\beta^{3/2}\cdot m^{-1/4}),
\end{align}
which upper bounds term (ii.c) of \eqref{eq:re0_1}.
For term (ii.b) of \eqref{eq:re0_1}, we have that
\begin{align}
\label{eq:re0_3}
&\EE\biggl[\Big|({\beta_k - \beta'})^\top \bigl(\hat \nabla_\beta L(\theta_k, \beta_k)- \nabla_\beta L(\theta_k, \beta_k)\bigr)\Big|\biggr] \nonumber \\
& \quad \le \EE\Bigl[ \norm[\big]{ \hat \nabla_\beta L(\theta_k, \beta_k)- \nabla_\beta L(\theta_k, \beta_k) }_2 \cdot \norm{\beta' - \beta_k}_2 \Bigr]  \le 2B_\beta \cdot \EE\bigl[\norm{\xi_k'}_2\bigr]  \le 2B_\beta \cdot (\sigma^2/N)^{1/2},
\end{align}
where we write $\xi_k' = \hat \nabla_\beta L(\theta_k, \beta_k)- \nabla_\beta L(\theta_k, \beta_k)$.
Here the first inequality follows from the Cauchy-Schwartz inequality, the second inequality follows from  the fact that $\beta_k, \beta' \in S_{B_\beta}$, and the last inequality follows from Assumption \ref{asp:more}. 
To upper bound term (ii.a) in \eqref{eq:re0_1}, we have that
\begin{align}
\label{eq:re0_4}
& \EE\Bigl[\bigl| {(\beta_{k+1} - \beta_{k})}^\top{\hat \nabla_\beta L(\theta_k, \beta_k)} \bigr|\Bigr]  \\
&\quad \le  \EE\Bigl[ \norm[\big]{ \hat \nabla_\beta L(\theta_k, \beta_k) }_2 \cdot \norm{ \beta _{k+1} - \beta_k }_2 \Bigr]  \le \eta \cdot \EE \Bigl[ \norm[\big]{ \hat \nabla_\beta L(\theta_k, \beta_k) }_2^2 \Bigr] = 2\eta \cdot \Bigl(\norm[\big]{\nabla_\beta L (\theta_k, \beta_k)}^2_2 + \EE\bigl[\norm{\xi_k'}^2_2\bigr]\Bigr), \nonumber
\end{align}
where the first inequality follows from the Cauchy-Schwartz inequality and the second inequality follows from the update of $\beta$ in \eqref{eq:update_beta}.
 Furthermore, we have
\begin{align}
\label{eq:re0_5}
 \norm[\big]{\nabla_\beta L (\theta_k, \beta_k)}^2_2 &= \norm[\Big]{ \EE_{\nu_k}\bigl[\phi_ {\beta_k}(s, a)\bigr] -\EE_{\nu_E}\bigl[\phi_ {\beta_k}(s, a)\bigr] + \lambda\cdot\nabla_\beta \psi(\beta_k)}_2^2\nonumber\\
 & \le \biggl( \EE_{\nu_k}\Bigl[\norm[\big]{\phi_ {\beta_k}(s, a)}_2\Bigr] + \EE_{\nu_k}\Bigl[\norm[\big]{\phi_ {\beta_k}(s, a)}_2\Bigr] + \lambda \cdot \norm[\big]{\nabla_\beta \psi(\beta_k)}_2\biggr)^2 \nonumber\\
 & \le (2+\lambda\cdot L_\psi)^2,
\end{align}
where the first inequality follows from Jensen's inequality and the second inequality follows from the fact that $ \norm{\phi_W(s, a)}_2 \le 1 $ and the Lipschitz continuity of $\psi(\beta)$ in Assumption \ref{asp:regularizer}. By plugging \eqref{eq:re0_5} into \eqref{eq:re0_4}, we have that
\begin{align}
\label{eq:re0_6}
\EE\Bigl[\bigl| {\hat \nabla_\beta L(\theta_k, \beta_k)}^\top({\beta_{k} - \beta_{k+1}}) \bigr|\Bigr] &\le \eta \cdot \Bigl( (2+\lambda \cdot L_\psi)^2 + \EE\bigl[\norm{\xi_k'}^2_2\bigr]\Bigr) \nonumber \\
&\le \eta \cdot \bigl( (2+\lambda \cdot L_\psi)^2 + \sigma^2 / N\bigr),
\end{align}
where the last inequality follows from Assumption \ref{asp:more}. 
Finally, by plugging \eqref{eq:re0_2}, \eqref{eq:re0_3}, and \eqref{eq:re0_6} into \eqref{eq:re0_1}, we have that
\begin{align*}
\EE_\init\bigl[|\Delta_k^{\text{(ii)}}|\bigr] &= \eta \cdot \bigl( (2+\lambda \cdot L_\psi)^2 + \sigma^2\cdot N^{-1}\bigr) + 2 B_\beta \cdot \sigma\cdot N^{-1/2} + \cO(B_\beta^{3/2}\cdot m^{-1/4}).
\end{align*}
Thus, we complete the proof of Lemma \ref{lem:re0}.
\end{proof}

\section{Proofs of Supporting Lemmas}
In what follows, we present the proofs of the lemmas in \S\ref{sec:pf_aux}.
\subsection{Proof of Lemma \ref{lem:bound_pe1}}
\label{sec:pf_lem_bound_pe1}
\begin{proof}	
	It holds for any policies $\pi, \pi'$ that
	\begin{align}
	\label{eq:pe0_1}
	\bigl\langle D(s), \pi^s - (\pi')^s \bigr\rangle_\cA = 0,
	\end{align}
	where $D(s)$ only depends on the state $s$.
	Thus, we have that
	\begin{align*}
	&\bigl\langle \log(\pi _ {k+1} ^ s / \pi_k^s) - \eta \cdot\hat Q_{\omega_k}(s, \cdot), \pi_\rE^s - \pi_{k}^s \bigr\rangle_\cA \nonumber\\
	&\quad  = \bigl\langle \tau _{k+1}\cdot  \phi _{\theta_{k+1}}(s, \cdot)^\top \theta _{k+1} - \tau _k\cdot  \phi _{\theta_k}(s, \cdot) ^\top \theta_k - \eta \cdot \phi _{\omega_k}(s, \cdot)^\top \omega_k, \pi_\rE^s - \pi_k^s \bigr\rangle_\cA \nonumber\\
	&\quad = \bigl\langle \tau _{k+1}\cdot  \phic _{\theta_{k+1}}(s, \cdot)^\top \theta _{k+1} - \tau _k\cdot  \phic _{\theta_k}(s, \cdot) ^\top \theta_k - \eta \cdot \phic _{\omega_k}(s, \cdot)^\top \omega_k, \pi_\rE^s - \pi_k^s \bigr\rangle_\cA,
	\end{align*}
	where the first inequality follows from the parameterization of $\pi_\theta$  and $\hat Q_\omega$ in \eqref{eq:para_pi} and \eqref{eq:para_q}, respectively, and the second equality follows from the definition of the temperature-adjusted score function $\phic_{\theta}(s, a)$ in \eqref{eq:def_phic} of Proposition \ref{prop:npg_form}.
	Here, with a slight abuse of the notation, we define
	\begin{align}
	\label{eq:def-phiw}
	\phic_{\omega_k}(s, a) = \phi_{\omega_k}(s, a) - \EE_{a'\sim \pi_k(\cdot \given s)}\bigl[\phi_{\omega_k}(s, a')\bigr].
	\end{align}
	Then, following from \eqref{eq:pe0_1} and the update $\tau_{k+1} \cdot \theta_{k+1} = \tau_k\cdot\theta_k - \eta \cdot \delta_k$ in \eqref{eq:update_theta}, we have that
	\begin{align}
	\label{eq:pe1_2}
	&\bigl\langle \log(\pi _ {k+1} ^ s / \pi_k^s) - \eta \cdot \hat Q_{\omega_k}(s, \cdot), \pi_\rE^s - \pi_{k}^s \bigr\rangle_\cA \\
	&\quad  = \bigl\langle \tau _{k+1} \cdot \phic_{\theta_{k+1}}(s, \cdot)^\top \theta _{k+1} - \tau _k \cdot \phic _{\theta_k}(s, \cdot) ^\top \theta_k - \eta \cdot \phic _{\omega_k}(s, \cdot)^\top \omega_k, \pi_\rE^s - \pi_k^s \bigr\rangle_\cA \nonumber \\
	&\quad = \underbrace{\tau_{k+1} \cdot \inp[\big]{\phic_{\theta_{k+1}}(s, \cdot)^\top \theta_{k+1} - \phic_{\theta_k}(s, \cdot)^\top\theta_{k+1} }{\pi_\rE^s - \pi_k^s }_\cA}_{\displaystyle\text{(i)}} - \underbrace{\eta \cdot \inp[\big]{\phic_{\theta_k}(s, \cdot)^\top \delta_k + \phic_{\omega_k}(s, \cdot)^\top \omega_k}{\pi_\rE^s - \pi_k^s }_\cA}_{\displaystyle\text{(ii)}}. \nonumber
	\end{align}
	In what follows, we upper bound terms (i) and (ii) on the right-hand side of \eqref{eq:pe1_2}.
	
	\vskip4pt
	
	\noindent{\bf Upper bound of term (i) in \eqref{eq:pe1_2}.} Following from \eqref{eq:def_phic} of Proposition \ref{prop:npg_form} and \eqref{eq:pe0_1} we have that
	\begin{align}
	\label{eq:pe1_3}
	&\Bigl|\inp[\big]{\phic_{\theta_{k+1}}(s, \cdot)^\top \theta_{k+1} - \phic_{\theta_k}(s, \cdot)^\top\theta_{k+1} }{\pi_\rE^s - \pi_k^s }_\cA \Bigr| \nonumber \\
	&\quad = \Bigl|\inp[\big]{\phi_{\theta_{k+1}}(s, \cdot)^\top \theta_{k+1} - \phi_{\theta_k}(s, \cdot)^\top\theta_{k+1} }{\pi_\rE^s - \pi_k^s }_\cA \Bigr|\nonumber\\
	&\quad \le \norm[\big]{ \phi_{\theta_{k+1}}(s, \cdot)^\top \theta_ {k+1} - \phi_ {\theta_k}(s, \cdot)^\top \theta_{k+1} }_{1, \pi_\rE^s} + \norm[\big]{ \phi_{\theta_{k+1}}(s, \cdot )^\top \theta_ {k+1} - \phi_ {\theta_k}(s, \cdot)^\top \theta_{k+1} }_{1, \pi_k^s} ,
	\end{align}
	where the inequality follows from the triangle inequality.
	Following from Assumption \ref{asp:linear_nn} and Lemma \ref{lem:linear_nn}, we have that
	\begin{align}
	\label{eq:pe1_4}
	&\EE_{\init, d_\rE}\Bigl[\norm[\big]{ \phi_{\theta_{k+1}}(s, \cdot )^\top \theta_ {k+1} - \phi_ {\theta_k}(s, \cdot)^\top \theta_{k+1} }_{1, \pi_\rE^s}\Bigr] = \cO(B_\theta^{3/2}\cdot m^{-1/4}).
	\end{align}
	Furthermore, following from Assumption \ref{asp:visitation}, Lemma \ref{lem:linear_nn}, and the Cauchy-Schwartz inequality, we have that
	\begin{align}
	&\EE_{\init, d_\rE}\Bigl[\norm[\big]{ \phi_{\theta_{k+1}}(s, \cdot)^\top \theta_ {k+1} - \phi_ {\theta_k}(s, \cdot)^\top \theta_{k+1} }_{1, \pi_k^s}\Bigr] \nonumber\\
	&\quad = \EE_{\init, d_k} \biggl[\norm[\big]{ \phi_{\theta_{k+1}}(s, \cdot )^\top \theta_ {k+1} - \phi_ {\theta_k}(s, \cdot)^\top \theta_{k+1} }_{1, \pi_k^s} \cdot \frac{\rd d_\rE}{\rd d_k}\biggr]  \nonumber\\
	&\quad \le \norm[\big]{ \phi_{\theta_{k+1}}(s, a)^\top \theta_ {k+1} - \phi_ {\theta_k}(s, a)^\top \theta_{k+1} }_{2, \nu_k} \cdot \norm[\bigg]{\frac{\rd d_\rE}{\rd d_k}}_{2, d_k}\nonumber\\
	& \quad = \cO(B_\theta^{3/2}\cdot m^{-1/4}). \label{eq:pe1_4.5}
	\end{align}
	Here the expectation $\EE_{\init, d_{k}}$ is taken with respect to the random initialization in \eqref{eq:init} and $s \sim d_{k}$.
	Thus, plugging \eqref{eq:pe1_4} and \eqref{eq:pe1_4.5} into \eqref{eq:pe1_3}, we obtain for term (i) of \eqref{eq:pe1_2} that
	\begin{align}
	\label{pe1_5}
	\EE_{\init, d_\rE} \biggl[\Bigl| \inp[\big]{\phic_{\theta_{k+1}}(s, \cdot)^\top \theta_{k+1} - \phic_{\theta_k}(s, \cdot)^\top\theta_{k+1} }{\pi_\rE^s - \pi_k^s }_\cA \Bigr|\biggr] = \cO( B_\theta^{3/2}\cdot m^{-1/4} ).
	\end{align}
	
\vskip4pt
	
\noindent{\bf Upper bound of term (ii) in \eqref{eq:pe1_2}.} Following from the Cauchy-Schwartz inequality, we have that
	\begin{align}
	\label{eq:pe1_6}
	\EE _{d_\rE} \biggl[ \Bigl| \inp[\big]{\phic_{\theta_k}(s, \cdot)^\top \delta_k + \phic_{\omega_k}(s, \cdot)^\top \omega_k}{\pi_\rE^s }_\cA\Bigr| \biggr] & \le \int_{\cS\times\cA} \big|\phic_{\theta_k}(s, a)^\top \delta_k + \phic_{\omega_k}(s, a)^\top \omega_k \big| \rd \nu_\rE (s, a) \nonumber \\
	&\le \norm[\bigg]{\frac{\rd \nu_\rE}{\rd \nu_k}}_{2, \nu_k} \cdot \norm[\big]{ \phic_{\theta_k}(s, a)^\top \delta_k + \phic_{\omega_k}(s, a)^\top \omega_k }_{2, \nu_k}.
	\end{align}
	 Similarly, we have that
	\begin{align}
	\label{eq:pe1_7}
	\EE _{d_\rE} \biggl[ \Bigl| \inp[\big]{\phic_{\theta_k}(s, \cdot)^\top \delta_k + \phic_{\omega_k}(s, \cdot)^\top \omega_k}{\pi_k^s }_\cA \Bigr| \biggr] &\le \int_{\cS\times\cA} \Bigl| \inp[\big]{\phic_{\theta_k}(s, \cdot)^\top \delta_k + \phic_{\omega_k}(s, \cdot)^\top \omega_k}{\pi_k^s }_\cA\Bigr| \rd \pi^s_k (a) \rd d_\rE(s) \nonumber \\
	& = \int_{\cS\times\cA} \Bigl| \inp[\big]{\phic_{\theta_k}(s, \cdot)^\top \delta_k + \phic_{\omega_k}(s, \cdot)^\top \omega_k}{\pi_k^s }_\cA\Bigr| \cdot \frac{\rd d_\rE}{\rd d_k} (s) \rd \nu_k (s, a) \nonumber \\
	& \le \norm[\bigg]{\frac{\rd d_\rE}{\rd d_k}}_{2, d_k} \cdot \norm[\big]{ \phic_{\theta_k}(s, a)^\top \delta_k + \phic_{\omega_k}(s, a)^\top \omega_k }_{2, \nu_k},
	\end{align}
	where the last inequality follows from the Cauchy-Schwartz inequality.
	Combining \eqref{eq:pe1_6} and \eqref{eq:pe1_7}, we obtain for term (ii) of \eqref{eq:pe1_2} that
	\begin{align}
	\label{eq:pe1_8}
	&\EE _{d_\rE} \biggl[ \Bigl| \inp[\big]{\phic_{\theta_k}(s, \cdot)^\top \delta_k + \phic_{\omega_k}(s, \cdot)^\top \omega_k}{\pi_\rE^s - \pi_k^s }_\cA\Bigr| \biggr]\nonumber\\
	 &\quad \le \Biggl(\norm[\bigg]{\frac{\rd \nu_\rE}{\rd \nu_k}}_{2, \nu_k} + \norm[\bigg]{\frac{\rd d_\rE}{\rd d_k}}_{2, d_k}\Biggr) \cdot \norm[\big]{ \phic_{\theta_k}(s, a)^\top \delta_k + \phic_{\omega_k}(s, a)^\top \omega_k }_{2, \nu_k}\nonumber\\
	&\quad \le C_h \cdot \norm[\big]{ \phic_{\theta_k}(s, a)^\top \delta_k + \phic_{\omega_k}(s, a)^\top \omega_k }_{2, \nu_k},
	\end{align}
	where the last inequality follows from Assumption \ref{asp:measures}.
	To upper bound term (ii) of \eqref{eq:pe1_2}, it suffices to upper bound the right-hand side of \eqref{eq:pe1_8}. For notational simplicity, we write $\phic_{\theta_k} = \phic_{\theta_k}(s,a)$, $\phic_{\omega_k} = \phic_{\omega_k}(s,a)$, and $\phi_{\omega_{k}} = \phi_{\omega_{k}}(s, a)$.  By the triangle inequality, we have that
	\begin{align}
	\label{eq:pe1_9}
	&\norm{ \delta_k^\top\phic_{\theta_k} + \omega_k^\top\phic_{\omega_k} }_{2, \nu_k} = \Bigl(\EE _{ \nu_k} \bigl[ ( \delta_k^\top\phic_{\theta_k} + \omega_k^\top\phic_{\omega_k} ) \cdot ( \delta_k^\top\phic_{\theta_k} + \omega_k^\top\phic_{\omega_k} )\bigr]\Bigr)^{1/2} \nonumber \\
	&\quad \le \underbrace{\Bigl| (\delta_k - \omega_k)^\top \EE_ {\nu_k} \bigl[\phic _{\theta_k} (\delta_k^\top \phic _{\theta_k} + \omega_k^\top \phic _{\omega_k}  ) \bigl] \Bigr|^{1/2}}_{\displaystyle\text{(ii.a)}} + \underbrace{\Bigl|\EE _{\nu_k}\bigl[\omega_k^\top( \phic_ {\theta_k} - \phic_{\omega_k})\cdot (\delta_k^\top \phic _{\theta_k} + \omega_k^\top \phic_{\omega_k})\bigr]\Bigr|^{1/2}}_{\displaystyle\text{(ii.b)}}.
	\end{align}
	We now upper bound the two terms (ii.a) and (ii.b) on the right-hand side of \eqref{eq:pe1_9}.
	For term (ii.a) of \eqref{eq:pe1_9}, following from \eqref{eq:fisher_form} of Proposition~\ref{prop:npg_form}, we have that
	\begin{align}
	\label{eq:pe1_10}
	\cI(\theta_k) &= \tau_k^2 \cdot \EE_{\nu_k} [\phic_{\theta_k} \phic_{\theta_k}^\top ].
	\end{align}
	Recall that the expectation $\EE_k$ is taken with respect to the $k$-th batch. Following from the definition of $\hat \nabla_\theta L(\theta_k, \beta_k)$ in \eqref{eq:est_pg}, we have that
	\begin{align}
	\label{eq:pe1-10}
	\EE_{k}\bigl[\hat \nabla_\theta L(\theta_k, \beta_k)\bigr] &= -\tau_k \cdot \EE _{\nu_k} [  \omega_k^\top \phi_{\omega_k} \cdot \phic_{\theta_k} ]\nonumber \\
	&=-\tau_k \cdot \EE _{\nu_k} [  \omega_k^\top \phic_{\omega_k} \cdot \phic_{\theta_k} ] - \tau_k \cdot w_k^\top \EE_{a' \sim \pi_k^s}\bigl[\phi_{\omega_k}(s, a')\bigr] \cdot  \EE_{ \nu_k}[  \phic_{\theta_k} ]\nonumber\\
	 &= -\tau_k \cdot \EE _{\nu_k} [  \omega_k^\top \phic_{\omega_k} \cdot \phic_{\theta_k} ],
	\end{align}
	where the first equality follows from the fact that $\hat Q_{\omega_k}(s, a) = \omega_k^\top\phi_{\omega_k}(s, a)$, while the second and third equalities follow from the definition of $\phic_{\omega_k}(s, a)$ in \eqref{eq:def-phiw}.
	Following from \eqref{eq:pe1_10} and \eqref{eq:pe1-10}, we have that
	\begin{align}
	\label{eq:pe1_11}
	\Bigl| (\delta_k - \omega_k)^\top \EE_ {\nu_k} \bigl[\phic _{\theta_k} (\delta_k^\top \phic _{\theta_k} + \omega_k^\top \phic _{\omega_k}  ) \bigl] \Bigr| &= \tau_k^{-2} \cdot \biggl| (\delta_k - \omega_k)^\top \Bigl(\cI(\theta_k)\delta_k - \tau_k \cdot \EE_k\bigl[\hat \nabla_\theta L(\theta, \beta)\bigr] \Bigr) \biggr| \nonumber \\
	& \le 2B_\theta \cdot \tau_k^{-2} \cdot \norm[\Big]{ \cI(\theta_k) \delta_k - \tau_k \cdot \EE_{k}\bigl[\hat \nabla_\theta L(\theta, \beta)\bigr] }_2.
	\end{align}
	Here the last inequality follows from the Cauchy-Schwartz inequality and the fact that $\norm{\omega_k - \delta_k}_2 \le 2 B_\theta$ as $\omega_{k}, \delta_{k} \in S_{B_{\theta}}$. For notational simplicity, we define the following error terms,
	\begin{align}
	\label{eq:pe1_err1}
	\xi_k^{(1)} &= \hat \cI(\theta_k) \delta_k - \cI(\theta_k) \delta_k, \\
	\label{eq:pe1_err2}
	\xi_k^{(2)} &= \hat \nabla_\theta L(\theta_k, \beta_k) - \EE_k\bigl[\hat \nabla_\theta L(\theta_k, \beta_k)\bigr] .
	\end{align}
	Then, we have for term (ii.a) in \eqref{eq:pe1_9} that
	\begin{align}
	\label{eq:pe1_12}
	&\EE_{\init}\biggl[ \Bigl| (\delta_k - \omega_k)^\top \EE_ { \nu_k} \bigl[\phic _{\theta_k} (\delta_k^\top \phic _{\theta_k} + \omega_k^\top \phic _{\omega_k}  ) \bigl] \Bigr|^{1/2} \biggr]  \\
	&\quad \le (2B_\theta)^{1/2}\cdot \tau_k^{-1} \cdot \EE_{\init} \biggl[ \norm[\Big]{ \cI(\theta_k)\delta_k - \tau_k \cdot \EE_{k}\bigl[\hat \nabla_\theta L(\theta, \beta)\bigr] }_2^{1/2} \biggr] \nonumber \\
	&\quad \le (2B_\theta)^{1/2}\cdot \tau_k^{-1} \cdot \EE_\init \biggl[ \Bigl(\norm[\big]{ \hat \cI(\theta_k)\delta_k - \tau_k \cdot \hat \nabla_\theta L(\theta, \beta)}_2 + \norm{\xi_k^{(1)}}_2 + \tau_k\cdot \norm{\xi_k^{(2)}}_2 \Bigr)^{1/2} \biggr] \nonumber \\
	&\quad \le (2B_\theta)^{1/2}\cdot \tau_k^{-1} \cdot \biggl( \EE_{\init}\Bigl[ \norm[\big]{ \hat \cI(\theta_k)\delta_k - \tau_k \cdot \hat \nabla_\theta L(\theta, \beta)}_2 \Bigr] + \EE_{\init}\bigl[\norm{\xi_k^{(1)}}_2 + \tau_k\cdot \norm{\xi_k^{(2)}}_2 \bigr] \biggr)^{1/2}, \nonumber
	\end{align}
	where the first inequality follows from \eqref{eq:pe1_11}, the second inequality follows from the triangle inequality, and the last inequality follows from Jensen's inequality. Similarly to \eqref{eq:pe1_err1}, we define the following error term,
	\begin{align}
	\label{eq:pe1_err3}
	\xi_k^{(3)} = \hat \cI(\theta_k)  \omega_k - \cI(\theta_k)  \omega_k.
	\end{align}
	We now upper bound the right-hand side of \eqref{eq:pe1_12}. Recall the definition of $\delta_k$ in \eqref{eq:def_delta}. We have that
	\begin{align}
	\label{eq:pe1_13}
	\norm[\big]{ \hat \cI(\theta_k)  \delta_k - \tau_k \cdot \hat \nabla_\theta L(\theta_k, \beta_k)}_2 &\le \norm[\big]{ \hat \cI(\theta_k) \omega_k - \tau_k \cdot \hat \nabla_\theta L(\theta_k, \beta_k)}_2  \\
	 &\le \norm[\Big]{  \cI(\theta_k) \omega_k - \tau_k \cdot \EE_{k}\bigl[\hat \nabla_\theta L(\theta_k, \beta_k)\bigr]}_2 + \norm{\xi_k^{(1)}}_2 + \tau_k\cdot\norm{\xi_k^{(2)}}_2. \nonumber
	\end{align}
	Following from \eqref{eq:pe1_10}, \eqref{eq:pe1-10}, and Jensen's inequality, we have that
	\begin{align*}
	\norm[\Big]{  \cI(\theta_k) \omega_k - \tau_k \cdot \EE_k \bigl[\hat \nabla_\theta L(\theta_k, \beta_k)\bigr]}_2 &= \tau_k^2 \cdot \norm[\Big]{ \EE_{ \nu_k}\bigl[\phic_{\theta_k} \cdot \omega_k^\top (\phic_{\theta_k} - \phic _{\omega_k})\bigr] }_2 \nonumber \\
	& \le \tau_k^2 \cdot \EE _{\nu_k }\Bigl[\norm{\phic_{\theta_k}}_2 \cdot \bigl|\omega_k^\top (\phic_{\theta_k} - \phic _{\omega_k})\bigr| \Bigr] \nonumber \\
	& \le 2\tau_k^2\cdot  \norm[\big]{\omega_k^\top (\phic_{\theta_k} - \phic _{\omega_k})}_{1, \nu_k},
	\end{align*}
	where the last inequality follows from the fact that $\norm{\phic_\theta}_2\le 2$ for any $(s, a)\in \cS\times \cA$. Following from Assumption \ref{asp:linear_nn} and Lemma \ref{lem:linear_nn}, we have that
	\begin{align}
	\label{eq:pe1_15}
	\EE_{\init}\biggl[ \norm[\Big]{  \cI(\theta_k) \omega_k - \tau_k \cdot \EE_k\bigl[\hat \nabla_\theta L(\theta_k, \beta_k)\bigr]}_2 \biggr] &\le \EE_{\init}\Big[2\tau_k^2\cdot  \norm[\big]{\omega_k^\top (\phic_{\theta_k} - \phic _{\omega_k})}_{1, \nu_k}\Big] \nonumber \\
	&= \cO(\tau_k^2 \cdot B_\theta^{3/2}\cdot m^{-1/4}).
	\end{align}
	Plugging  \eqref{eq:pe1_13} and \eqref{eq:pe1_15} into \eqref{eq:pe1_12}, we have that
	\begin{align}
	\label{eq:pe1_16}
	&\EE_{\init}\biggl[ \Bigl| (\delta_k - \omega_k)^\top \EE_ { \nu_k} \bigl[\phic _{\theta_k} (\delta_k^\top \phic _{\theta_k} + \omega_k^\top \phic _{\omega_k}  ) \bigl] \Bigr|^{1/2} \biggr] \nonumber \\
	&\quad = (2B_\theta)^{1/2}\cdot \tau_k^{-1}\cdot \Bigl(\cO(2 \tau_k^2\cdot B_\theta^{3/2}\cdot m^{-1/4}) + \EE_{\init}\bigl[\norm{\xi_k^{(1)}}_2 + 2\tau_k \cdot \norm{\xi_k^{(2)}}_2 +\norm{\xi_k^{(3)}}_2  \bigr] \Bigr)^{1/2} \nonumber\\
	&\quad = \cO(\tau_k\cdot B_\theta^{5/4}\cdot m^{-1/4}) + (2B_\theta)^{1/2}\cdot \tau_k^{-1}\cdot \Bigl(\EE_{\init}\bigl[\norm{\xi_k^{(1)}}_2 + 2\tau_k \cdot \norm{\xi_k^{(2)}}_2 +\norm{\xi_k^{(3)}}_2 \bigr]\Bigr)^{1/2} \nonumber\\
	&\quad \le \cO(\tau_k\cdot B_\theta^{5/4}\cdot m^{-1/4}) + 2\sqrt{2} B_\theta^{1/2}\cdot (\sigma^2 / N)^{1/4},
	\end{align}
	where the last inequality follows from Assumption \ref{asp:more}.
	We now upper bound term (ii.a) of \eqref{eq:pe1_9}. We have that
	\begin{align}
	\label{eq:pe1_17}
	&\EE_{\init}\biggl[\Bigl|\EE _{\nu_k}\bigl[\omega_k^\top( \phic_ {\theta_k} - \phic_{\omega_k})\cdot (\delta_k^\top \phic _{\theta_k} + \omega_k^\top \phic_{\omega_k})\bigr]\Bigr|^{1/2}\biggr] \nonumber \\
	&\quad\le \EE_{\init, \nu_k}\Bigl[\bigl|\omega_k^\top( \phic_ {\theta_k} - \phic_{\omega_k})\cdot (\delta_k^\top \phic _{\theta_k} +\omega_k^\top \phic_{\omega_k})\bigr|\Bigr]^{1/2} \nonumber \\
	&\quad \le \EE_{\init}\Bigl[ \norm[\big]{\omega^\top_k (\phic _{\theta_k} - \phic_{\omega_k})}_{2, \nu_k}\Bigr]^{1/2}\cdot \EE_{\init}\bigl[\norm{ \delta_k^\top \phic_{\theta_k} + \omega_k^\top \phic_{\omega_k} }_{2, \nu_k} \bigr]^{1/2},
	\end{align}
	where the expectation $\EE_{\init, \nu_k}$ is taken with respect to the random initialization in \eqref{eq:init} and $(s, a) \sim \nu_k$, the first inequality follows from Jensen's inequality, and the second inequality follows from the Cauchy-Schwartz inequality. Following from Assumption \ref{asp:linear_nn} and Lemma \ref{lem:linear_nn}, we have that
	\begin{align}
	\label{eq:pe1_18}
	\EE_{\init}\Bigl[ \norm[\big]{\omega^\top_k (\phic _{\theta_k} - \phic_{\omega_k})}_{2, \nu_k}\Bigr] = \cO(B_\theta^{3/2}\cdot m^{-1/4}).
	\end{align}
	To upper bound the right-hand side of \eqref{eq:pe1_17}, it remains to upper bound the term $\EE_{\init}[\norm{ \delta_k^\top \phic_{\theta_k} + \omega_k^\top \phic_{\omega_k} }_{2, \nu_k} ] $. We have that
	\begin{align}
	\label{eq:pe1_20}
	 \EE_{\init}\bigl[\norm{ \delta_k^\top \phic_{\theta_k} + \omega_k^\top \phic_{\omega_k} }_{2, \nu_k} \bigr]  \le \EE_\init \bigl[ \norm{\delta_k}_2 \cdot \norm{\phic_{\theta_k}}_2 \bigr] + \EE_\init \bigl[ \norm{\omega_k}_2 \cdot \norm{\phic_{\omega_k}}_2 \bigr]  = \cO(B_\theta),
	\end{align}
	where the inequality follows from the Cauchy-Schwartz inequality and the equality follows from the facts that $\norm{\phic_{\theta_k}}_2 \le 2$, $\norm{\phic_{\omega_k}}_2 \le 2$, and $\delta_k, \omega_k \in S_{B_\theta}$.
	Plugging  \eqref{eq:pe1_18} and \eqref{eq:pe1_20} into \eqref{eq:pe1_17}, we have that
	\begin{align}
	\label{eq:pe1_21}
	\EE_{\init}\biggl[\Bigl|\EE _{\nu_k}\bigl[\omega_k^\top( \phic_ {\theta_k} - \phic_{\omega_k}) \cdot (\delta_k^\top \phic _{\theta_k} + \omega_k^\top \phic_{\omega_k})\bigr]\Bigr|^{1/2}\biggr] = \cO(B_\theta^{5/4} \cdot m^{-1/8}),
	\end{align}
	which upper bounds term (ii.b) of \eqref{eq:pe1_9}.
	Plugging  \eqref{eq:pe1_16} and \eqref{eq:pe1_21} into \eqref{eq:pe1_9}, following from \eqref{eq:pe1_8},  we have that
	\begin{align}
	\label{eq:pe1_22}
	&\EE _{\init,d_\rE} \biggl[ \Bigl| \inp[\big]{\phic_{\theta_k}(s, \cdot)^\top \delta_k - \phic_{\omega_k}(s, \cdot)^\top \omega_k}{\pi_\rE^s - \pi_k^s }_\cA\Bigr| \biggr] \nonumber \\
	&\quad = \eta \cdot C_h \cdot \bigl(\cO(B_\theta^{5/4}\cdot m^{-1/8} )+ 2\sqrt{2} B_\theta^{1/2}\cdot (\sigma^2/N)^{1/4}\bigr),
	\end{align}
	which upper bounds term (ii) of \eqref{eq:pe1_2}.

    \vskip4pt
	
	Finally, plugging \eqref{pe1_5} and \eqref{eq:pe1_22} into \eqref{eq:pe1_2}, we have that
	\begin{align*}
	&\EE_{\init,  d_\rE}\biggl[\Bigl| \inp[\big]{\log (\pi_ {k+1}^s / \pi _{k}^s) - \eta \cdot \hat Q_{\omega_k}(s, \cdot) }{ \pi_\rE^s - \pi_k^s }_\cA \Bigr|\biggr] \nonumber \\
	&\quad = \eta\cdot C_h \cdot 2\sqrt{2} B_\theta^{1/2}\cdot (\sigma^2/N)^{1/4} + \cO(\tau_{k+1} \cdot B_\theta^{3/2} \cdot m^{-1/4} + \eta \cdot B_\theta^{5/4} \cdot m^{-1/8} ),
	\end{align*}
	where $\xi_k^{(1)}$, $\xi_k^{(2)}$, and $\xi^{(3)}_k$ are defined in \eqref{eq:pe1_err1}, \eqref{eq:pe1_err2}, and \eqref{eq:pe1_err3}, respectively, and $C_h$ is defined in Assumption \ref{asp:visitation}. Thus, we complete the proof of Lemma \ref{lem:bound_pe1}.
\end{proof}	

\subsection{Proof of Lemma \ref{lem:bound_pe2}}
\label{sec:pf_lem_bound_pe2}
\begin{proof}
	For notational simplicity, for any $(s, a) \in \cS\times \cA$, we denote by $\Delta_{Q, k}(s, a) = \hat  Q_{\omega_k}(s, a) - Q^{\pi_k}_{r_k}(s, a)$ the error of estimating $ Q^{\pi_k}_{r_k}(s, a) $ by $ \hat  Q_{\omega_k}(s, a) $. Then, we have that
	\begin{align*}
	&\EE_{d_\rE} \biggl[ \Bigl| \inp[\big]{\Delta_{Q, k}(s, \cdot) }{\pi_\rE^s - \pi_k^s}_\cA \Bigr| \biggr] \nonumber\\
	 &\quad  \le \int_{\cS\times\cA} \bigl|\Delta_{Q, k}(s, a) \bigr| \rd \pi_\rE^s (a) \rd d_\rE(s) + \int_{\cS\times\cA} \bigl|\Delta_{Q, k}(s, a) \bigr| \rd \pi_k^s (a) \rd d_\rE(s) \nonumber \\
	&\quad  = \int_{\cS\times\cA} \bigl|\Delta_{Q, k}(s, a) \bigr| \cdot \frac{\rd \nu_\rE}{\rd \rho_k}(s, a) \rd \rho_k(s, a) + \int_{\cS\times\cA} \bigl|\Delta_{Q, k}(s, a) \bigr| \cdot \frac{\rd d_\rE}{\rd  \varrho_k}(s) \rd \rho_k(s, a) \nonumber \\
	&\quad \le C_h \cdot \norm{\Delta_{Q, k}}_{2, \rho_k},
	\end{align*}
	where the last inequality follows from the Cauchy-Schwartz inequality and Assumption \ref{asp:visitation}. Thus, we complete the proof of Lemma \ref{lem:bound_pe2}.
\end{proof}

\subsection{Proof of Lemma \ref{lem:bound_pe3}}
\label{sec:pf_lem_bound_pe3}
\begin{proof}
	Following from \eqref{eq:pe0_1} and the parameterization of $\pi_\theta$ in \eqref{eq:para_pi}, we have that
	\begin{align}
	\label{eq:pe3_1}
	&\inp[\big]{\log (\pi_{k+1}^s / \pi_k^s)}{\pi_k^s - \pi_{k+1}^s}_\cA\\ &\quad= \inp[\big]{\tau_{k+1} \cdot \theta_{k+1}^\top \phi_ {\theta_ {k+1}}(s, \cdot) -\tau_k \cdot \theta_k^\top \phi _{\theta_k}(s, \cdot) }{\pi_k^s - \pi_ {k+1}^s}_\cA \nonumber \\
	& \quad= \inp[\big]{(\tau _{k+1}\cdot\theta_{k+1} - \tau_k\cdot \theta_k)^\top \phi_{\theta_k}(s, \cdot) }{\pi_k^s - \pi_ {k+1}^s }_\cA + \tau_{k+1}\cdot \inp[\Big]{\theta_ {k+1}^\top \bigl(\phi_ {\theta_{k+1}}(s, \cdot) - \phi_ {\theta_k}(s, \cdot)\bigr)}{\pi_k^s - \pi_ {k+1}^s}_\cA. \nonumber
	\end{align}
	We now upper bound the two terms on the right-hand side of \eqref{eq:pe3_1}. 
	For the first term on the right-hand side of \eqref{eq:pe3_1}, recall that we define $\delta_k$ in \eqref{eq:def_delta}. Thus, we have that
	\begin{align}
	\label{eq:pe3_2}
	\bigl|(\tau _{k+1}\cdot\theta_{k+1} - \tau_k\cdot\theta_k)^\top \phi_{\theta_k}(s, a) \bigr| = \eta \cdot \bigl| \delta_k^\top \phi_{\theta_k}(s, a) \bigr|.
	\end{align}
Following from \eqref{eq:pe3_2} and H\"older's inequality, we have for any $s \in \cS$ that
\begin{align*}
&\Bigl|\inp[\big]{(\tau _{k+1}\cdot\theta_{k+1} - \tau_k\cdot\theta_k)^\top \phi_{\theta_k}(s, \cdot) }{\pi_k^s - \pi_ {k+1}^s }_\cA \Bigr|\nonumber \\
&\quad \le \norm[\big]{ \delta_k^\top \phi_{\theta_k}(s, \cdot) }_{\infty} \cdot \norm{\pi_k^s - \pi_ {k+1}^s}_{1}.
\end{align*}
Then, following from Pinsker's inequality, we have that
\begin{align}
\label{eq:pe3-5}
&\Bigl|\inp[\big]{(\tau _{k+1}\cdot\theta_{k+1} - \tau_k\cdot\theta_k)^\top \phi_{\theta_k}(s, \cdot) }{\pi_k^s - \pi_ {k+1}^s }_\cA \Bigr| - \kl(\pi_ {k+1}^s \,\|\, \pi_k^s) \nonumber \\
&\quad \le \eta \cdot \norm[\big]{ \delta_k^\top \phi_{\theta_k}(s, \cdot) }_{\infty} \cdot \norm{\pi_k^s - \pi_ {k+1}^s}_{1} - 1/2\cdot \norm{\pi_k^s - \pi_ {k+1}^s}_{1}^2 \nonumber \\
&\quad \le 1/2\cdot \eta^2 \cdot \norm[\big]{ \delta_k^\top \phi_{\theta_k}(s, \cdot) }_{\infty}^2.
\end{align}
By the update of $\theta_k$ in \eqref{eq:update_theta} and the definition of $\delta_k$ in \eqref{eq:def_delta}, we have that $\theta_k, \delta_k \in S_{B_\theta}$. Thus, by Lemma \ref{lem:bound_nn}, we have that
\begin{align}
\label{eq:pe3-51}
 \EE_{\init}\Bigl[\norm[\big]{\delta_k^\top \phi_{\theta_k}(s,\cdot)}_{\infty}^2\Bigr] \le 2M_0 + 18B_\theta^2.
\end{align}
Plugging \eqref{eq:pe3-51} into \eqref{eq:pe3-5}, we have that
\begin{align}
\label{eq:pe3_5}
\Bigl|\inp[\big]{(\tau _{k+1}\cdot\theta_{k+1} - \tau_k\cdot\theta_k)^\top \phi_{\theta_k}(s, \cdot) }{\pi_k^s - \pi_ {k+1}^s }_\cA \Bigr| - \kl(\pi_ {k+1}^s \,\|\, \pi_k^s) \le \eta^2 \cdot (M_0^2 + 9 B_\theta^2).
\end{align}
For the second term on the right-hand side of \eqref{eq:pe3_1}, following from Assumption \ref{asp:linear_nn} and Lemma \ref{lem:linear_nn}, we have
\begin{align}
\label{eq:pe3_6}
&\EE_{\init, d_\rE}\Biggl[\biggl| \inp[\Big]{\theta_ {k+1}^\top \bigl(\phi_ {\theta_{k+1}}(s, \cdot) - \phi_ {\theta_k}(s, \cdot)\bigr)}{\pi_k^s - \pi_ {k+1}^s}_\cA \biggr|\Biggr]\nonumber \\
&\quad \le \EE_{\init, d_\rE}\biggl[ \norm[\Big]{\theta_ {k+1}^\top \bigl(\phi_ {\theta_{k+1}}(s, \cdot) - \phi_ {\theta_k}(s, \cdot)\bigr)}_{1, \pi_k^s } \biggr] + \EE_{\init, d_\rE}\biggl[ \norm[\Big]{\theta_ {k+1}^\top \bigl(\phi_ {\theta_{k+1}}(s, \cdot) - \phi_ {\theta_k}(s, \cdot) \bigr)}_{1, \pi_{k+1}^s } \biggr] \nonumber \\
&\quad = \cO(B_\theta^{3/2} \cdot m^{-1/4}).
\end{align}
Finally, plugging  \eqref{eq:pe3_5} and \eqref{eq:pe3_6} into \eqref{eq:pe3_1}, we have that
\begin{align*}
&\EE_{\init, d_\rE}\biggl[\Bigl|\inp[\big]{\log (\pi_{k+1}^s / \pi_k^s)}{\pi_k^s - \pi_{k+1}^s}_\cA \Bigr| -\kl(\pi_ {k+1}^s \,\|\, \pi_k^s) \biggr]  \nonumber\\ 
&\quad = \eta^2 \cdot (M_0^2 +9B_\theta^2) + \cO(\tau_{k+1}\cdot B_\theta^{3/2}\cdot m^{-1/4}),
\end{align*}
which completes the proof of Lemma \ref{lem:bound_pe3}.
\end{proof}
\end{document}